\documentclass{article}

\usepackage{arxiv}

\usepackage[utf8]{inputenc} 
\usepackage[T1]{fontenc}    
\usepackage{hyperref}       
\usepackage{url}            
\usepackage{booktabs}       
\usepackage{amsfonts}       
\usepackage{nicefrac}       
\usepackage{microtype}      
\usepackage{lipsum}		
\usepackage{graphicx}
\usepackage{natbib}
\usepackage{doi}

\setcitestyle{authoryear,round,citesep={;},aysep={,},yysep={;}}
\usepackage{listings}
\usepackage{amsmath}
\usepackage{amsthm}
\usepackage{thm-restate}
\usepackage{xcolor}
\usepackage{verbatim}
\usepackage{comment}
\usepackage{enumitem} 
\usepackage{booktabs}
\usepackage{multirow}
\usepackage{tabularx}
\usepackage{enumitem}
\usepackage{bookmark}

\newcommand{\R}{\mathbb{R}}
\newcommand{\evalat}[2]{\left.#1\right|_{#2}}
\newenvironment{psmallmatrix}
  {\left(\begin{smallmatrix}}
  {\end{smallmatrix}\right)}

\newtheorem{theorem}{Theorem}
\newtheorem{proposition}[theorem]{Proposition}
\newtheorem{lemma}[theorem]{Lemma}

\newtheorem{definition}[theorem]{Definition}

\title{On Parameter Symmetries and Conservation Laws in Gradient Flow}

\date{} 					

\author{Khang Nguyen
		\\
	UCLA\\
	\texttt{khang@math.ucla.edu} \\
	\And
	Guido Mont\'ufar\\
	UCLA \& MPI MiS\\
	\texttt{montufar@math.ucla.edu} \\
}

\renewcommand{\headeright}{}
\renewcommand{\undertitle}{}

\hypersetup{
pdftitle={On Parameter Symmetries and Conservation Laws in Gradient Flow},
pdfsubject={optimization, machine learning, deep learning},
pdfauthor={Khang Nguyen, Guido Mont\'ufar},
pdfkeywords={symmetry, parameter symmetry, conservation law, gradient flow, identifiability, differential geometry, neural network},
}

\begin{document}
\maketitle
\vspace{-0.1in}

\begin{abstract}
	Parameter space symmetries and conservation laws play an important role in understanding the loss landscapes and implicit biases of neural networks. 
	Inspired by Noether's theorem in physics, prior works have sought to derive conservation laws under gradient flow from parameter symmetries, but the scope and limitations of this connection remain unclear. 
	We develop a unified geometric framework that clarifies the precise relationship between the two notions, 
	including the conditions under which symmetries correspond to conservation laws. 
	We introduce a notion of compositional identifiability and use it to establish a general inheritance principle for complete characterizations of symmetries and conservation laws in multilayer networks. 
	We apply the framework to multi-head and grouped-query attention, polynomial neural networks, and square deep linear networks. 
\end{abstract}

\section{Introduction}

Modern deep learning models give rise to highly complex, non-convex training objectives.
Optimizing such models requires careful consideration of both the model architecture and the optimization procedure to obtain successful training \citep{glorot10understanding}, fast and stable convergence \citep{santurkar2018how}, and to induce favourable implicit biases for generalization \citep{wang21theimplicitbias}. 
Understanding these optimization characteristics is thus a fundamental problem in deep learning, and has been the subject of extensive and sustained research \citep[see, e.g.,][]{sun2019optimizationdeeplearningtheory, vardi2023implicitbias}. 

Parameter space symmetries offer one avenue to study these problems, as they characterize the redundancy in the loss landscape 
\citep{pmlr-v139-simsek21a,zhao_symmetry_2025}. 
Alternatively, conservation laws constrain the optimization trajectory from initialization to convergence, providing an explicit characterization of the resulting optimization bias. 
Inspired by Noether's theorem \citep{Noether1918}, which roughly states that every continuous symmetry of a physical system induces a conservation law, prior works have developed theoretical frameworks to derive conservation laws from symmetries in neural network training dynamics \citep{kunin2021neuralmechanicssymmetrybroken,gluch2021noetherthingschangestay}. 

However, for gradient flow, the relationship is more nuanced than these 
frameworks suggest. 
Recent work raises the question of whether all conserved quantities in neural networks arise from known symmetries \citep{zhao_symmetry_2025}. 
Other works point at a more fundamental connection between symmetries and conservation laws through their underlying vector fields 
\citep{marcotte2025transformativeconservativeconservationlaws,gegenfurtner2026symmetries}. 
Based on these observations, we develop a geometric framework 
that precisely characterizes the relationship between symmetries and conservation laws in gradient flow.

\subsection{Main contributions} 
\begin{itemize}
    \item 
    We develop a unified vector-field framework for loss symmetries and conservation laws under gradient flow and formalize the gap between them, 
    showing that independent conservation laws induce independent loss symmetries, but not vice versa. 

    \item We connect this framework to functional symmetries and model identifiability, and derive general results on the inheritance and completeness of symmetries and conservation laws in compositional models which can be applied to multilayer networks. 

    \item We apply our framework to establish the complete set of conservation laws for multi-head self-attention, grouped-query attention and deep polynomial neural networks, and provide a bound for the number of independent conservation laws in square deep linear networks. 
\end{itemize}

\subsection{Related works}

\textbf{Parameter symmetry and identifiability.} 
Neural networks often admit transformations of the parameters that leave the represented function unchanged \citep{SUSSMANN1992589,NIPS1993_e49b8b40}. 
Some of these parameter symmetries are well known, while others depend more subtly on the architecture and activation function. 
They arise, e.g., in 
linear networks \citep{baldi1989,trager2020Pure}, 
polynomial neural networks \citep{kileel2019polynomial,usevich2025identifiability}, 
ReLU networks \citep{pmlr-v202-grigsby23a,gegenfurtner2026symmetries}, 
networks with batch normalization \citep{kunin2021neuralmechanicssymmetrybroken}, 
 multi-head self-attention \citep{tran2025equivariantneuralfunctionalnetworks}, 
 and 
 symmetries induced by particular activation functions \citep{VLACIC2021107485,zhao_symmetries_2023}. 
 This has led to new theoretical understandings of optimization landscapes \citep{ziyin2024symmetry,tran2025on} and practical techniques for improving the convergence and generalization of neural networks \citep{zhao2024improving}.
 For a survey of \mbox{parameter symmetries in neural networks and their applications, we refer to \cite{zhao_symmetry_2025}.}

\textbf{Conservation laws in gradient flow.} 
Conservation laws first appeared as 
balancedness conditions in deep linear networks \citep{saxe2014exact} and have been useful in understanding the convergence behavior \citep{arora2018a, du2018algorithmicregularization}, acceleration characteristics \citep{arora18aon}, and implicit bias of training dynamics 
\citep{arora2019implicitregularization}. 
\citet{le2022traininginvarianceslowrankphenomenon} derived conservation laws for
nonlinear layers,
including ResNet and convolutional layers, and used them to explain the low-rank phenomenon in the training of deep feedforward networks. 
Recent work by \citet{marcotte2024abidelawfollowflow} 
network architecture, with applications to shallow linear and ReLU networks. 
Subsequent work by \cite{marcotte2025transformativeconservativeconservationlaws,tran2025equivariantneuralfunctionalnetworks} 
derived the complete set of conservation laws for 
further layer types, including single and multi-head self-attention, ResNet, modern activations,
and Mixture-of-Experts. 
These works also highlight the difficulty of
extending such results to more complex architectures, such as multilayer networks. 
Other active topics include conservation laws in other training dynamics \citep{marcotte_momentum_2024}, data-dependent conservation laws \citep{galley2026conservationlawsdatasymmetry}, and topology and geometry of the learning space \citep{nurisso2024topological, nurisso2026topology}.

\textbf{Connection between symmetries and conservation laws.} 
Noether's theorem establishes that every continuous symmetry in a Lagrangian system corresponds to a conservation law \citep{Noether1918}.
Prior works by \cite{kunin2021neuralmechanicssymmetrybroken,tanaka2021noetherslearningdynamics, gluch2021noetherthingschangestay,zhao_symmetries_2023} have sought to establish a similar framework in neural network training dynamics, but the constructions have remained limited to specific types of symmetries and do not give a complete characterization of the relationship between symmetries and conservation laws for gradient flows. 
Several recent works have observed that symmetries of the loss function and conservation laws of the gradient flow are related through their underlying vector fields \citep{josz2025subdifferentiationsymmetry,white2025beyondlagrangians,marcotte2025transformativeconservativeconservationlaws,gegenfurtner2026symmetries}.
Our framework builds upon this geometric observation. We summarize and contrast prior works with ours in Appendix~\ref{sec:comparison}.

\section{Preliminaries} 
\label{sec:prelim}

Let \(\Theta \subset \R^D\) be the parameter space, \(\mathcal X\) be the input space, and \(\mathcal Z\) be the output space of a network
\[
G: \Theta \times \mathcal X \to \mathcal Z, \quad (\theta, x) \mapsto G(\theta, x).
\]
Let \(\ell\) be a loss function that compares the model output \(z := G(\theta, x)\) against the label \(y \in \mathcal Y\), 
\[
\ell : \mathcal Z \times \mathcal Y \to \mathbb R, \quad (z, y) \mapsto \ell(z,y).
\]
Given a pair \((x, y) \in \mathcal X \times \mathcal Y\), the single-sample loss is
\(\ell_{(x,y)}(\theta) := \ell(G(\theta, x), y)\). 
For a dataset \(\mathcal D = (x^{(i)}, y^{(i)})_{i=1}^n \in (\mathcal X \times \mathcal Y)^\ast\), training seeks to minimize the empirical loss
\[
L_{\mathcal D}(\theta) = \frac 1 n \sum_{i=1}^n \ell\left(G(\theta, x^{(i)}), y^{(i)}\right) = \frac 1 n \sum_{i=1}^n \ell_{(x^{(i)}, y^{(i)})}(\theta).  
\]

In this paper, we specifically consider gradient flow as the training mechanism: 
\begin{equation}
\label{eqn:gradient-flow}
\dot \theta(t) = - \nabla_\theta L_{\mathcal D}(\theta) = - \frac 1 n \sum_{i=1}^n \nabla_\theta \ell_{(x^{(i)}, y^{(i)})}(\theta)     , \quad \theta(0) = \theta_{0} . 
\end{equation}
For this ODE to be well-defined by the Picard-Lindel\"of theorem, we assume throughout that \(\Omega\) is open 
and, for all \((x,y) \in \mathcal X \times \mathcal Y\), 
\(\ell_{(x,y)}(\theta)\) is continuously differentiable 
with locally Lipschitz gradient \(\nabla_\theta \ell_{(x,y)}(\theta)\) 
on \(\Theta\). 
This assumption is satisfied by neural network architectures that use smooth activation functions, such as sigmoid, tanh, GELU, Swish, SwiGLU, etc., and that are trained with smooth loss functions, such as the mean squared error loss and the logistic loss.

Moreover, for common loss functions whose value is bounded from below, the following proposition shows that the gradient flow is well-defined globally, i.e., it exists and is unique for all time \(t \in [0, \infty)\).
This permits us to talk about gradient flow corresponding to any finite dataset \(\mathcal D\), any initialization \(\theta_0 \in \Theta\), and for all time \(t \in [0, \infty)\).
\begin{restatable}{proposition}{existenceGradientFlow}
    \label{prop:existence_gradient_flow}
    Suppose \(\Theta = \R^D\), the loss function \(\ell\) is bounded from below, and for any \((x,y) \in \mathcal X \times \mathcal Y\), \(\ell_{(x,y)}(\theta) \in C^1(\Theta, \R)\) with locally lipschitz gradient \(\nabla_\theta \ell_{(x,y)}(\theta)\). Then for every finite nonempty dataset \(\mathcal D \in (\mathcal X \times \mathcal Y)^\ast\) and every initialization \(\theta_0 \in \Theta\), the gradient flow (\ref{eqn:gradient-flow}) admits a unique solution \(\theta(t)\) defined for all \(t \in [0, \infty)\).
\end{restatable}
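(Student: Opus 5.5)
Local existence and uniqueness come from Picard--Lindel\"of, and a Lyapunov-type argument rules out blow-up in finite time. The vector field \(F(\theta) := -\nabla_\theta L_{\mathcal D}(\theta) = -\frac1n\sum_{i=1}^n \nabla_\theta \ell_{(x^{(i)},y^{(i)})}(\theta)\) is a finite average of locally Lipschitz maps on \(\Theta=\R^D\), so it is locally Lipschitz. Picard--Lindel\"of therefore gives a unique maximal solution \(\theta:[0,T^\ast)\to\R^D\) with \(T^\ast\in(0,\infty]\), and uniqueness holds on any common interval of definition. It remains to show \(T^\ast=\infty\). For this I will use the standard extension (blow-up) criterion: if \(T^\ast<\infty\), then \(\|\theta(t)\|\to\infty\) as \(t\uparrow T^\ast\). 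Equivalently, the solution cannot remain in a compact set on a bounded time interval.

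The key estimate is energy dissipation. Since \(L_{\mathcal D}\in C^1\), along the solution
\[
\frac{d}{dt}L_{\mathcal D}(\theta(t)) = \langle \nabla L_{\mathcal D}(\theta(t)),\dot\theta(t)\rangle = -\|\nabla L_{\mathcal D}(\theta(t))\|^2 = -\|\dot\theta(t)\|^2 .
\]
Integrating gives
\[
\int_0^t \|\dot\theta(s)\|^2\,ds = L_{\mathcal D}(\theta_0)-L_{\mathcal D}(\theta(t)) \le L_{\mathcal D}(\theta_0)-m =: C,
\]
where \(m\) is a lower bound for \(\ell\). The lower bound on \(\ell\) also bounds \(L_{\mathcal D}\) from below, since \(L_{\mathcal D}\) is an average of values of \(\ell\). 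This is the only place the hypothesis on \(\ell\) enters.

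Next I convert this into a bound on the trajectory length. By Cauchy--Schwarz, for \(t<T^\ast\),
\[
\|\theta(t)-\theta_0\|\le \int_0^t\|\dot\theta(s)\|\,ds \le \sqrt{t}\Big(\int_0^t\|\dot\theta\|^2\Big)^{1/2}\le \sqrt{tC}.
\]
So if \(T^\ast<\infty\), the trajectory stays in the closed ball of radius \(\sqrt{T^\ast C}\) around \(\theta_0\). This contradicts the blow-up criterion, and hence \(T^\ast=\infty\).

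The main obstacle is justifying the blow-up criterion cleanly, since only local Lipschitzness is available. I would argue it directly. Suppose the solution stays in a compact ball \(K\) on \([0,T^\ast)\). Then \(F\) is bounded by some \(M\) on \(K\), so \(\theta\) is \(M\)-Lipschitz in time and the limit \(\theta(T^\ast)\) exists by the Cauchy criterion. Restarting Picard--Lindel\"of at that point extends the solution past \(T^\ast\), which contradicts maximality. The remaining steps are routine.
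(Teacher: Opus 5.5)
Your proposal is correct and follows the paper's proof essentially step for step. Both use Picard--Lindel\"of for a unique maximal solution, then the dissipation identity $\frac{d}{dt}L_{\mathcal D}(\theta(t)) = -\|\dot\theta(t)\|^2$ together with the lower bound on $\ell$, then Cauchy--Schwarz to get $\|\theta(t)-\theta_0\|\le\sqrt{tC}$, which rules out finite-time blow-up. The only difference is that you prove the blow-up criterion directly (bounded trajectory, hence Lipschitz in time, hence the limit at $T^\ast$ exists, hence you can restart the flow), whereas the paper simply invokes it as standard.
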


The proof of this result, along with all other proofs in the paper, is deferred to the appendix.

\subsection{Conservation laws}
As defined by \cite{marcotte2024abidelawfollowflow}, a function \(h: \Theta \to \R\) is called a conservation law if, for any solution \(\theta(t)\) of the gradient flow equation (\ref{eqn:gradient-flow}) with respect to any dataset \(\mathcal D\) and initialization \(\theta_0\), the value of \(h\) remains constant along the trajectory, i.e., \(h(\theta(t)) = h(\theta_0)\) for all \(t\).
Corollary 2.6 and Proposition 2.7 from the same paper provide the following equivalent condition, which says that the conservativeness of any function \(h\) is completely characterized by its infinitesimal orthogonal alignment with respect to the gradient of the loss function.
\begin{proposition}
    \label{prop:conservation-law-condition}
    A continuously differentiable function \(h: \Theta \to \mathbb R\) is a conservation law if and only if \(\nabla_\theta h(\theta) \perp W_\theta\) for all \(\theta \in \Theta\), where
    \begin{equation}
        \label{eqn:gradhperpgradL}
        W_\theta 
        := 
        \operatorname{span}_{(x,y) \in \mathcal X \times \mathcal Y} \left\{[D_\theta G(\theta, x)]^\top \nabla_z \ell(G(\theta, x), y)\right\}
        =\operatorname{span}_{(x,y) \in \mathcal X \times \mathcal Y} \left\{\nabla \ell_{(x,y)}(\theta)\right\}.
    \end{equation}
\end{proposition}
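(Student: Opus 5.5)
The plan is to prove both directions by differentiating \(h\) along trajectories and using the freedom in the definition of a conservation law: any finite dataset and any initialization are allowed. For \(h \in C^1(\Theta,\R)\) and any solution \(\theta(t)\) of (\ref{eqn:gradient-flow}), the chain rule gives
\[
\frac{d}{dt} h(\theta(t)) = \langle \nabla_\theta h(\theta(t)), \dot\theta(t)\rangle = -\frac1n \sum_{i=1}^n \langle \nabla_\theta h(\theta(t)), \nabla \ell_{(x^{(i)},y^{(i)})}(\theta(t))\rangle .
\]
The second expression for \(W_\theta\) in (\ref{eqn:gradhperpgradL}) equals the first by the chain rule, \(\nabla \ell_{(x,y)}(\theta) = [D_\theta G(\theta,x)]^\top \nabla_z \ell(G(\theta,x),y)\). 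So it suffices to work with the span of the per-sample gradients.

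\emph{Sufficiency.} Suppose \(\nabla_\theta h(\theta) \perp W_\theta\) for every \(\theta\). Each \(\nabla \ell_{(x^{(i)},y^{(i)})}(\theta(t))\) lies in \(W_{\theta(t)}\), so every summand above vanishes. Hence \(\frac{d}{dt}h(\theta(t)) = 0\) on the whole interval of existence of the solution, and \(h(\theta(t)) = h(\theta_0)\). This holds for any dataset and any initialization, so \(h\) is a conservation law.

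\emph{Necessity.} Suppose \(h\) is a conservation law, and fix \(\theta_0 \in \Theta\) and \((x,y)\). Take the single-sample dataset \(\mathcal D = ((x,y))\) and the solution starting at \(\theta_0\); it exists locally because \(\Theta\) is open and the gradient is locally Lipschitz (Picard--Lindel\"of). Since \(h(\theta(t))\) is constant, its derivative at \(t=0\) is zero, which gives \(\langle \nabla h(\theta_0), \nabla \ell_{(x,y)}(\theta_0)\rangle = 0\). As \((x,y)\) is arbitrary, \(\nabla h(\theta_0)\) is orthogonal to every generator of \(W_{\theta_0}\). By linearity it is orthogonal to their span, including any finite linear combinations defining the span when \(\mathcal X\times\mathcal Y\) is infinite.

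The main subtlety is that the definition quantifies over all times \(t\), while solutions may exist only locally when \(\Theta\neq\R^D\). The argument only needs the derivative at \(t=0\) and constancy on the maximal interval of existence, so it is unaffected; on \(\Theta=\R^D\), Proposition~\ref{prop:existence_gradient_flow} gives global existence anyway. Beyond this, the key point is that single-sample datasets are admissible. This is what allows each individual gradient to be isolated, rather than only averages of gradients.
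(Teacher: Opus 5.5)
Your proof is correct. Sufficiency follows from the chain-rule identity along trajectories, and necessity follows from isolating each generator $\nabla \ell_{(x,y)}(\theta_0)$ with a single-sample dataset and differentiating at $t=0$. The paper does not prove this statement itself but imports it from Corollary~2.6 and Proposition~2.7 of \citet{marcotte2024abidelawfollowflow}; that cited argument is essentially yours, so your write-up makes the result self-contained.
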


We call \(W\) the \emph{gradient flow distribution}, which captures the directions in which the parameters can evolve under gradient flow for all possible data pairs \((x,y)\), and hence also the directions in which the parameters can evolve under gradient flow for any dataset \(\mathcal D\).
The terminology \emph{distribution} is used in the sense of differential geometry, where a distribution is an assignment of a subspace of the tangent space at each point on a manifold. 
In this case, the manifold is the parameter space \(\Theta\), and the tangent space at each point \(\theta\) is simply \(\R^D\).

A fundamental question is to characterize all conservation laws of a given network architecture and loss function.
To this end, \cite{marcotte2024abidelawfollowflow} gives a definition of independent conservation laws based on the concept of functional independence \citep{newns1967functionaldependence}.
We call a set of conservation laws complete if they are functionally independent and there are no other functionally independent conservation laws. 
We recall the definition of functional independence: 
\begin{definition}
    \label{def:independent_laws}
    A family of \(N\) functions \(\{h_1, \ldots, h_N\}\) in \(C^1(\Theta, \R)\) is said to be functionally independent if the vectors \(\{\nabla_\theta h_1(\theta), \ldots, \nabla_\theta h_N(\theta)\}\) are linearly independent for all \(\theta \in \Theta\).
\end{definition}

The definition is justified by Proposition 2.12 of \cite{marcotte2025transformativeconservativeconservationlaws}, which states that, given a complete set of conservation laws, any other conservation law can locally be expressed as a function of the existing conservation laws. 
In verifying this result, however, we found that the proof is insufficient for the stated conclusion. 
Specifically, the proof invokes 
Theorem~1 of \citet{newns1967functionaldependence}, which does not by itself show that
an additional dependent function can be expressed as a function of the prior independent functions. 
To address this gap, we provide a general local factorization result that 
captures the intended conclusion. 

\begin{restatable}{proposition}{independentFunctions}
    \label{prop:independentfunctions}
    Let \(\Theta \subset \R^D\) be open, and let \(H = (h_1, \ldots, h_k): \Theta \to \R^k\), \(h: \Theta \to \R\) be \(C^r\) functions for \(r \geq 1\). Assume \(\nabla_\theta h_1, \ldots, \nabla_\theta h_k\) are linearly independent at every \(\theta \in \Theta\). Then the following two conditions are equivalent:
    \begin{enumerate}[label = (\roman*)]
        \item For every \(\theta \in \Theta\),  \(\nabla_\theta h(\theta) \in \operatorname{span}\{\nabla_\theta h_1(\theta), \ldots, \nabla_\theta h_k(\theta)\}\).
        \item For every \(\theta_0 \in \Theta\), there 
        exist neighborhoods 
        \(V \ni \theta_0, W \ni H(\theta_0)\) and \(f \in C^r(W)\) such that 
        \begin{equation} \label{eq:functional_dependence}
            h(\theta) = f(H(\theta)) = f(h_1(\theta), \ldots, h_k(\theta)), \quad \forall \theta \in V.
        \end{equation}
    \end{enumerate}
\end{restatable}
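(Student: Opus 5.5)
The direction (ii) \(\Rightarrow\) (i) is immediate from the chain rule. If \(h = f\circ H\) on a neighborhood \(V\) of \(\theta_0\), then
\[
\nabla_\theta h(\theta) = \sum_{j=1}^k \partial_j f(H(\theta))\,\nabla_\theta h_j(\theta) \quad \text{for } \theta \in V .
\]
In particular this holds at \(\theta_0\), which is arbitrary. So the substance lies in (i) \(\Rightarrow\) (ii). There we straighten out \(H\) by a local change of coordinates.

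Fix \(\theta_0\). Since \(\nabla h_1(\theta_0),\dots,\nabla h_k(\theta_0)\) are linearly independent, we have \(k\le D\), and we can choose \(D-k\) linear functionals \(g_{k+1},\dots,g_D\) (coordinate projections, say) that complete these gradients to a basis of \(\R^D\). Define \(\Phi=(h_1,\dots,h_k,g_{k+1},\dots,g_D)\). Then \(\Phi\) is \(C^r\) and its Jacobian at \(\theta_0\) is invertible. The inverse function theorem gives a neighborhood \(V_0\) of \(\theta_0\) on which \(\Phi\) is a \(C^r\) diffeomorphism onto an open set \(U\). Shrinking \(V_0\), we may take \(U = W\times B\) to be a product of open balls (or boxes), with \(W\ni H(\theta_0)\) and \(B\subset\R^{D-k}\) open and convex.

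Next set \(\tilde h = h\circ\Phi^{-1}\) on \(U\), which is \(C^r\), and write coordinates as \((u,v)\in W\times B\). Since \(h=\tilde h\circ\Phi\), we get
\[
\nabla h(\theta) = \sum_{j\le k}\partial_{u_j}\tilde h\,\nabla h_j(\theta) + \sum_{i>k}\partial_{v_i}\tilde h\,\nabla g_i .
\]
By (i) the left side lies in \(\operatorname{span}\{\nabla h_j(\theta)\}\). The full set \(\{\nabla h_j(\theta),\nabla g_i\}\) is a basis, because \(D\Phi(\theta)\) is invertible on \(V_0\). Uniqueness of coordinates in this basis therefore forces \(\partial_{v_i}\tilde h\equiv 0\) on \(U\). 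Because each slice \(\{u\}\times B\) is convex, hence connected, \(\tilde h(u,v)\) does not depend on \(v\). Fix any \(v_0\in B\) and define \(f(u)=\tilde h(u,v_0)\). This \(f\) is \(C^r\) on \(W\), and \(h(\theta)=\tilde h(H(\theta),\cdot)=f(H(\theta))\) for all \(\theta\in V:=\Phi^{-1}(W\times B)\).

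The main obstacle is ensuring that the image of \(\Phi\) can be taken to be a product with connected fibers, which is what makes \(v\)-independence imply global \(u\)-dependence on the chart. I would handle this by shrinking to the preimage of a small box centered at \(\Phi(\theta_0)\). This box lies inside the open set \(\Phi(V_0)\), and its preimage under the diffeomorphism is again an open neighborhood of \(\theta_0\). A second point to note is that \(r\)-smoothness is preserved throughout, since the inverse function theorem yields a \(C^r\) inverse.
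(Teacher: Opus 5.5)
Your proof is correct and follows essentially the same route as the paper's. Both arguments straighten \(H\) into a coordinate projection by a local change of coordinates, then use (i) and the chain rule to show that the pulled-back \(h\) has zero derivative in the fiber coordinates. The paper cites the submersion theorem for the chart, while you build \(\Phi=(H,g)\) directly from the inverse function theorem, which is the standard proof of that theorem with the target chart taken to be the identity. Your explicit shrinking to a box \(W\times B\) with convex fibers is a useful addition: the paper obtains connectedness of the fibers only implicitly, by asserting that its chart maps onto all of \(\R^D\).
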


\subsection{Loss symmetries and functional symmetries}

A symmetry of an object is an invertible transformation that leaves the structure of the object unchanged. 
As defined in the
survey by \cite{zhao_symmetry_2025}, a bijective parameter transformation \(T: \Theta \to \Theta\) is a loss symmetry if it preserves the value of the loss function \(L\).
It is a functional symmetry if it preserves the function represented by the model \(G\).
Every functional symmetry is a loss symmetry, but the converse is not true in general.

Since symmetries are invertible transformations, they form a group under composition.
Hence, prior works have utilized the language of group actions to study symmetries.
Let \(\Gamma\) be a group that acts on \(\Theta\) via a group action \(\psi: \Gamma \times \Theta \to \Theta\) satisfying the group action axioms:
\begin{align*}
\psi(e, \theta) &= \theta, \qquad \forall \theta \in \Theta, \text{ where $e$ is the identity element in }\Gamma,\\
\psi(g, \psi(h, \theta)) &= \psi(gh, \theta), \quad \forall g, h \in \Gamma, \forall \theta \in \Theta.
\end{align*}
For convenience, we also write the group action as \(g \cdot \theta := \psi(g, \theta)\).

For a given dataset \(\mathcal D\), we say that \((\Gamma, \psi)\) is a symmetry of the loss function \(L_{\mathcal D}\) if, for all \(g \in \Gamma, \theta \in \Theta\), 
\[
L_{\mathcal D}(g, \theta) := L_{\mathcal D}(\psi(g, \theta)) = L_{\mathcal D}(g \cdot \theta) = L_{\mathcal D}(\theta).
\] 
If the symmetry holds for all datasets \(\mathcal D\), we simply say that \((\Gamma, \psi)\) is a symmetry of the loss function \(L\). 
Similarly, we say that \((\Gamma, \psi)\) is a symmetry of the model \(G\) if, for all \(g \in \Gamma, \theta \in \Theta, x \in \mathcal X\), 
\[
\mathcal G(g, \theta, x) := G(\psi(g, \theta), x) = G(g \cdot \theta, x) = G(\theta, x).
\]

Of particular interest are differentiable symmetries, where \(\Gamma\) is a Lie group and the group action \(\psi\) is smooth.
If \(\Gamma\) is a one-dimensional connected Lie group, it is well-known that \(\Gamma\) is isomorphic to the real line \(\R\) or the circle \(S^1\). 
But since \(\R\) is a universal cover of both groups, we can always lift the group action of \(\Gamma\) to one of \(\R\). Thus, it suffices to consider \(\Gamma = \R\). 
In this case, \cite{kunin2021neuralmechanicssymmetrybroken} derive  the following necessary condition for \((\R,\psi)\) to be a differentiable symmetry of \(L_{\mathcal D}\): 
\begin{equation}
    \label{eqn:kunin_condition}
    \langle \nabla L_{\mathcal D}(\theta), D_g \psi(0, \theta) \rangle  = 0, \quad \forall \theta \in \Theta.
\end{equation}

\subsection{A gap between differentiable symmetries and conservation laws} 
    \label{sec:motivating_example} 
    The following example illustrates the gap between the number of independent differentiable symmetries and independent conservation laws.
    Consider a two-layer linear network 
    \[
    G: (\R^{2 \times 2})^2 \times \R^2 \to \R^2, \quad G((U,V), x) = UV^\top x.
    \]
    By Corollary 4.4 of \cite{marcotte2024abidelawfollowflow}, a complete set of conservation laws for this network is 
    \begin{align*}
        h_1(U,V) &= U_{11}^2 + U_{21}^2 - V_{11}^2 - V_{21}^2,\\
        h_2(U,V) &= U_{11}U_{12} + U_{21}U_{22} - V_{11}V_{12} - V_{21}V_{22},\\
        h_3(U,V) &= U_{12}^2 + U_{22}^2 - V_{12}^2 - V_{22}^2.
    \end{align*}
    It is also known that for any invertible matrix \(S \in \R^{2 \times 2}\), the transformation \((U,V) \mapsto (US, V(S^{-1})^\top)\) is a functional symmetry of the network. 
    In particular, the following maps \(\psi_i: \R \times (\R^{2 \times 2})^2 \to (\R^{2 \times 2})^2\), \(i = \overline{1, 4}\), give four one-parameter symmetries of the network, whose infinitesimal generators \(D_t \psi_i(0, \theta), i = \overline{1, 4},\) are linearly independent vector fields: 
    \begin{align*}
        \psi_1 = \left(U \begin{psmallmatrix}
            e^{2t} & 0 \\ 0 & 1
        \end{psmallmatrix}, V \begin{psmallmatrix}
            e^{-2t} & 0 \\ 0 & 1
        \end{psmallmatrix}\right),
        & \quad \psi_2 = \left(U \left(\begin{smallmatrix}
            \cosh(t) & \sinh(t) \\ \sinh(t) & \cosh(t)
        \end{smallmatrix}\right), V \left(\begin{smallmatrix}
            \cosh(t) & -\sinh(t) \\ -\sinh(t) & \cosh(t)
        \end{smallmatrix}\right)\right),\\
        \psi_3 = \left(U \begin{psmallmatrix}
            1 & 0 \\ 0 & e^{2t}
        \end{psmallmatrix}, V \begin{psmallmatrix}
            1 & 0 \\ 0 & e^{-2t}
        \end{psmallmatrix}\right),
        &\quad \psi_4 = \left(U \left(\begin{smallmatrix}
            \cos(t) & -\sin(t) \\ \sin(t) & \cos(t)
        \end{smallmatrix}\right), V \left(\begin{smallmatrix}
            \cos(t) & -\sin(t) \\ \sin(t) & \cos(t)
        \end{smallmatrix}\right)\right).
    \end{align*}

    Observe that \(\nabla_\theta h_i(\theta) = D_t \psi_i(0, \theta)\) for each \(i = 1, 2, 3\).
    Since these conservation laws are complete, \(\psi_4\) 
    yields no new independent conservation law. 
    This runs counter to a naive analogy with Noether's theorem, which might suggest that every differentiable symmetry gives rise to a corresponding conservation law. 

    Several works have alluded to this gap in different contexts, including those of \cite{zhao_symmetries_2023, gegenfurtner2026symmetries}.
    The recent survey by \cite{zhao_symmetry_2025} explicitly 
    identifies as an open question whether all conserved quantities originate from known symmetries. 
    We formalize this gap between differentiable symmetries and conservation laws in Section \ref{sec:general}.

\section{Loss Symmetries, Conservation Laws, and Vector Fields} 
\label{sec:general}

In this section, we build on standard results from differential geometry to develop a framework to characterize loss symmetries and conservation laws in neural networks under gradient flow via their underlying vector fields.
We provide a short review of the relevant concepts in Appendix \ref{app:diffgeo} for completeness, and refer to standard texts \citep{olver_applications_nodate,isidori1995nonlinear,lee2003introduction} for further details. 

Our first result establishes a necessary and sufficient condition for a Lie group action to be a differentiable symmetry of a given loss function, thus both extending and generalizing the necessary condition (\ref{eqn:kunin_condition}) derived by \cite{kunin2021neuralmechanicssymmetrybroken}.

\begin{restatable}{proposition}{symmetryEquivalence}\label{prop:symmetry_equivalence} Let \(\Gamma\) be a connected Lie group that acts on \(\Theta\) via a smooth group action \(\psi\). 
    A necessary and sufficient condition for \(\Gamma\) to be a differentiable symmetry of \(L_{\mathcal D}\) is
    \begin{equation} \label{eq:symmetry_condition}
        D_g \psi(e, \theta)^\top \nabla_\theta L_{\mathcal D}(\theta) = 0, \quad \forall \theta \in \Theta.
    \end{equation}
\end{restatable}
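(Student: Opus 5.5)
Necessity is a direct differentiation of the invariance identity, and sufficiency comes from reducing to one-parameter subgroups. The key tool is the equivariance of the fundamental vector fields along group orbits.

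\emph{Necessity.} Suppose \(L_{\mathcal D}(\psi(g,\theta)) = L_{\mathcal D}(\theta)\) for all \(g \in \Gamma\) and \(\theta \in \Theta\). Fix \(\theta\) and differentiate in \(g\) at \(g = e\). By the chain rule, \(\nabla_\theta L_{\mathcal D}(\theta)^\top D_g\psi(e,\theta)\,\xi = 0\) for every \(\xi \in \mathfrak g = T_e\Gamma\). Transposing gives \eqref{eq:symmetry_condition}. Taking \(\Gamma = \R\) recovers the necessary condition (\ref{eqn:kunin_condition}).

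\emph{Sufficiency.} Assume \eqref{eq:symmetry_condition}. For each \(\xi \in \mathfrak g\), define the fundamental vector field \(X_\xi(\theta) := D_g\psi(e,\theta)\,\xi\). The hypothesis says \(\langle \nabla L_{\mathcal D}(\theta), X_\xi(\theta)\rangle = 0\) for all \(\theta\).

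\begin{enumerate}[label=(\arabic*)]
\item Show that \(t \mapsto \psi(\exp(t\xi),\theta)\) is the integral curve of \(X_\xi\) through \(\theta\). Using the action axiom \(\psi(\exp((t+s)\xi),\theta) = \psi(\exp(s\xi),\psi(\exp(t\xi),\theta))\) and differentiating in \(s\) at \(s=0\), we get
\[
\tfrac{d}{dt}\psi(\exp(t\xi),\theta) = X_\xi\bigl(\psi(\exp(t\xi),\theta)\bigr).
\]
\item Set \(\phi(t) := L_{\mathcal D}(\psi(\exp(t\xi),\theta))\). By the chain rule and the hypothesis evaluated at the point \(\psi(\exp(t\xi),\theta)\), we have \(\phi'(t) = \langle \nabla L_{\mathcal D}, X_\xi\rangle = 0\) for all \(t\). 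Hence \(L_{\mathcal D}\) is invariant under every element of the form \(\exp(\xi)\).
\item Pass to all of \(\Gamma\). The image of \(\exp\) contains a neighborhood \(U\) of \(e\). Since \(\Gamma\) is connected, it is generated by \(U\), so every \(g\) can be written as \(g = \exp(\xi_1)\cdots\exp(\xi_m)\). Applying step (2) repeatedly through the composition axiom \(\psi(gh,\theta) = \psi(g,\psi(h,\theta))\) gives \(L_{\mathcal D}(g\cdot\theta) = L_{\mathcal D}(\theta)\).
\end{enumerate}

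The main obstacle is step (1), together with the global issue in step (3). The hypothesis is purely infinitesimal at the identity, so it must be transported along the whole orbit. This works precisely because \(X_\xi\) is evaluated at every point of \(\Theta\) and the curve \(\psi(\exp(t\xi),\cdot)\) is exactly its flow. The connectedness assumption is essential: a disconnected \(\Gamma\), for instance one containing a discrete reflection, could satisfy \eqref{eq:symmetry_condition} without preserving the loss. Finally, one should note that the transpose notation in \eqref{eq:symmetry_condition} means the adjoint \(D_g\psi(e,\theta)^\top: \R^D \to \mathfrak g^*\). After identifying \(\mathfrak g \cong \R^{\dim\Gamma}\), this is an ordinary matrix transpose.
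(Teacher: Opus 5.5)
Your proof is correct. Necessity is the same computation as in the paper, but your sufficiency argument takes a different route.

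The paper never uses the exponential map. For an arbitrary $g \in \Gamma$ and $v \in T_g\Gamma$, it writes a curve through $g$ as $\gamma(t) = \epsilon(t)g$ with $\epsilon(0) = e$, so that $\psi(\gamma(t),\theta) = \psi(\epsilon(t),\psi(g,\theta))$. Evaluating the hypothesis at the point $\psi(g,\theta)$ then shows that $g \mapsto L_{\mathcal D}(\psi(g,\theta))$ has zero differential at every $g$. Connectedness of $\Gamma$ makes this function constant, equal to its value $L_{\mathcal D}(\theta)$ at $e$.

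You instead transport the infinitesimal condition only along one-parameter subgroups, via the integral-curve identity in your step (1). You then need two Lie-theoretic facts: that $\exp$ covers a neighborhood of $e$, and that a connected group is generated by any such neighborhood.

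Both arguments rest on the same mechanism: use the action axiom to move the condition at $e$ to an arbitrary point of the orbit, and evaluate the hypothesis there. The paper's version is more elementary and uses connectedness in its most direct form (zero differential on a connected manifold). Yours needs a little more structure theory, but it makes explicit that checking one-parameter subgroups suffices. That fits the paper's reduction to $\Gamma = \R$ in Appendix~\ref{sec:1d_justification}, which invokes the same generation-by-exponentials fact. Your step (1) is also exactly the flow identity the paper reuses for partial symmetries in Appendix~\ref{app:partial_symmetry_extension}.
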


A direct application of Proposition \ref{prop:symmetry_equivalence} yields Corollary \ref{cor:1dsymmetrycond} in the case where \(\Gamma = \R\).
This then further yields Corollary \ref{cor:1dsymmetrycond2} by imposing that the symmetry holds for all datasets \(\mathcal D \in (\mathcal X \times \mathcal Y)^\ast.\)
\begin{restatable}{corollary}{oneDimensionalSymmetryCondition}
\label{cor:1dsymmetrycond}
    Let \(\Gamma = \R\) be the additive group of real numbers that acts on \(\Theta\) via a smooth group action \(\psi(t, \theta)\). 
    Let \(\mathcal D\) be a dataset in \((\mathcal X \times \mathcal Y)^\ast\).
    A necessary and sufficient condition for \(\Gamma\) to be a differentiable symmetry of \(L_{\mathcal D}\) is
    \begin{equation}\label{eq:symmetry_equivalence_one_parameter}
        \langle D_t \psi(0, \theta) ,\nabla_\theta L_{\mathcal D}(\theta)\rangle = 0, \quad \forall \theta \in \Theta.
    \end{equation}
\end{restatable}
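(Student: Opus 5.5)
The plan is to obtain this as a direct specialization of Proposition~\ref{prop:symmetry_equivalence} with \(\Gamma = \R\). The additive group \(\R\) is a connected one-dimensional Lie group with identity \(e = 0\), so the hypotheses of that proposition hold. Its Lie algebra is \(\R\) itself. The derivative \(D_g\psi(e,\theta)\) is therefore a linear map \(\R \to \R^D\), i.e.\ a \(D \times 1\) matrix whose single column is \(D_t\psi(0,\theta)\).

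Transposing, \(D_g\psi(e,\theta)^\top \nabla_\theta L_{\mathcal D}(\theta)\) is the scalar \(\langle D_t\psi(0,\theta), \nabla_\theta L_{\mathcal D}(\theta)\rangle\). Hence condition (\ref{eq:symmetry_condition}) is literally condition (\ref{eq:symmetry_equivalence_one_parameter}), and the equivalence follows.

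For a self-contained check, independent of the general proposition, I would argue directly as follows. Fix \(\theta\) and set \(\phi(t) := L_{\mathcal D}(\psi(t,\theta))\).

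\textbf{Necessity.} If \(\psi\) is a symmetry, then \(\phi\) is constant. Differentiating at \(t=0\) and using \(\psi(0,\theta)=\theta\) gives the inner product condition.

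\textbf{Sufficiency.} The group law \(\psi(t+s,\theta) = \psi(s,\psi(t,\theta))\) gives
\[
\partial_t\psi(t,\theta) = D_s\psi(0,\psi(t,\theta)).
\]
So \(t\mapsto\psi(t,\theta)\) is an integral curve of the generator \(X(\theta) := D_t\psi(0,\theta)\). By the chain rule,
\[
\phi'(t) = \langle X(\psi(t,\theta)), \nabla L_{\mathcal D}(\psi(t,\theta))\rangle.
\]
This vanishes because the hypothesis holds at every point of \(\Theta\), in particular at \(\psi(t,\theta)\). Thus \(\phi\) is constant on the connected set \(\R\), and \(L_{\mathcal D}(\psi(t,\theta)) = L_{\mathcal D}(\theta)\) for all \(t\).

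The only step needing care is the flow identity, which comes from the group action axiom. It is what turns a condition imposed only at the identity into vanishing of \(\phi'\) for all \(t\). There is no real obstacle beyond making sure the hypothesis is used at all points of \(\Theta\) rather than only at \(\theta\), which the quantifier in the statement supplies.
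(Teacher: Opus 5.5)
Your proposal is correct and matches the paper's proof, which just specializes Proposition~\ref{prop:symmetry_equivalence} to $\Gamma=\R$ with $e=0$ and rewrites the $D\times 1$ transpose product as an inner product. Your supplementary direct argument via the flow identity $\partial_t\psi(t,\theta)=D_s\psi(0,\psi(t,\theta))$ is also essentially the argument the paper gives for the partial-symmetry version (Proposition~\ref{prop:extension_cor_1d}).
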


\begin{restatable}{corollary}{globalOneDimensionalSymmetryCondition}
    \label{cor:1dsymmetrycond2}
    Let \(\Gamma = \R\) be the additive group of real numbers that acts on \(\Theta\) via a smooth group action \(\psi(t, \theta)\). 
    A necessary and sufficient condition for \(\Gamma\) to be a differentiable symmetry of \(L\) is
    \begin{equation}\label{eq:symmetry_equivalence_one_parameter2}
        D_t \psi(0, \theta) \perp W_\theta, \quad \forall \theta \in \Theta.
    \end{equation}
\end{restatable}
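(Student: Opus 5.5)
We reduce Corollary~\ref{cor:1dsymmetrycond2} to Corollary~\ref{cor:1dsymmetrycond} applied dataset by dataset. By definition, \((\R,\psi)\) is a differentiable symmetry of \(L\) exactly when it is a symmetry of \(L_{\mathcal D}\) for every finite nonempty dataset \(\mathcal D \in (\mathcal X \times \mathcal Y)^\ast\). By Corollary~\ref{cor:1dsymmetrycond}, this holds if and only if
\[
\langle D_t \psi(0,\theta), \nabla_\theta L_{\mathcal D}(\theta)\rangle = 0 \quad \text{for all } \theta \in \Theta \text{ and all } \mathcal D .
\]
It therefore remains to show that, at each fixed \(\theta\), this family of orthogonality conditions is equivalent to \(D_t\psi(0,\theta) \perp W_\theta\).

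For sufficiency, assume \(D_t\psi(0,\theta)\perp W_\theta\) for every \(\theta\). For any dataset, \(\nabla_\theta L_{\mathcal D}(\theta) = \frac1n\sum_i \nabla \ell_{(x^{(i)},y^{(i)})}(\theta)\). This is a linear combination of generators of \(W_\theta\), as in \eqref{eqn:gradhperpgradL}, so it lies in \(W_\theta\). The inner product therefore vanishes, and Corollary~\ref{cor:1dsymmetrycond} gives invariance of \(L_{\mathcal D}\) for every \(\mathcal D\).

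For necessity, assume the symmetry holds for all datasets. Take the single-sample dataset \(\mathcal D = ((x,y))\) with \(n=1\). Then \(L_{\mathcal D} = \ell_{(x,y)}\), and Corollary~\ref{cor:1dsymmetrycond} yields \(\langle D_t\psi(0,\theta), \nabla \ell_{(x,y)}(\theta)\rangle = 0\) for every pair \((x,y)\) and every \(\theta\). Orthogonality to every spanning vector implies orthogonality to their span, which is \(W_\theta\).

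There is no serious obstacle. The only points needing care are these:
\begin{enumerate}[label=(\roman*)]
\item The phrase ``symmetry of \(L\)'' must be read as ``symmetry of \(L_{\mathcal D}\) for all datasets,'' which is how the paper defines it.
\item Single-sample datasets must be admissible members of \((\mathcal X\times\mathcal Y)^\ast\). They are, since any finite nonempty tuple is allowed.
\item The span defining \(W_\theta\) may be over infinitely many generators. This does not matter, because orthogonality to each generator implies orthogonality to the linear span by linearity of the inner product.
\end{enumerate}
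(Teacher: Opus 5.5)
Your proposal is correct and follows the paper's proof essentially verbatim. Necessity comes from applying Corollary~\ref{cor:1dsymmetrycond} to single-sample datasets, whose gradients span \(W_\theta\), and sufficiency from noting that every \(\nabla_\theta L_{\mathcal D}(\theta)\) is an average of such gradients and so lies in \(W_\theta\); your remarks on single-sample datasets and possibly infinitely many generators are sound points the paper leaves implicit.
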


We remark that the smooth action of any connected Lie group \(\Gamma\) is uniquely determined by the smooth actions of its \(1\)-parameter Lie subgroups (see Appendix \ref{sec:1d_justification}). 
Thus, \(1\)-parameter symmetries provide the building blocks for higher dimensional symmetries, motivating us to focus on those. 

Since Proposition \ref{prop:conservation-law-condition} and Corollary \ref{cor:1dsymmetrycond2} are both equivalent conditions of the same form, they together suggest that conservation laws and symmetries can be completely characterized by smooth vector fields that are perpendicular to the gradient flow distribution \(W\), i.e., those in \(W^\perp\). 
Indeed, given enough regularity, we fully describe the connection between vector fields in \(W^\perp\) and the corresponding conservation laws and symmetries via Proposition~\ref{prop:conservation_law_vector_field} and Proposition~\ref{prop:symmetry_vector_field}. 
Before doing so, we recall several notions from differential geometry. 
A vector field \(\chi = (\chi_1, \dots, \chi_m): \Theta \to \R^D\) is said to be closed if its cross partial derivatives are equal, i.e., 
\begin{equation}
    \label{eqn:closed_field}
    \frac{\partial \chi_i}{\partial \theta_j} = \frac{\partial \chi_j}{\partial \theta_i}, \quad \forall 1 \leq i, j \leq D.
\end{equation}
It is
conservative, or exact, if it is the gradient of some scalar potential function, i.e., \(\chi = \nabla h\) for some \(h: \Theta \to \R\). 
A vector field is
complete if the flow generated by it exists for all time.

\begin{restatable}{proposition}{conservationLawVectorField}
    \label{prop:conservation_law_vector_field}
    If \(h\) is a conservation law, then \(\chi := \nabla h\) is a closed vector field in \(W^\perp\). 
    On the other hand, if \(\chi\) is a closed vector field in \(W^\perp\), then locally around every \(\theta \in \Theta\), there exists a unique scalar potential function \(h\) such that \(\chi = \nabla h\), up to an additive constant. 
    Furthermore, if \(\Theta\) is a star-shaped domain, then the scalar potential function \(h\) exists globally on \(\Theta\).
\end{restatable}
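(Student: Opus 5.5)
The plan is to treat the three assertions of Proposition~\ref{prop:conservation_law_vector_field} separately, reducing each to Proposition~\ref{prop:conservation-law-condition} and to the Poincaré lemma.

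\emph{Forward direction.} Let \(h\) be a conservation law, assumed \(C^2\) so that \(\chi := \nabla h\) is \(C^1\) and closedness makes sense. Proposition~\ref{prop:conservation-law-condition} gives \(\nabla h(\theta) \perp W_\theta\) for every \(\theta\), so \(\chi\) is a section of \(W^\perp\). Closedness is just the symmetry of the Hessian: by Schwarz's theorem,
\[
\partial_j \chi_i = \partial_j\partial_i h = \partial_i \partial_j h = \partial_i \chi_j .
\]

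\emph{Converse, local part.} Let \(\chi\) be a closed \(C^1\) field with \(\chi(\theta) \in W_\theta^\perp\) for all \(\theta\). Fix \(\theta_0 \in \Theta\). Since \(\Theta\) is open, there is a ball \(B \ni \theta_0\) contained in \(\Theta\). A ball is convex, hence star-shaped, so the global statement proved below applies on \(B\) and yields \(h\) with \(\nabla h = \chi\) on \(B\).

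For uniqueness, suppose \(\nabla h = \nabla \tilde h\) on a connected neighborhood. Then \(\nabla(h - \tilde h) = 0\) there, and the mean value theorem along polygonal paths shows that \(h - \tilde h\) is constant.

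Any such \(h\) satisfies \(\nabla h = \chi \perp W\). It is therefore a conservation law on that neighborhood, in the sense of the criterion of Proposition~\ref{prop:conservation-law-condition} applied to the restricted domain.

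\emph{Global part.} Assume \(\Theta\) is star-shaped with respect to \(\theta^*\). Define
\[
h(\theta) := \int_0^1 \bigl\langle \chi(\theta^* + s(\theta - \theta^*)),\, \theta - \theta^* \bigr\rangle \, ds .
\]
Because \(\chi\) is \(C^1\) and the integration domain is compact, we may differentiate under the integral sign. This gives
\[
\partial_i h(\theta) = \int_0^1 \Bigl( \chi_i(\gamma_s) + s \sum_j \partial_i \chi_j(\gamma_s)\,(\theta_j - \theta^*_j) \Bigr)\, ds, \qquad \gamma_s := \theta^* + s(\theta-\theta^*).
\]
Next, use closedness to replace \(\partial_i \chi_j\) by \(\partial_j \chi_i\). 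The integrand then becomes \(\frac{d}{ds}\bigl[s\,\chi_i(\gamma_s)\bigr]\), and integrating from \(0\) to \(1\) gives \(\partial_i h(\theta) = \chi_i(\theta)\).

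Finally, \(h\) is a global conservation law by Proposition~\ref{prop:conservation-law-condition}. If \(\chi\) is only \(C^1\), then \(h\) is \(C^2\), which is enough regularity for that proposition.

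\emph{Expected obstacle.} The calculations are routine; the delicate points are regularity and domain. Closedness needs \(\chi \in C^1\), so \(h\) must be \(C^2\). For the local statement, "locally conservation law" should be read as the gradient criterion holding on the neighborhood, and uniqueness up to a constant requires that neighborhood to be connected, which a ball guarantees. I would state these regularity assumptions explicitly at the start of the proof.
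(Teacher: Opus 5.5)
Your proposal is correct and follows essentially the same route as the paper: the forward direction comes from orthogonality to $W$ together with the symmetry of second derivatives, and the converse from the Poincar\'e lemma, applied locally on a ball and globally on a star-shaped domain. The only difference is that you prove the Poincar\'e lemma for closed $C^1$ fields explicitly via the radial integral instead of citing Lee's Theorems 11.49--11.50, and you state the regularity ($h \in C^2$) and connectedness (for uniqueness) assumptions that the paper's short proof leaves implicit.
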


\begin{restatable}{proposition}{symmetryVectorField}
    \label{prop:symmetry_vector_field}
    If \((\R, \psi)\) is a symmetry of the loss function \(L\), then \(\chi := D_t \psi \mid_{t = 0}\) is a smooth vector field in \(W^\perp\).
    On the other hand, if \(\chi \in W^\perp\) is a smooth vector field, then there exists a unique maximal local flow \(\psi: \mathcal O \to \Theta\), where \(\mathcal O \subset \R \times \Theta\) is the maximal open set containing \(\{0\} \times \Theta\), for which the solution of 
    \[\frac{d}{dt} \psi(t, \theta) = \chi(\psi(t, \theta)), \quad \psi(0, \theta) = \theta\]
    is well-defined. 
    This unique maximal local flow satisfies the following four equations for all \(\theta \in \Theta\) and \(t,s\) whenever the terms are well-defined:
    \begin{align*}
        &\psi(0, \theta) = \theta, \qquad 
        \psi(t + s, \theta) = \psi(t, \psi(s, \theta)), &\text{(group action axioms)}\\
        &D_t \psi(0, \theta) = \chi(\theta), &\text{(infinitesimal generator)}\\
        &L(\psi(t, \theta)) = L(\theta). &\text{(loss invariance condition)}
    \end{align*}
    Furthermore, if \(\chi\) is complete, then \(\psi\) is a symmetry of the loss function \(L\).
\end{restatable}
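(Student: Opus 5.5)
The plan has two directions, followed by a completeness argument.

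\emph{Forward direction.} Let \((\R,\psi)\) be a symmetry of \(L\), meaning a smooth action with \(L_{\mathcal D}(\psi(t,\theta)) = L_{\mathcal D}(\theta)\) for every dataset \(\mathcal D\). Corollary~\ref{cor:1dsymmetrycond2} then gives \(D_t\psi(0,\theta) \perp W_\theta\) for all \(\theta\). Smoothness of \(\chi := D_t\psi(0,\cdot)\) follows because \(\psi\) is smooth jointly in \((t,\theta)\), so differentiating in \(t\) at \(t=0\) yields a smooth map. This direction is essentially bookkeeping.

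\emph{Converse: existence and algebraic properties.} Given a smooth \(\chi\) with \(\chi(\theta)\in W_\theta^\perp\) for all \(\theta\), I would invoke the fundamental theorem on flows for smooth vector fields on the open set \(\Theta\) (e.g.\ Lee, Thm.~9.12). Picard--Lindel\"of gives local existence and uniqueness of integral curves. Gluing these curves produces the maximal flow domain \(\mathcal O\), which is open, contains \(\{0\}\times\Theta\), and on which \(\psi\) is smooth.
\begin{itemize}
\item The identity \(\psi(0,\theta)=\theta\) and the generator identity \(D_t\psi(0,\theta)=\chi(\theta)\) hold by definition of the ODE.
\item For the composition law, fix \(s\) and compare the curves \(t\mapsto\psi(t+s,\theta)\) and \(t\mapsto\psi(t,\psi(s,\theta))\). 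Both solve the same ODE with the same initial value \(\psi(s,\theta)\), so uniqueness identifies them wherever both are defined.
\end{itemize}

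\emph{Converse: loss invariance.} This is the key step, though it is short. Fix \(\theta\) and any \((x,y)\), and set \(\phi(t)=\ell_{(x,y)}(\psi(t,\theta))\). Then
\[
\phi'(t)=\langle \nabla\ell_{(x,y)}(\psi(t,\theta)),\chi(\psi(t,\theta))\rangle = 0,
\]
because \(\nabla\ell_{(x,y)}(\psi(t,\theta))\in W_{\psi(t,\theta)}\) and \(\chi\) is orthogonal to that space. The domain \(\{t:(t,\theta)\in\mathcal O\}\) is an interval containing \(0\), so \(\phi\) is constant on it. Averaging over any finite dataset then gives \(L_{\mathcal D}(\psi(t,\theta))=L_{\mathcal D}(\theta)\).

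\emph{Completeness.} If \(\chi\) is complete, then \(\mathcal O=\R\times\Theta\). The flow is therefore a smooth \(\R\)-action on \(\Theta\) satisfying the group axioms globally, and each \(\psi(t,\cdot)\) is a bijection with inverse \(\psi(-t,\cdot)\). Combined with loss invariance, this makes \((\R,\psi)\) a symmetry of \(L\); equivalently one can cite Corollary~\ref{cor:1dsymmetrycond2}, since the generator lies in \(W^\perp\).

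The main obstacle is careful handling of domains in the local flow. Specifically, one must check that the time-slices of \(\mathcal O\) are intervals, so that constancy follows from a vanishing derivative, and that the composition law holds precisely when both sides are defined. For both points I would rely on the standard flow theorem rather than reprove it.
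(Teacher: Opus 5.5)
Your proposal is correct and follows the same route as the paper. The forward direction comes from Corollary~\ref{cor:1dsymmetrycond2}, existence and the group and generator identities come from the standard flow theorem, loss invariance comes from the orthogonality of \(\chi\) to \(W\), and the complete case again uses Corollary~\ref{cor:1dsymmetrycond2}. Your explicit computation of \(\frac{d}{dt}\ell_{(x,y)}(\psi(t,\theta))\), together with your remark that the time-slices of \(\mathcal O\) are intervals, fills in the one-line justification the paper gives for loss invariance.
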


The maximal local flow \(\psi\) as defined in Proposition \ref{prop:symmetry_vector_field} is known as a partial group action \citep{cheeger1986collapsing1,cheeger1990collapsing2,Batista2017partial}, meaning that its domain is a maximal neighborhood of \(\{0\} \times \Theta\), and the group action axioms are satisfied wherever it is defined. 
Hence, we call \((\R, \psi)\) a partial symmetry of the loss function \(L\). 
Since the infinitesimal symmetry condition \ref{cor:1dsymmetrycond2} is a local condition at \(t = 0\), Corollary \ref{cor:1dsymmetrycond}, Corollary \ref{cor:1dsymmetrycond2} and Proposition \ref{prop:symmetry_vector_field} still hold verbatim when applied to partial symmetries (see Appendix \ref{app:partial_symmetry_extension} for details).
The results can be summarized as follows:
\begin{itemize}
    \item There exists a one-to-one correspondence between vector fields in \(W^\perp\) and $1$-parameter partial symmetries of the loss function \(L\).
    The partial symmetry is a global symmetry precisely when its generating vector field is complete.
    \item There exists a one-to-one correspondence between closed vector fields in \(W^\perp\) and local conservation laws of the gradient flow dynamics, up to an additive constant. 
    The local conservation law is a global one when the parameter space \(\Theta\) is star-shaped.
\end{itemize} 
We term \(W^\perp\) the \emph{symmetry distribution}, as it captures the directions in which the parameters can be transformed without changing the loss value. 
The following result gives the exact correspondence between symmetries and conservation laws.
\begin{restatable}{corollary}{lawInducesSymmetry}
    \label{cor:law_induces_symmetry}
    Let \(h: \Theta \to \R\) be a conservation law of the gradient flow dynamics.
    There exists a unique partial symmetry \((\R, \psi)\) of the loss function \(L\) such that
    \[
        D_t \psi(0, \theta) = \nabla h(\theta), \quad \forall \theta.
    \]
    Conversely, let \((\R, \psi)\) be a partial symmetry of the loss function \(L\). If \(D_t \psi(0, \theta)\) is closed, then locally, there exists a unique conservation law \(h\), up to an additive constant, such that 
    \[
        \nabla h(\theta) = D_t \psi(0, \theta), \quad \forall \theta.
    \]
\end{restatable}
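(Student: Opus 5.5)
The plan is to obtain both directions by routing through the symmetry distribution \(W^\perp\), combining Proposition~\ref{prop:conservation_law_vector_field} with Proposition~\ref{prop:symmetry_vector_field}.

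\textbf{Forward direction.} Let \(h\) be a conservation law. By the first half of Proposition~\ref{prop:conservation_law_vector_field}, \(\chi := \nabla h\) is a closed vector field lying in \(W^\perp\). The second half of Proposition~\ref{prop:symmetry_vector_field} then gives a unique maximal local flow \(\psi\) with infinitesimal generator \(D_t\psi(0,\theta) = \chi(\theta) = \nabla h(\theta)\). This flow satisfies the partial group action axioms and the loss invariance condition, so it is a partial symmetry.

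Existence is clear from this construction. For uniqueness, suppose \((\R,\tilde\psi)\) is any partial symmetry with the same generator. Then the group law gives
\[
\frac{d}{dt}\tilde\psi(t,\theta) = \frac{d}{ds}\Big|_{s=0}\tilde\psi(s,\tilde\psi(t,\theta)) = \nabla h(\tilde\psi(t,\theta)),
\]
so \(\tilde\psi\) solves the same ODE as \(\psi\). By Picard--Lindel\"of uniqueness, \(\tilde\psi\) agrees with \(\psi\) wherever it is defined. Maximality of the flow domain in Proposition~\ref{prop:symmetry_vector_field} then identifies the two as partial symmetries.

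\textbf{Regularity, the main obstacle.} The expected difficulty is that Proposition~\ref{prop:symmetry_vector_field} is stated for smooth vector fields, whereas \(\nabla h\) is only as regular as \(h\). The first option is to assume \(h\) is smooth, or at least \(C^2\), so that \(\nabla h\) is \(C^1\) and hence locally Lipschitz. Local Lipschitz regularity is all the existence and uniqueness of the maximal flow actually use, so I would note that the proof of Proposition~\ref{prop:symmetry_vector_field} goes through verbatim under that hypothesis.

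\textbf{Converse direction.} Let \((\R,\psi)\) be a partial symmetry. By the first half of Proposition~\ref{prop:symmetry_vector_field}, which Appendix~\ref{app:partial_symmetry_extension} extends to partial symmetries, \(\chi := D_t\psi(0,\cdot)\) lies in \(W^\perp\). By hypothesis \(\chi\) is closed, so the second half of Proposition~\ref{prop:conservation_law_vector_field} provides, locally around each \(\theta\), a potential \(h\) with \(\nabla h = \chi\). This \(h\) is unique up to an additive constant.

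Since \(\nabla h \perp W_\theta\) on that neighborhood, the characterization in Proposition~\ref{prop:conservation-law-condition}, applied on the open neighborhood, shows that \(h\) is conserved along gradient flow trajectories while they remain there. In other words, \(h\) is a local conservation law, which is exactly what the statement asserts.
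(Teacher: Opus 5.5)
Your proposal is correct and follows the same route as the paper, which likewise proves both directions by passing through \(W^\perp\) with Propositions~\ref{prop:conservation_law_vector_field} and~\ref{prop:symmetry_vector_field}. The paper's two-line proof leaves implicit two things you spell out: the ODE-uniqueness argument that pins down the partial symmetry, and the requirement that \(\nabla h\) be locally Lipschitz (e.g.\ \(h \in C^2\)) for Proposition~\ref{prop:symmetry_vector_field} to apply; both are worth keeping.
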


The additional closedness assumption above suggests that there are more symmetries than conservation laws. 
To formalize this intuition, we introduce the 
following notions of infinitesimally independent and complete partial symmetries. 
The idea is to define a set of partial symmetries such that their infinitesimal actions span the symmetry distribution \(W^\perp\).

\begin{definition}
    A family of \(r\) partial symmetries \(\{\psi_1, \ldots, \psi_r\}\) of \(\R\) onto \(\Theta\) is said to be infinitesimally independent if the vectors \(\{D_t \psi_1(0, \xi), \ldots, D_t \psi_r(0, \xi)\}\) are linearly independent for all \(\theta \in \Theta\). 
    They are called infinitesimally complete if they are independent and their infinitesimal actions span the symmetry directions near \(\theta\). 
\end{definition}
 
We recall Theorem 3.3 of \citet{marcotte2024abidelawfollowflow}, which states that if the dimension of the Lie completion of the reparametrized gradient flow distribution \(\operatorname{dim}(\operatorname{Lie}(W_\phi)(\theta))\) is locally constant, 
then there are exactly \(D - \operatorname{dim}(\operatorname{Lie}(W_\phi)(\theta))\) independent conservation laws in a neighborhood of \(\theta\). 
The following
theorem generalizes this result by removing the need for a nice reparametrization of the parameters, as well as extending it to account for symmetries of the loss function on the regular rank strata of the gradient flow distribution.

\begin{restatable}{theorem}{lawSymmetryGap}
    \label{thm:law-symmetry-gap}
    Assume that the gradient flow distribution \(W\) is a smooth distribution of constant rank \(r\), and the Lie completion \(\operatorname{Lie}(W)\) is a smooth distribution of constant rank \(\overline r\) in \(\Theta\).
    Then, for every \(\theta \in \Theta\), there exists a neighborhood \(U \ni \theta\) such that the following statements hold:
    \begin{enumerate}[label = (\roman*)]
        \item There exists \(s := D - r\) partial symmetries of the loss function \(L\) that are independent on \(U\), and they are complete.
        \item There exist \(c := D - \overline r\) independent conservation laws of the gradient flow dynamics on \(U\), and they are complete. 
        \item The symmetry-conservation law gap is 
        \(s - c := \overline r - r \geq 0.\)
        Equality holds if and only if \(W\) is involutive, i.e., \(\operatorname{Lie}(W) = W\).
    \end{enumerate}
\end{restatable}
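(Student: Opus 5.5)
The plan is to reduce all three statements to pointwise linear algebra of the distributions \(W^\perp\) and \(\operatorname{Lie}(W)^\perp\), and then invoke the Frobenius theorem. Throughout I use Proposition~\ref{prop:symmetry_vector_field} (partial symmetries \(\leftrightarrow\) smooth fields in \(W^\perp\)) and Proposition~\ref{prop:conservation-law-condition} together with Proposition~\ref{prop:conservation_law_vector_field} (conservation laws \(\leftrightarrow\) gradients in \(W^\perp\)).

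\textbf{Part (i).} Since \(W\) is smooth of constant rank \(r\), near \(\theta\) there are smooth local frames \(w_1,\dots,w_r\) of \(W\), which I pick among the generators \(\nabla \ell_{(x,y)}\). Completing with constant vectors to a basis of \(\R^D\) and applying Gram--Schmidt gives a smooth orthonormal frame \(\chi_1,\dots,\chi_{D-r}\) of \(W^\perp\) on a neighborhood \(U\).

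By Proposition~\ref{prop:symmetry_vector_field}, each \(\chi_i\) generates a partial symmetry \(\psi_i\) with \(D_t\psi_i(0,\cdot)=\chi_i\). These are infinitesimally independent because the \(\chi_i\) are linearly independent, and they span \(W^\perp\), so they are complete. No family can be larger: any partial symmetry has generator in \(W^\perp\) by Corollary~\ref{cor:1dsymmetrycond2}, which holds for partial symmetries, and \(\dim W^\perp=D-r\).

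\textbf{Part (ii).} First I show that the generator of any conservation law lies in \(\operatorname{Lie}(W)^\perp\). If \(\nabla h\perp W\), then \(X h=0\) for every smooth section \(X\) of \(W\). Hence \([X,Y]h = X(Yh)-Y(Xh)=0\), and inductively every iterated bracket annihilates \(h\), so \(\nabla h\perp \operatorname{Lie}(W)\). This gives the upper bound \(c\le D-\overline r\) on functionally independent laws near \(\theta\).

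For existence, \(\operatorname{Lie}(W)\) is involutive by construction and of constant rank \(\overline r\). The Frobenius theorem then gives a flat chart \((y_1,\dots,y_D)\) on some \(U\) in which \(\operatorname{Lie}(W)=\operatorname{span}\{\partial_{y_1},\dots,\partial_{y_{\overline r}}\}\). Set \(h_j=y_{\overline r+j}\) for \(j=1,\dots,D-\overline r\). Their gradients are independent, and each \(h_j\) is annihilated by \(\operatorname{Lie}(W)\supseteq W\), so it is a conservation law on \(U\) by Proposition~\ref{prop:conservation-law-condition} applied locally.

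For completeness, suppose \(h\) is any other local conservation law. Then \(\nabla h\in\operatorname{Lie}(W)^\perp=\operatorname{span}\{\nabla h_j\}\), so Proposition~\ref{prop:independentfunctions} expresses \(h\) locally as a function of the \(h_j\).

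\textbf{Part (iii).} The gap is \(s-c=(D-r)-(D-\overline r)=\overline r-r\). It is nonnegative since \(W\subseteq \operatorname{Lie}(W)\). Because both distributions have constant rank, \(\overline r=r\) holds iff \(W=\operatorname{Lie}(W)\). That in turn holds iff \(W\) is closed under brackets, i.e.\ involutive.

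\textbf{Main obstacle.} The delicate step is making the Frobenius and bracket arguments rigorous when \(W\) is defined as a span of possibly infinitely many generators \(\nabla\ell_{(x,y)}\). I must check that \(\operatorname{Lie}(W)\) is generated by brackets of smooth local sections, and that these sections are combinations of finitely many generators with smooth coefficients near \(\theta\). The constant-rank hypothesis is what provides the latter: it lets me pick \(r\) generators spanning \(W\) on a neighborhood. I would settle this first.

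A secondary point is the meaning of "conservation law" in (ii). Proposition~\ref{prop:conservation-law-condition} is global, so the claim must be interpreted as \(\nabla h_j\perp W\) on \(U\), i.e.\ conservation for trajectories while they remain in \(U\).
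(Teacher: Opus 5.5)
Your proposal is correct and follows essentially the same route as the paper: a local frame of \(W^\perp\) integrated via Proposition~\ref{prop:symmetry_vector_field} for (i), Frobenius flat coordinates for \(\operatorname{Lie}(W)\) with the transverse coordinates taken as conservation laws for (ii), and the rank comparison \(W \subseteq \operatorname{Lie}(W)\) for (iii). You also fill in two steps the paper only asserts: the bracket argument that \(\nabla h \perp \operatorname{Lie}(W)\) for every conservation law, and the appeal to Proposition~\ref{prop:independentfunctions} to upgrade infinitesimal spanning to local functional dependence (for a law that is merely \(C^1\), run the bracket step through the commutator of the flows of \(X\) and \(Y\) rather than the identity \([X,Y]h = X(Yh)-Y(Xh)\), which needs \(C^2\)).
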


One consequence of Theorem~\ref{thm:law-symmetry-gap} is that, if \(W\) is not involutive, then the conservation laws at initalization can not 
determine the specific local minimum that the gradient flow dynamics might converge to. 
Nevertheless, they still provide an initialization-dependent implicit bias that can be used to determine important properties of this minimum. 
We demonstrate these ideas for a two-layer linear network in the simple regression task of learning a scaling function in Appendix \ref{app:learning_to_scale}, where we show that 
the singular values and condition number of the learned weight matrices are determined by the conservation laws at initialization.

\section{Inheritance Principle for Symmetries and Conservation Laws} 
\label{sec:multilayer}

While prior work has made significant progress in understanding the symmetries and conservation laws 
in individual layers, 
a fundamental challenge has been to understand how these are inherited by a larger network that contains the subnetwork as a component, 
and whether the inherited symmetries and conservation laws are complete. 
In this section, we connect the framework developed in Section~\ref{sec:general} to functional symmetry and parameter identifiability, and obtain general inheritance and completeness results for symmetries and conservation laws of multilayer networks.

Recall that every functional symmetry of the model \(G\) is also a symmetry of the loss function \(L\). 
If the loss function \(L\) is sufficiently well-behaved, then the converse is also true, i.e., every loss symmetry of \(L\) is also a functional symmetry of \(G\). 
One way to formalize this notion is through the following Assumption \ref{as:separability}, which we term the \textit{separability of predictions} property.
This is a natural condition that is satisfied by many loss functions (see Appendix \ref{app:loss_assumption}). 
Based on this, we obtain Proposition \ref{prop:separability-symmetry}, which allows us to utilize the framework developed in the previous section to obtain our inheritance and completeness results.
\begin{restatable}[Separability of predictions]{assumption}{separability} 
    \label{as:separability}
    The loss function \(\ell: \mathcal Z \times \mathcal Y \to \R\) satisfies
    \[\ell(z, y) = \ell(z', y), \forall y \in \mathcal Y \implies z = z'.\]
\end{restatable} 

\begin{restatable}{proposition}{separabilitySymmetry}
    \label{prop:separability-symmetry}
    If the loss function \(\ell\) satisfies Assumption \ref{as:separability}, then any partial symmetry of the loss function \(L\) is also a partial functional symmetry of the model \(G\).
\end{restatable}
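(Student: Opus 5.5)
The idea is to use the fact that a partial symmetry of \(L\) is, by definition, a loss symmetry for \emph{every} dataset \(\mathcal D\), in particular for every single-sample dataset \(\mathcal D = ((x,y))\). For such a dataset the empirical loss is just \(\ell_{(x,y)}(\theta) = \ell(G(\theta,x),y)\). So the plan is to fix \(\theta\) and \(t\) with \(\psi(t,\theta)\) defined, fix an input \(x \in \mathcal X\), and apply the loss invariance condition with \(\mathcal D = ((x,y))\). This gives
\[
\ell\bigl(G(\psi(t,\theta),x),\,y\bigr) = \ell\bigl(G(\theta,x),\,y\bigr) \quad \text{for all } y \in \mathcal Y.
\]
Here I would invoke the invariance directly for partial symmetries, as justified after Proposition~\ref{prop:symmetry_vector_field} and in Appendix~\ref{app:partial_symmetry_extension}.

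Next, set \(z = G(\psi(t,\theta),x)\) and \(z' = G(\theta,x)\). The display says \(\ell(z,y) = \ell(z',y)\) for all \(y\). Assumption~\ref{as:separability} then gives \(z = z'\), that is, \(G(\psi(t,\theta),x) = G(\theta,x)\). Since \(x\), \(\theta\) and \(t\) were arbitrary (within the domain \(\mathcal O\) of the partial action), \(\psi\) preserves the model function wherever it is defined. Together with the group action axioms, which \(\psi\) already satisfies, this makes \(\psi\) a partial functional symmetry of \(G\).

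The only subtle point is the quantifiers. One must make sure that ``symmetry of the loss \(L\)'' means invariance of \(L_{\mathcal D}\) for every finite dataset, including size-one datasets, and that the domain \(\mathcal O\) of the partial flow does not depend on \(\mathcal D\). The latter holds because \(\psi\) is a single partial action on \(\Theta\), independent of the data. If instead a symmetry is only given infinitesimally, I would first use Corollary~\ref{cor:1dsymmetrycond2}: \(\chi \perp W_\theta\) implies \(\langle \chi, \nabla \ell_{(x,y)}\rangle = 0\) for each pair \((x,y)\). Integrating along the flow lines then gives invariance of each single-sample loss \(\ell_{(x,y)}\), and the argument above proceeds unchanged.
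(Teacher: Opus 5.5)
Your proposal is correct and follows the paper's proof essentially verbatim. Loss invariance for every pair \((x,y)\) gives \(\ell(G(\psi(t,\theta),x),y)=\ell(G(\theta,x),y)\) for all \(y\), and Assumption~\ref{as:separability} then forces \(G(\psi(t,\theta),x)=G(\theta,x)\). Your reduction to single-sample datasets, and your remark that the domain of \(\psi\) does not depend on the data, make explicit a step the paper leaves implicit.
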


Our first result, Proposition \ref{prop:inheritance_1}, 
establishes a general inheritance result for symmetries and conservation laws of subnetworks embedded in a larger network. 
Proposition J.3 by \cite{marcotte2025transformativeconservativeconservationlaws} formulated an inheritance result for conservation laws arising from invariant transformations for serial compositions. 
Building on the same underlying principle, we formulate our result directly in terms of preservation of functional equivalence. 
Our result naturally covers a broad class of compositional structures, including serial composition, sum, concatenation, Hadamard product, outer product, graph aggregation, etc. of subnetworks (see Table \ref{tab:inheritance_examples} in Appendix \ref{subsec:proof-inheritance-1} for further details). 

\begin{restatable}[Inheritance from subnetworks I]{proposition}{inheritanceI}
\label{prop:inheritance_1}
Let $g:\Theta\times\mathcal X^g\to\mathcal Z^g$ and
$G:(\Theta\times\Omega)\times\mathcal X\to\mathcal Z$
satisfy
\begin{equation}
    \label{eqn:indentifiability_1}
    g(\theta,\cdot)=g(\theta',\cdot)
    \quad\Longrightarrow\quad
    G((\theta,\omega),\cdot)=G((\theta',\omega),\cdot)
\qquad\text{for every }\omega\in\Omega.
\end{equation}
Every partial symmetry \((\R, \psi)\) or conservation law \(h\) of $g$ for a loss satisfying Assumption~\ref{as:separability} extends to a partial symmetry \((\R, \tilde \psi)\) or conservation law \(\tilde h\) of $G$ for any loss function via
\begin{equation}
    \label{eqn:inheritance_extension}
    \tilde\psi\left(t, (\theta,\omega)\right)=(\psi(t, \theta),\omega),
    \qquad
    \text{or}
    \qquad
    \tilde h(\theta,\omega)=h(\theta).
\end{equation}
\end{restatable}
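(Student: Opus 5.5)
The plan is to treat the symmetry statement directly and then derive the conservation-law statement through the vector-field correspondence of Section~\ref{sec:general}.

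\emph{Step 1 (symmetries).} Let \((\R,\psi)\) be a partial symmetry of the loss for \(g\) under a loss satisfying Assumption~\ref{as:separability}. By Proposition~\ref{prop:separability-symmetry}, \(\psi\) is a partial functional symmetry of \(g\), that is,
\[
g(\psi(t,\theta),\cdot)=g(\theta,\cdot)
\]
wherever \(\psi(t,\theta)\) is defined. Hypothesis (\ref{eqn:indentifiability_1}) with \(\theta'=\psi(t,\theta)\) then gives, for every \(\omega\),
\[
G((\psi(t,\theta),\omega),\cdot)=G((\theta,\omega),\cdot).
\]
So \(\tilde\psi\) is a partial functional symmetry of \(G\), and hence a partial loss symmetry of \(G\) for any loss \(\ell\), since \(\ell(G(\tilde\psi(t,\cdot),x),y)=\ell(G(\cdot,x),y)\). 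It remains to check that \(\tilde\psi\) is a partial group action. The group axioms hold componentwise. Its domain \(\{(t,(\theta,\omega)):(t,\theta)\in\mathcal O\}\) is open and contains \(\{0\}\times\Theta\times\Omega\). Smoothness is inherited from \(\psi\).

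\emph{Step 2 (conservation laws).} Let \(h\) be a conservation law of \(g\). By Proposition~\ref{prop:conservation_law_vector_field}, \(\nabla h\) is a closed field in \(W^{g,\perp}\). By Corollary~\ref{cor:law_induces_symmetry}, it generates a partial symmetry \(\psi\) of the \(g\)-loss with \(D_t\psi(0,\theta)=\nabla h(\theta)\). Step 1 then shows that \(\tilde\psi(t,(\theta,\omega))=(\psi(t,\theta),\omega)\) is a partial symmetry of the \(G\)-loss. By the partial-symmetry version of Corollary~\ref{cor:1dsymmetrycond2}, its generator \((\nabla h(\theta),0)\) is orthogonal to \(W^G_{(\theta,\omega)}\). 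This generator is exactly \(\nabla\tilde h(\theta,\omega)\). Proposition~\ref{prop:conservation-law-condition} therefore shows that \(\tilde h\) is a conservation law of \(G\).

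If one prefers to avoid flows, there is a more direct route. Differentiate the identity \(G((\psi(t,\theta),\omega),x)=G((\theta,\omega),x)\) at \(t=0\) to get \(D_\theta G\,\nabla h=0\). Consequently \(\langle \nabla\ell_{(x,y)}(\theta,\omega),(\nabla h,0)\rangle=0\) for every \((x,y)\).

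\emph{Main obstacle.} The delicate point is regularity. Step 2 needs a \(C^1\) flow generated by \(\nabla h\), so \(h\) should be \(C^2\), or \(\nabla h\) locally Lipschitz. If \(h\) is only \(C^1\), I would first try to argue directly from functional equivalence along the gradient-flow trajectories of \(g\). That fails because those trajectories depend on data. So I would instead state or assume \(h\in C^2\), consistent with the smoothness conventions of Section~\ref{sec:general}.

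A second check is that Proposition~\ref{prop:separability-symmetry} is applied only to the auxiliary loss for \(g\), while the conclusion for \(G\) holds for arbitrary \(\ell\). This works because we pass through functional symmetry of \(G\), which is loss-independent.
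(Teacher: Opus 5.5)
Your proposal is correct and follows the paper's proof essentially step for step: Proposition~\ref{prop:separability-symmetry} together with hypothesis~(\ref{eqn:indentifiability_1}) handles symmetries, and for conservation laws you pass to the partial symmetry generated by \(\nabla h\) via Corollary~\ref{cor:law_induces_symmetry}, whose extended generator \((\nabla h,0)=\nabla\tilde h\) is orthogonal to the gradient flow distribution of \(G\). The regularity caveat you raise is implicit in the paper's proof too, since Proposition~\ref{prop:symmetry_vector_field} is stated for smooth fields, and it can in fact be removed: only the existence (not uniqueness) of a \(C^1\) integral curve \(\gamma\) of \(\nabla h\) through each point is needed, which Peano's theorem provides for \(h\in C^1\), because every single-sample loss of \(g\) is constant along \(\gamma\), hence every loss of \(G\) is constant along \((\gamma(t),\omega)\), and differentiating at \(t=0\) gives the orthogonality.
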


Our second result, Proposition \ref{prop:inheritance_2}, shows that if the functional equivalence of the larger network forces the functional equivalence of its subnetworks, then the symmetries and conservation laws of the larger network must be infinitesimally generated by the natural extensions of the symmetries and conservation laws of its subnetworks. 

\begin{restatable}[Inheritance from subnetworks II]{proposition}{inheritanceII}
    \label{prop:inheritance_2}
    Let $g^j:\Theta^j\times\mathcal X^{g^j}\to\mathcal Z^{g^j}, j = 1, 2$ and
    $G:(\Theta^1 \times \Theta^2)\times\mathcal X\to\mathcal Z$
    satisfy
    \begin{equation}
        \label{eqn:identifiability_2}
        [G((\theta_1, \theta_2), \cdot) = G((\theta'_1, \theta'_2), \cdot)] \implies [g^j(\theta_j, \cdot) = g^j(\theta'_j, \cdot), \forall j = 1, 2].
    \end{equation}
    For \(j = 1, 2\), let \((\R, \psi^j_i)_{i=1}^{s_j}\) be a complete set of partial functional symmetries of \(g^j\), and let \(h^j_1, \ldots, h^j_{c_j}\) be a complete set of conservation laws of \(g^j\) 
    w.r.t.\ a loss satisfying Assumption~\ref{as:separability}.

    Every partial symmetry \((\R, \psi)\) or conservation law \(h\) of \(G\) with respect to a loss satisfying Assumption \ref{as:separability} is infinitesimally generated by the extensions 
    \[\tilde \psi^1_i(t, (\theta_1, \theta_2)) = (\psi^1_i(t, \theta_1), \theta_2), 
    \quad
    \tilde \psi^2_i(t, (\theta_1, \theta_2)) = (\theta_1, \psi^2_i(t, \theta_2)),
    \quad
    \text{or}
    \quad 
    \tilde h^j_i(\theta_1, \theta_2) = h^j_i(\theta_j),\]
    respectively.
    The extensions themselves need not be partial symmetries nor conservation laws of \(G\).
\end{restatable}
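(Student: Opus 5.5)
The plan is to work entirely at the level of the symmetry distribution, using Proposition~\ref{prop:separability-symmetry} and the correspondences of Section~\ref{sec:general}. Write \(W^G\) for the gradient flow distribution of \(G\) and \(W^{g^j}\) for that of \(g^j\), all taken with respect to losses satisfying Assumption~\ref{as:separability}. The target is the inclusion
\[
(W^G_{(\theta_1,\theta_2)})^\perp \subseteq (W^{g^1}_{\theta_1})^\perp \times (W^{g^2}_{\theta_2})^\perp .
\]
Granting this, the claim follows quickly. Completeness of \(\{\psi^j_i\}\) means their generators span \((W^{g^j})^\perp\) near each point, so the extended generators \(D_t\tilde\psi^j_i(0,\cdot)\) span the right-hand side. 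Hence the generator of any partial symmetry \(\psi\) of \(G\), which lies in \((W^G)^\perp\) by Proposition~\ref{prop:symmetry_vector_field}, is a smooth-coefficient combination of them. That is what "infinitesimally generated" means here.

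For conservation laws, \(\nabla h\) of a conservation law of \(G\) lies in \((W^G)^\perp\) by Proposition~\ref{prop:conservation-law-condition}. By the inclusion, its \(\theta_j\)-block \(\nabla_{\theta_j} h\) lies in \((W^{g^j}_{\theta_j})^\perp\). Completeness of \(h^j_1,\dots,h^j_{c_j}\), together with the argument behind Theorem~\ref{thm:law-symmetry-gap}(ii), places these gradients in the span of the \(\nabla h^j_i\). Proposition~\ref{prop:independentfunctions} then writes \(h\) locally as \(f(\tilde h^1_1,\dots,\tilde h^2_{c_2})\).

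The key step is proving the inclusion. Take \(\chi\in(W^G)^\perp\) near a point, with components \(\chi=(\chi_1,\chi_2)\). Let \(\psi\) be its local flow from Proposition~\ref{prop:symmetry_vector_field}. This is a partial symmetry of the loss of \(G\), hence, by Proposition~\ref{prop:separability-symmetry}, a partial functional symmetry of \(G\). So \(G(\psi(t,\theta),\cdot)=G(\theta,\cdot)\), and hypothesis~(\ref{eqn:identifiability_2}) gives \(g^j(\psi_j(t,\theta),\cdot)=g^j(\theta_j,\cdot)\) for each \(j\) and small \(t\). Differentiating at \(t=0\) yields \(D_{\theta_j}g^j(\theta_j,x)\,\chi_j(\theta)=0\) for all \(x\). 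Pairing with \(\nabla_z\ell\) gives \(\chi_j(\theta)\perp W^{g^j}_{\theta_j}\).

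Since any vector in \((W^G)^\perp\) at a point extends locally to a smooth section (assuming \(W^G\) has locally constant rank, as in Theorem~\ref{thm:law-symmetry-gap}), this pointwise argument yields the inclusion.

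The main obstacle is the conservation-law half. There the generators are forced into a sum of two blocks, but closedness of \(\nabla h\) does not obviously restrict each block to a combination of \(\nabla h^j_i\) alone, since \((W^{g^j})^\perp\) may be strictly larger than the span of the conservation-law gradients (the symmetry gap). I would try to show that, with \(\theta_{3-j}\) held fixed, \(\theta_j\mapsto h(\theta_1,\theta_2)\) is itself a local conservation law of \(g^j\): its gradient lies in \((W^{g^j})^\perp\) by the inclusion. If the flow of \(g^j\) keeps \(\theta_{3-j}\) frozen, which it does, this gives \(\nabla_{\theta_j}h\in\operatorname{span}\{\nabla h^j_i\}\) pointwise. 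This is exactly what Proposition~\ref{prop:independentfunctions} needs.

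The closing remark, that the extensions themselves need not be symmetries or conservation laws of \(G\), needs no proof. It only warns that the inclusion above may be strict.
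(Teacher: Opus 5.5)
Your proposal is correct and follows essentially the same route as the paper. You pass from a loss symmetry of $G$ to a functional symmetry via Proposition~\ref{prop:separability-symmetry}, apply the identifiability hypothesis and differentiate at $t=0$ to place each block of the generator among the symmetry directions of $g^j$, and for conservation laws freeze $\theta_{3-j}$ so that $h(\cdot,\theta_{3-j})$ becomes a conservation law of $g^j$ to which completeness applies. The differences are minor: the paper applies the argument directly to the given generator $D_t\psi(0,\cdot)$ or to $\nabla h$ (via Corollary~\ref{cor:law_induces_symmetry}), which are already smooth sections of $(W^G)^\perp$, so your constant-rank extension step is unnecessary, and your final appeal to Proposition~\ref{prop:independentfunctions} goes slightly beyond the stated ``infinitesimally generated'' conclusion.
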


Finally, by combining the previous two results, we obtain a completeness result for symmetries and conservation laws of multilayer networks.

\begin{restatable}[Inheritance from subnetworks III]{theorem}{inheritanceIII}
     \label{thm:inheritance_3}
    Let $g^j:\Theta^j\times\mathcal X^{g^j}\to\mathcal Z^{g^j}, j = 1, \ldots, k$ and
    $G:(\Theta^1 \times \ldots \times \Theta^k)\times\mathcal X\to\mathcal Z$
    satisfy
    \begin{equation}
        \label{eqn:identifiability_3}
        [G((\theta_1, \ldots, \theta_k), \cdot) = G((\theta'_1, \ldots, \theta'_k), \cdot)] \iff [g^j(\theta_j, \cdot) = g^j(\theta'_j, \cdot), \forall j = 1, \ldots, k].
    \end{equation}
    For \(j = \overline{1, k}\), let \((\R, \psi^j_i)_{i=1}^{s_j}\) be a complete set of partial functional symmetries of \(g^j\), 
    and let \(h^j_1, \ldots, h^j_{c_j}\) be a complete set of conservation laws of \(g^j\) 
    w.r.t.\ a loss satisfying Assumption~\ref{as:separability}. 
    
    Assume that the loss used for the larger network \(G\) also satisfies Assumption \ref{as:separability}. 
    A complete set of partial functional symmetries of \(G\) is given by the extensions
    \[
    \tilde \psi^j_i(t, (\theta_1, \ldots, \theta_k)) = (\theta_1, \ldots, \psi^j_i(t, \theta_j), \ldots, \theta_k), \qquad j = \overline{1, k},\; i = \overline{1, s_j},
    \]
    and a complete set of conservation laws of \(G\) is given by
    \[\tilde h^j_i(\theta_1, \ldots, \theta_k) = h^j_i(\theta_j), \qquad j =\overline{1, k},\; i = \overline{1, c_j}.
    \]
\end{restatable}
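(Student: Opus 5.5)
The plan is to combine the two previous inheritance results, reading the biconditional (\ref{eqn:identifiability_3}) as two implications. \textbf{Extension step.} The backward implication is exactly hypothesis (\ref{eqn:indentifiability_1}) of Proposition~\ref{prop:inheritance_1}, applied with $g = g^j$, $\theta = \theta_j$ and $\omega = (\theta_l)_{l \neq j}$. Hence every $\tilde\psi^j_i$ is a partial symmetry of $G$, and every $\tilde h^j_i$ is a conservation law of $G$. By Proposition~\ref{prop:separability-symmetry}, with the loss for $G$ satisfying Assumption~\ref{as:separability}, the $\tilde\psi^j_i$ are moreover partial functional symmetries of $G$. \textbf{Generation step.} The forward implication is hypothesis (\ref{eqn:identifiability_2}) of Proposition~\ref{prop:inheritance_2}, extended from two blocks to $k$ blocks. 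Either induct on $k$ by grouping $(\theta_2,\dots,\theta_k)$ into one block with model $(g^2,\dots,g^k)$, or simply rerun the proof with $k$ factors. Then every partial symmetry or conservation law of $G$ is infinitesimally generated by the extensions.

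It remains to check that the extended families are complete, i.e. independent and spanning. \textbf{Independence.} At each point, the generators $D_t\tilde\psi^j_i(0,\theta) = (0,\dots,D_t\psi^j_i(0,\theta_j),\dots,0)$ occupy disjoint coordinate blocks. Within each block they are linearly independent by the independence of the $\psi^j_i$. So the whole family is infinitesimally independent. The gradients $\nabla\tilde h^j_i$ are block-supported in the same way, so the conservation laws are functionally independent in the sense of Definition~\ref{def:independent_laws}.

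\textbf{Spanning.} For conservation laws, I will use the characterization from Proposition~\ref{prop:conservation-law-condition} together with the generation step. Any conservation law $h$ of $G$ has $\nabla h(\theta)$ in the span of the $\nabla\tilde h^j_i(\theta)$. Proposition~\ref{prop:independentfunctions} then expresses $h$ locally as a function of the $\tilde h^j_i$, so no further independent law exists. For symmetries, the generator of any partial symmetry lies pointwise in the span of the $D_t\tilde\psi^j_i(0,\cdot)$. Since each $\tilde\psi^j_i$ is itself a symmetry, this span is exactly $W_G^\perp$, which gives infinitesimal completeness.

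\textbf{Main obstacle.} I expect the hardest step to be the $k$-block version of Proposition~\ref{prop:inheritance_2}. The crux is to show that $W_G^\perp(\theta) \subseteq \bigoplus_j W_{g^j}^\perp(\theta_j)$. Given $\chi \in W_G^\perp$, Proposition~\ref{prop:symmetry_vector_field} provides its local flow, which preserves $L$. By separability (Proposition~\ref{prop:separability-symmetry}) this flow preserves $G$. The forward implication then says it preserves every $g^j(\theta_j(t),\cdot)$. Differentiating at $t=0$ shows that the $j$-th block $\chi_j$ annihilates $D_{\theta_j}g^j$. Reversing separability for $g^j$ places $\chi_j$ in $W_{g^j}^\perp$, which is spanned by the $D_t\psi^j_i(0,\cdot)$ by completeness. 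The one delicate point is the reversal: I need $\chi_j \perp D_{\theta_j} g^j$ to imply $\chi_j \perp W_{g^j}$. This holds because $W_{g^j}$ is spanned by vectors of the form $D g^{\top}\nabla_z\ell$. With the block decomposition established, the two-block argument carries over verbatim.
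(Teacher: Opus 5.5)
Your proposal is correct and follows the paper's proof. The backward implication of (\ref{eqn:identifiability_3}) feeds Proposition~\ref{prop:inheritance_1}, the forward implication feeds a $k$-block version of Proposition~\ref{prop:inheritance_2}, and independence follows from the disjoint block supports; you also spell out details the paper leaves implicit, such as why $D_{\theta_j}g^j\chi_j=0$ places $\chi_j$ in $W_{g^j}^\perp$ and the appeal to Proposition~\ref{prop:independentfunctions}.
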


We call \eqref{eqn:identifiability_3} the \emph{compositional identifiability} condition. 
The implication is 
that if the larger network is compositionally identifiable, then all of its symmetries and conservation laws are inherited from its component subnetworks.

\section{Application to Neural Networks}
\label{sec:application}

We demonstrate the utility of our results by applying them to several neural network architectures.

The first is grouped query attention (GQA) \citep{ainslie2023gqa}, which is widely used in modern transformer architectures for fast inference with KV caching.
\cite{marcotte2025transformativeconservativeconservationlaws} posed completeness of the conservation laws of the standard multi-head self-attention (MHSA) mechanism \citep{vaswani2023attentionneed} 
as an open problem. 
Later, \cite{tran2026conservation} provided a 
formulation and proof of this result. 
However, the proof 
does not provide an explicit justification that, near regular points, every solution of the derived PDE system is locally a function of the known conservation laws. 
Thus, the argument as written leaves the local completeness step unestablished. 

Using our framework, we provide a proof for the complete characterization of the symmetries as well as conservation laws of the GQA mechanism in Proposition \ref{prop:grouped-query-attention}, which implies completeness for the MHSA mechanism as a special case.
The proof is short and instructive, relying on results on completeness for two-layer linear networks \citep{marcotte2024abidelawfollowflow} and identifiability of attention heads \citep{tran2025equivariantneuralfunctionalnetworks}. 

\begin{restatable}[Grouped query attention]{proposition}{groupedQueryAttention}
    \label{prop:grouped-query-attention}
    Let \(n_{H} = k n_{G}\) and \(D_h \leq D\) be positive integers.
    Consider the grouped query attention mechanism \(G\) with \(n_H\) heads and \(n_G\) groups with parameters
    \[\theta = (Q^j_1, \ldots, Q^j_{k}, K^j, V^j, O^j_1, \ldots, O^j_k)_{j=1}^{n_G} \in \left(\left(\R^{D \times D_h}\right)^{k} \times \R^{D \times D_h} \times \R^{D \times D_h} \times \left(\R^{D \times D_h}\right)^{k}\right)^{n_G},\]
    where for any \(X \in \cup_{L = 1}^{\infty} \R^{L \times D}\),
    \[G(\theta, X) = \sum_{j=1}^{n_G} \sum_{i=1}^{k} \operatorname{softmax}\left(\frac{X Q^j_i (K^j)^\top X^\top}{\sqrt{D_h}}\right) X V^j (O^j_i)^\top.\]
    Let \(\theta_0\) be a point in the parameter space so that \(Q^j_i, K^j, V^j, O^j_i\) are full rank and the matrices \(Q^j_i (K^j)^\top\) are pairwise distinct for \(j = \overline{1, n_G}, i = \overline{1, k}\).
    Locally around \(\theta_0\), the following hold:
    \begin{itemize}
    \item All functionally equivalent parameters of the grouped query attention mechanism \(G\) are given by the following partial group action of \((\operatorname{GL}_{D_h}(\R) \times \operatorname{GL}_{D_h}(\R))^{n_G}\):
    \[
        (S^j, T^j)_{j=1}^{n_G} \cdot (Q^j_1, \ldots, Q^j_{k}, K^j, V^j, O^j_1, \ldots, O^j_k)_{j=1}^{n_G} 
        = (\tilde Q^j_1, \ldots, \tilde Q^j_{k}, \tilde K^j, \tilde V^j, \tilde O^j_1, \ldots, \tilde O^j_k)_{j=1}^{n_G},
    \]
    where
    \[
        \tilde Q^j_i = Q^j_i S^j, \quad 
        \tilde K^j = K^j (S^j)^{- \top}, \quad
        \tilde V^j = V^j T^j, \quad 
        \tilde O^j_i = O^j_i (T^j)^{-\top}.
    \]
    \item 
    The upper triangular elements of the following matrix functions form a complete set of conservation laws with respect to any loss satisfying Assumption \ref{as:separability}:
    \[\sum_{i=1}^{k} (Q_i^j)^\top Q^j_i - (K^j)^\top K^j, \quad (V^j)^\top V^j - \sum_{i=1}^k (O^j_i)^\top O^j_i, \quad j \in \overline{1, n_G}.\]
    \end{itemize}
    The case 
    \(n_{H} = n_{G}\) corresponds to 
    standard multi-head self-attention, 
    and all results still hold.
\end{restatable}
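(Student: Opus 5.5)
The plan is to apply Theorem~\ref{thm:inheritance_3} with \(k = 2n_G\) component subnetworks. For each group \(j\) there are two of them. The first is the ``query--key'' network \(g^{j,QK}((Q^j_1,\ldots,Q^j_k,K^j), \cdot)\), whose realized function is the tuple of bilinear forms \((Q^j_i (K^j)^\top)_{i=1}^k\); equivalently it is a two-layer linear network \(x \mapsto [Q^j_1;\ldots;Q^j_k](K^j)^\top x\) with a shared second factor. The second is the ``value--output'' network \(g^{j,VO}((V^j,O^j_1,\ldots,O^j_k),\cdot)\), realizing \(x \mapsto [O^j_1;\ldots;O^j_k](V^j)^\top x\). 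Stacking the \(Q^j_i\) into \(\bar Q^j \in \R^{kD\times D_h}\) and the \(O^j_i\) into \(\bar O^j\) makes each component exactly a two-layer linear network \(\bar Q^j (K^j)^\top\) or \(\bar O^j (V^j)^\top\) with inner width \(D_h\). Full rank of the factors is inherited from \(\theta_0\), since \(\bar Q^j\) has rank \(D_h\) whenever some \(Q^j_i\) does. For such networks, \cite{marcotte2024abidelawfollowflow} gives the complete set of conservation laws at full-rank points: the upper triangular entries of \(\bar U^\top \bar U - \bar V^\top \bar V\). Written back in block form these are exactly \(\sum_i (Q^j_i)^\top Q^j_i - (K^j)^\top K^j\) and \((V^j)^\top V^j - \sum_i (O^j_i)^\top O^j_i\). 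The classical fact that \(UV^\top = U'V'^\top\) with full-rank factors forces \(U' = US\), \(V' = V S^{-\top}\) gives the complete local functional symmetries \(\operatorname{GL}_{D_h}\) for each component, which yields the stated \((S^j,T^j)\) action.

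It remains to verify the compositional identifiability condition \eqref{eqn:identifiability_3} locally around \(\theta_0\). The direction ``\(\Leftarrow\)'' is immediate: \(G\) depends on the parameters only through the products \(Q^j_i (K^j)^\top\) and \(V^j (O^j_i)^\top\), since \(G = \sum_{j,i} \operatorname{softmax}(X Q^j_i (K^j)^\top X^\top/\sqrt{D_h})\, X V^j (O^j_i)^\top\).

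The direction ``\(\Rightarrow\)'' is the main obstacle. Here I would invoke the head-identifiability result of \cite{tran2025equivariantneuralfunctionalnetworks} for MHSA, treating the GQA model as an MHSA with \(n_H\) heads \((A_h, B_h) = (Q^j_i (K^j)^\top, V^j (O^j_i)^\top)\). Their result says that if the \(A_h\) are pairwise distinct and the \(B_h\) are nonzero, then equality of the realized functions forces equality of the multisets \(\{(A_h,B_h)\}\) up to permutation of heads. Locally around \(\theta_0\) the \(A_h\) stay pairwise distinct and the \(B_h\) stay nonzero, because \(O^j_i\) and \(V^j\) have full rank \(D_h \le D\). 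By continuity, a nearby \(\theta'\) realizing the same function must match heads via the identity permutation: distinct matrices are separated by a positive distance, so for \(\theta'\) close enough no nontrivial permutation is possible. This gives \(Q'^j_i K'^{j\top} = Q^j_i K^{j\top}\) and \(V'^j O'^{j\top}_i = V^j O^{j\top}_i\) for all \(i,j\), i.e.\ equality of every component function \(g^{j,QK}\) and \(g^{j,VO}\).

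The care needed is that identifiability and completeness are only local. I would restrict \(\Theta\) to a small neighborhood of \(\theta_0\) on which full rank and pairwise distinctness hold, and apply Theorem~\ref{thm:inheritance_3} to this restricted model. One must also confirm that the loss satisfies Assumption~\ref{as:separability}, and that the two-layer completeness result applies to the stacked form, where the gradient distribution of \(\bar U \bar V^\top\) is unchanged by block partitioning. Theorem~\ref{thm:inheritance_3} then yields both bullet points, and setting \(k=1\) (so \(n_H = n_G\)) recovers standard MHSA.
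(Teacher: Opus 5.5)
Your proposal is correct and follows essentially the same route as the paper: restrict to a product neighborhood of $\theta_0$ where the attention matrices stay in disjoint balls, so heads cannot be permuted. Then use the head-identifiability theorem of \cite{tran2025equivariantneuralfunctionalnetworks} to get compositional identifiability for the $2n_G$ stacked two-layer factorizations, and combine Theorem~\ref{thm:inheritance_3} with the completeness result of \cite{marcotte2024abidelawfollowflow} for $UV^\top$. The differences are minor: you apply the full-rank factorization to the stacked factors, which gives a single $S^j, T^j$ directly (the paper factors head by head and then merges using full rank of $K^j, V^j$), and your orientation $\bar O^j (V^j)^\top$ gives the stated $VO$ law only up to an overall sign, which is immaterial.
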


In Appendix \ref{app:exp2}, we examine the conservation laws derived in Proposition \ref{prop:grouped-query-attention} and their drift during prolonged constant step size SGD in a language model. 
We observe that approximate conservation can coexist with learning,
as reflected in reductions in training and validation loss. 
However, we also observe that conservation law preservation for certain parameter blocks may deteriorate during prolonged SGD. 

Next, we consider polynomial neural networks (PNNs). 
Recently, 
\cite{usevich2025identifiability} showed that a wide variety of polynomial networks satisfy a finite-to-one identifiability condition. 
Our result relies on showing that for these networks, the symmetry-conservation law gap is zero, allowing for a direct transfer of symmetries to conservation laws. 
This is the first deep network architecture that we know of for which a complete characterization of the conservation laws has been obtained.

\begin{restatable}[Polynomial neural network]{proposition}{PNN}
    \label{prop:PNN}
    Consider the polynomial neural network (PNN) mechanism \(G\) with parameters
    \(\theta = (W_j, b_j)_{j=1}^L \in \prod_{j=1}^L \left(\R^{D_j \times D_{j-1}} \times \R^{D_j}\right)\)
    defined by 
    \[G(\theta, x) = f_L \circ \rho_{r_{L-1}} \circ f_{L-1} \circ \cdots \circ \rho_{r_1} \circ f_1(x),\]
    where \(\rho_{r_j}(z) = (z_1^{r_j}, \ldots, z_{D_j}^{r_j})\) is a monomial activation function of degree \(r_j\) and \(f_j(z) = W_j z + b_j\) for \(j = 1, \ldots, L\).
    If the architecture is finite-to-one at a generic \(\theta_0\), then locally around \(\theta_0\), a complete set of conservation laws of \(G\) with respect to any loss satisfying Assumption \ref{as:separability} is given by
        \[h^j_i(\theta) = \|(W_j)_{i,:}\|_2^2 + (b_j)_{i}^2 - r_j \|(W_{j+1})_{:,i}\|_2^2, \quad i = \overline{1, D_j}, j = \overline{1, L-1}.\]
\end{restatable}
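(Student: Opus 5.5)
The plan is to show that, near a generic \(\theta_0\), the symmetry distribution \(W^\perp\) is spanned exactly by the infinitesimal generators of the neuron-wise rescaling symmetries. These generators turn out to be (up to a factor) the gradients of the \(h^j_i\). Theorem~\ref{thm:law-symmetry-gap} and Proposition~\ref{prop:independentfunctions} then give completeness.

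\textbf{Step 1 (rescaling symmetries).} For each hidden neuron \(i\) of layer \(j\in\overline{1,L-1}\), consider the one-parameter action
\[
(W_j)_{i,:}\mapsto e^{t}(W_j)_{i,:},\quad (b_j)_i\mapsto e^{t}(b_j)_i,\quad (W_{j+1})_{:,i}\mapsto e^{-r_j t}(W_{j+1})_{:,i}.
\]
It preserves \(G(\theta,\cdot)\) because \(\rho_{r_j}\) is homogeneous of degree \(r_j\) coordinatewise. Its generator \(\chi^j_i\) has components \((W_j)_{i,:}\), \((b_j)_i\) and \(-r_j(W_{j+1})_{:,i}\). A direct computation gives \(\nabla h^j_i=2\chi^j_i\). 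By Corollary~\ref{cor:1dsymmetrycond2}, \(\chi^j_i\in W^\perp\), so each \(h^j_i\) is a conservation law (equivalently, by Proposition~\ref{prop:conservation-law-condition}).

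At a generic point all rows \((W_j)_{i,:}\) are nonzero. Then the \(\sum_{j<L} D_j\) fields \(\chi^j_i\) are linearly independent on a neighborhood \(U\) of \(\theta_0\). To see this, look at the \((W_j)_{i,:}\)-block: only \(\chi^j_i\) and \(\chi^{j-1}_{\cdot}\) touch \(W_j\). The latter act on columns, the former on rows, so a triangular argument going layer by layer from \(j=L-1\) downward works. Hence the \(h^j_i\) are functionally independent on \(U\).

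\textbf{Step 2 (reverse inclusion, the main obstacle).} I must show \(W^\perp_\theta\subseteq\operatorname{span}\{\chi^j_i(\theta)\}\) for \(\theta\in U\).

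Take any smooth field \(\chi\) in \(W^\perp\). By Proposition~\ref{prop:symmetry_vector_field} it generates a partial loss symmetry \(\psi\). By Proposition~\ref{prop:separability-symmetry} this \(\psi\) is a partial functional symmetry. So the curve \(t\mapsto\psi(t,\theta)\) stays inside the fiber \(F_\theta=\{\theta':G(\theta',\cdot)=G(\theta,\cdot)\}\).

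Here the finite-to-one hypothesis (in the sense of \citet{usevich2025identifiability}) is used. Near a generic point, \(F_\theta\) coincides with the orbit of the torus of rescalings \((\R_{>0})^{\sum_j D_j}\), up to finitely many discrete (permutation/sign) components. Consequently \(F_\theta\cap U\) is locally a smooth manifold of dimension \(\sum_j D_j\), and its tangent space is \(\operatorname{span}\{\chi^j_i(\theta)\}\). Since the flow curve lies in \(F_\theta\), its velocity \(\chi(\theta)\) lies in this tangent space.

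Two technical points need care here. First, I must pass from "\(W^\perp\) contains a vector \(v\)" to "there is a smooth field in \(W^\perp\) through \(v\)". For this I would use that, generically, \(W\) has locally constant rank, so \(W^\perp\) is a smooth distribution and every vector extends to a smooth section. Second, I must make sure that the discrete components of the fiber cannot contribute tangent directions. This holds because a continuous curve starting at \(\theta\) stays in the connected component of \(F_\theta\) through \(\theta\), which is the torus orbit.

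\textbf{Step 3 (zero gap and completeness).} Steps 1 and 2 give
\[
W^\perp=\operatorname{span}\{\nabla h^j_i\}
\]
on \(U\), which has constant rank \(s=\sum_j D_j\). Hence \(W\) is the tangent distribution of the level sets of \(H=(h^j_i)\). By the preimage theorem this distribution is integrable, so it is involutive and \(\operatorname{Lie}(W)=W\). Theorem~\ref{thm:law-symmetry-gap}(iii) then gives \(c=s\) with zero gap.

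Finally, let \(h\) be any conservation law. Then \(\nabla h\in W^\perp=\operatorname{span}\{\nabla h^j_i\}\). By Proposition~\ref{prop:independentfunctions}, \(h\) is locally a \(C^1\) function of the \(h^j_i\). Therefore \(\{h^j_i\}\) is complete near \(\theta_0\).
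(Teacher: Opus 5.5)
Your proposal is correct and follows essentially the same route as the paper: finite-to-one identifiability forces the local symmetry directions to be the $s=\sum_{j=1}^{L-1}D_j$ neuron-rescaling generators $\chi^j_i=\tfrac{1}{2}\nabla h^j_i$, and exhibiting these $s$ independent gradient fields yields completeness. Your Step~2 (flows stay in the fiber, whose local component is the rescaling orbit) spells out the paper's one-line claim that the $s$-dimensional rescaling group bounds the number of independent symmetries, and finishing with Proposition~\ref{prop:independentfunctions} rather than counting via Theorem~\ref{thm:law-symmetry-gap} is only a cosmetic difference; likewise, with nonzero rows your triangular independence argument should start at $W_1$ and go upward, though genericity makes either direction work.
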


In the deep square linear network case, \cite{lindsey2026regularizationimpliesbalancednessdeep} recently characterized the symmetries of the model on the full-rank strata of the parameter space. 
Theorem \ref{thm:law-symmetry-gap} allows us to obtai an upper bound of \((L-1)D^2\) independent conservation laws of the model, which improves slightly upon the obvious bound of \(LD^2\). 
More generally, this
illustrates that a complete characterization of the symmetries of a model automatically yields an upper bound on the number of independent conservation laws, which may be useful for future work on other architectures. 

\begin{restatable}
[Deep square linear network]{proposition}{deepLinearNetwork}
    \label{prop:deep_linear_network}
    Consider the deep square linear network mechanism \(G\) with parameters
    \(\theta = (W_j)_{j=1}^L \in \prod_{j=1}^L \R^{D \times D}\)
    defined by 
    \[G(\theta, x) = W_L W_{L-1} \ldots W_1 x.\]
    If \(W_L \ldots W_1\) is full rank, then locally around \((W_1, \ldots, W_L)\), there are at most \((L-1) D^2\) independent conservation laws of \(G\) with respect to any loss satisfying Assumption \ref{as:separability}.
\end{restatable}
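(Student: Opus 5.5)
The plan is to bound the number of independent conservation laws by the dimension of the kernel of the differential of the end-to-end product map
\[
P(W_1,\ldots,W_L) := W_L W_{L-1}\cdots W_1 .
\]
The key is to show that the gradient of every conservation law lies in \(\ker DP\). I will work directly with the gradients of the conservation laws rather than with the whole symmetry distribution \(W^\perp\). This avoids having to extend individual vectors in \(W^\perp\) to smooth vector fields, and avoids the constant-rank hypotheses of Theorem~\ref{thm:law-symmetry-gap}. The conclusion is the same bound that the theorem would give through \(c \le s\).

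First, I show that \(P\) is a submersion near the given point. Since \(W_L\cdots W_1\) has full rank, every factor \(W_j\) is invertible, and this persists on a neighborhood \(U\). On \(U\), the partial differential with respect to \(W_1\) alone is \(\dot W_1 \mapsto (W_L\cdots W_2)\dot W_1\). This map is already surjective onto \(\R^{D\times D}\), so \(DP\) has rank \(D^2\). Hence \(\dim \ker DP(\theta) = LD^2 - D^2 = (L-1)D^2\) for every \(\theta\in U\).

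Next, let \(h_1,\ldots,h_N\) be independent conservation laws on \(U\). I assume they are regular enough (say \(C^2\)) that \(\nabla h_i\) is locally Lipschitz. By Corollary~\ref{cor:law_induces_symmetry}, each \(\nabla h_i\) generates a partial symmetry \((\R,\psi_i)\) of the loss with \(D_t\psi_i(0,\theta)=\nabla h_i(\theta)\). Because the loss satisfies Assumption~\ref{as:separability}, Proposition~\ref{prop:separability-symmetry} upgrades \(\psi_i\) to a partial functional symmetry. This gives
\[
P(\psi_i(t,\theta))\,x = P(\theta)\,x
\]
for all \(x\) and all small \(t\). Differentiating at \(t=0\) yields \(DP(\theta)\,\nabla h_i(\theta)=0\), so \(\nabla h_i(\theta)\in\ker DP(\theta)\).

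Finally, the vectors \(\nabla h_1(\theta),\ldots,\nabla h_N(\theta)\) are linearly independent by Definition~\ref{def:independent_laws} and lie in a space of dimension \((L-1)D^2\). Therefore \(N\le (L-1)D^2\). The functional symmetries \(W_j\mapsto S_jW_jS_{j-1}^{-1}\) with \(S_0=S_L=I\) exhaust \(\ker DP\), which is consistent with the characterization of \citet{lindsey2026regularizationimpliesbalancednessdeep}. However, only the dimension count above is needed for the bound.

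The main obstacle is the step where \(\nabla h_i\) is turned into a genuine partial symmetry and passed through Proposition~\ref{prop:separability-symmetry}. This requires the flow of \(\nabla h_i\) to exist, which needs the regularity assumption above, and requires the functional identity to be differentiated correctly at \(t=0\). The rest is linear algebra.
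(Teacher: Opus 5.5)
Your argument is correct, and it reaches the bound by a different route than the paper. The paper cites Lemma~1 of \citet{lindsey2026regularizationimpliesbalancednessdeep} to identify all symmetries on the full-rank stratum with the action of $(\operatorname{GL}_D(\R))^{L-1}$. It reads off $s=(L-1)D^2$ from the group dimension and then applies Theorem~\ref{thm:law-symmetry-gap} to get $c\le s$. You instead compute $\dim\ker DP(\theta)=(L-1)D^2$ directly from the submersion property of the end-to-end map. You then show pointwise that each conservation-law gradient lies in $\ker DP(\theta)$, by passing its flow through Corollary~\ref{cor:law_induces_symmetry} and Proposition~\ref{prop:separability-symmetry}.

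Your route needs no external identifiability result, and it does avoid the hypotheses of Theorem~\ref{thm:law-symmetry-gap}. The paper's proof does not check that $W$ and $\operatorname{Lie}(W)$ have locally constant rank. It also does not spell out how the global description of the symmetries as a group action yields the infinitesimal inclusion $W^\perp\subseteq\ker DP$, which is what is needed to turn the group dimension into a bound on $s$. Your argument also generalizes verbatim: for any model and any loss satisfying Assumption~\ref{as:separability}, the number of independent conservation laws near $\theta$ is at most $\dim\bigcap_{x}\ker D_\theta G(\theta,x)$. The paper's route, in exchange, illustrates its general message that a complete symmetry characterization immediately bounds the number of conservation laws.

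Finally, you can drop the $C^2$ assumption you flag as the main obstacle. If $\nabla h$ is merely continuous, Peano's theorem still gives a $C^1$ integral curve $\gamma$ of $\nabla h$ through $\theta$. Every $\ell_{(x,y)}$ is constant along $\gamma$ because $\nabla h\perp W$, so separability gives $G(\gamma(t),x)=G(\theta,x)$ for all $x$. Differentiating at $t=0$ yields $DP(\theta)\,\nabla h(\theta)=0$, with no uniqueness of the flow required.
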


Finally, we observe that Proposition \ref{prop:inheritance_1} readily recovers known conservation laws derived by prior symmetry-based frameworks. 
For example, if trainable weight matrices \(A, B\) enter a multilayer network only through their product \(AB\), as in the popular low-rank adaptation (LoRA) scheme \citep{hu2022lora}, then it automatically inherits conservation laws given by the entries of \(A^\top A - BB^\top\).
We refer the reader to Appendix \ref{app:comparison_sub} for more details.

\section{Conclusion} 

We develop a unified geometric framework for studying symmetries and conservation laws in neural network gradient flow. 
It clarifies the gap between symmetries and conservation laws, establishes results on their inheritance from subnetworks and their completeness in larger networks, and characterizes the complete set of conservation laws for several architectures. 
A limitation is that compositional identifiability does not hold for matrix multiplication and thus our inheritance result cannot be applied to deep linear networks. 
Future work could pursue inheritance under weaker compositional identifiability conditions. 
Other promising directions include extending our results to 
non-gradient-flow and data dependent dynamics, 
and investigation of conservation-laws informed initialization and training.

\subsection*{Acknowledgments} 
We thank Viet-Hoang Tran for insighful discussions on identifiability and conservation laws in self-attention modules, and Ang\'elica Torres for discussions on the completeness of conservation laws in deep linear networks. 
KN thanks Trung-Nghia Nguyen for discussions on vector fields and flows.
GM thanks the Institute for Mathematical Sciences, National University of Singapore, for its hospitality during a visit. 

This work was supported in part by NSF grant CCF-2212520 through the Mathematical and Scientific Foundations of Deep Learning (MoDL) program. 
GM was also supported in part by DARPA through the Artificial Intelligence Quantified (AIQ) program under grant HR00112520014; NSF grants DMS-2145630, DMS-2522495, and RTG 2446222; DFG project 464109215 within Priority Programme SPP 2298 ``Theoretical Foundations of Deep Learning''; and the BMFTR through DAAD project 57616814 (SECAI).

\subsection*{AI use statement}

The theoretical framework and essential proof arguments were developed by the authors without generative AI assistance. 

We used generative AI tools to help formulate mathematical claims, propose or refine hypotheses, provide feedback on research methodology and experiments, assist with implementation and interpretation of experimental results, and suggest experimental parameters. We also used these tools to create or edit code and figures, identify and summarize relevant literature, brainstorm research directions, and improve the readability of the manuscript.

We did not use generative AI to develop theoretical models or conceptual frameworks, supply critical ingredients for proofs, generate synthetic datasets,
assist in the writing of proofs, clean or reformat datasets, or translate text. The authors reviewed all AI-assisted material, independently checked the mathematical content and cited literature, and verified and tested AI-assisted code. We take responsibility for the final content of this work. 

\newpage 
\bibliographystyle{unsrtnat}
\bibliography{main_arxiv}

\newpage

\appendix 

\section{Background on Vector Fields}

\label{app:diffgeo}

We present a brief overview of the necessary background on vector fields and differential geometry based on Chapter 1 of \cite{isidori1995nonlinear}. For a more comprehensive treatment, we refer the reader to \cite{isidori1995nonlinear, olver_applications_nodate,lee2003introduction}.

We consider functions on an open subset \(U \subset \R^D\), where \(U\) can be taken as \(\R^D\) itself in the globally smooth regime. 
A \textit{vector field} is a smooth function \(f: U \to \R^D\) that associates to each $\theta \in U$ a direction in \(\R^D\). A \textit{covector field} is a smooth function \(w^\ast : U \to (\R^D)^\ast\), where we identify the dual space \((\R^D)^\ast = \{T: \R^D \to \R, T \text{ linear}\}\) with the space of row vectors \(\R^{1 \times D}\). This makes it convenient to write 
\[w^\ast(f) = w^\ast f = \langle w, f \rangle, \quad w= (w^\ast)^\top : U \to \R^D.\]
Let \(\lambda: U \to \R\) be a smooth function, then its derivative is the covector field
\[D \lambda (\theta) = \left(\frac{\partial \lambda}{\partial \theta_1}, \ldots, \frac{\partial \lambda}{\partial \theta_n}\right).\]
Given a vector field \(f\), the \textit{derivative of \(\lambda\) along \(f\)} is written as \(L_f\lambda(\theta)\), and is defined to be the inner product between \(\nabla \lambda = (D \lambda)^\top\) with \(f\)
\[L_f\lambda(\theta) = D \lambda (\theta) f(\theta) = \langle \nabla \lambda(\theta), f(\theta) \rangle = \sum_{i=1}^n \frac{\partial \lambda}{\partial \theta_i} f_i(\theta).\]

Given two prior vector fields \(f, g\). The \textit{Lie bracket} \([f,g]\) is a vector field satisfying
\[[f,g](\theta) = Dg(\theta) \, f(\theta) - Df(\theta) \, g(\theta).\]
The Lie bracket is bilinear over \(\R\), skew commutative, and satisfies the Jacobi identity.

Let \(f_1, \ldots, f_d\) be smooth vector fields. These identify a unique \textit{smooth distribution} given by
\[\Delta(\theta) = \operatorname{span}\{f_1(\theta), \ldots, f_d(\theta)\},\]
which associates each point \(\theta \in U\) with a subspace of \(\R^n\) spanned by all \(f_i\) at \(\theta\). The term `smooth' here refers to the fact that it can be spanned by smooth vector fields. 
A vector field \(f\) belongs to a distribution \(\Delta\) if this is true pointwise. The dimension of a distribution \(\Delta\) at \(\theta \in U\) is \(\operatorname{dim} \Delta(\theta)\). If \(F\) is a smooth matrix with \(n\) rows then the distribution spanned by its columns is 
\[\Delta(\theta) = \operatorname{Im}(F(\theta)), \quad \operatorname{dim} \Delta(\theta) = \operatorname{rank} F(\theta).\]

A distribution \(\Delta\) is called \textit{nonsingular} if its dimension \(\operatorname{dim} \Delta(\theta) \equiv d\) is constant on \(U\). A point \(\theta^0\) is said to be a \textit{regular point} if there exists a neighborhood \(U^0\) on which \(\Delta\) is nonsingular. If \(\Delta\) is a smooth, nonsingular distribution of dimension \(d\) then it has a spanning set of vector fields \(f_1, \ldots, f_d\) such that \(\Delta(\theta) = \operatorname{span}\{f_i(\theta)\}\) and any smooth vector field \(\tau \in \Delta\) can be expressed as \(\tau(\theta) = \sum c_i(\theta) f_i(\theta)\) for smooth real-valued functions \(c_i\).
A distribution \(\Delta\) is \textit{involutive} if it is closed under taking the Lie bracket, i.e., \([\tau_1, \tau_2] \in \Delta, \forall \tau_1, \tau_2 \in \Delta\). 

Similar to a distribution, given smooth covector fields \(w^\ast_1, \ldots, w^\ast_d\), we can define the \textit{smooth codistritbution} \(\Theta = \operatorname{span}(w_1^\ast, \ldots, w_d^\ast)\). Similar constructions to a distribution applies. 

There is a natural way to construct a codistribution starting from a distribution \(\Delta\). 
For each \(\theta\), let 
\[\Delta^\perp(\theta) = \{w^\ast \in (\R^n)^\ast: w^\ast v = 0, \forall v \in \Delta(\theta)\}\]
be the annihilator of \(\Delta(\theta)\). This defines a codistribution called the \textit{annihilator} of \(\Delta\). 
Conversely, given a codistribution \(\Theta\), one can similarly construct a distribution \(\Theta^\perp\), called the annihilator of \(\Theta\).
By construction, the pointwise sum of dimensions of \(\Delta\) and \(\Delta^\perp\) is \(n\).

If a distribution \(\Delta\) is spanned by the columns of a smooth matrix \(F\), then its annihilator is identified by 
\[\Delta^\perp(\theta) = \{w^\ast \in (\R^n)^\ast : w^\ast F(\theta) = 0\}.\]
Conversely, if a codistribution \(\Theta\) is spanned by the rows of a smooth matrix \(W\) then its annihilator is identified by
\[\Theta^\perp (\theta) = \{v \in \R^n: W(\theta) v = 0.\}\]
Thus, \(\Theta^\perp(\theta)\) is the kernel of the matrix \(W(\theta)\).

If \(\theta\) is a regular point of a smooth distribution \(\Delta\), then it is also a regular point of \(\Delta^\perp\).
Consider a nonsingular distribution \(\Delta\) with dimension \(d\) and basis vector fields \(f_1, \ldots, f_d\). We know that its codistribution \(\Delta^\perp\) is smooth, nonsingular, and has a basis \(w_1^\ast, \ldots, w_{n-d}^\ast\). By construction, these solve the equation
\[w_j^\ast(\theta) F(\theta) = 0, \quad F(\theta) = (f_1(\theta), \ldots, f_d(\theta))\]
for all \(\theta\). We call \(\Delta\) \textit{completely integrable} if we can choose \(w_1^\ast, \ldots, w_{n-d}^\ast\) such that there exists real-valued smooth functions \(\ell_1, \ldots, \ell_{n-d}\) such that \(D \ell_i = w_i^\ast, \forall i\).

\begin{theorem}[Frobenius]
    A nonsingular distribution is completely integrable if and only if it is involutive.
\end{theorem}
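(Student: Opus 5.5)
The statement is the Frobenius theorem: a nonsingular distribution $\Delta$ of dimension $d$ on an open set $U\subset\R^n$ is completely integrable if and only if it is involutive. I would prove the two directions separately. Necessity is a short computation; sufficiency is done by straightening the vector fields spanning $\Delta$ one at a time.

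\emph{Necessity.} Suppose there are smooth $\ell_1,\ldots,\ell_{n-d}$ whose differentials $D\ell_i$ are independent and annihilate $\Delta$. Take any $\tau_1,\tau_2\in\Delta$. Then $L_{\tau_1}\ell_i = L_{\tau_2}\ell_i = 0$ identically. Using $L_{[\tau_1,\tau_2]}\lambda = L_{\tau_1}L_{\tau_2}\lambda - L_{\tau_2}L_{\tau_1}\lambda$, which follows directly from the definition $[f,g]=Dg\,f-Df\,g$ because the second-derivative terms cancel by symmetry of the Hessian, we get $D\ell_i\,[\tau_1,\tau_2]=0$ for every $i$. Since $\Delta$ has dimension $d$, the annihilator of the $D\ell_i$ is exactly $\Delta$. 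Hence $[\tau_1,\tau_2]\in\Delta$, so $\Delta$ is involutive.

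\emph{Sufficiency.} This is a local statement, so fix $\theta^0$ and a smooth basis $f_1,\ldots,f_d$ of $\Delta$ near $\theta^0$.
\begin{enumerate}
\item Complete the basis with constant vectors $f_{d+1},\ldots,f_n$ so that $f_1(\theta^0),\ldots,f_n(\theta^0)$ span $\R^n$.
\item Let $\Phi^{f_i}_t$ denote the flow of $f_i$, and define
\[F(z_1,\ldots,z_n)=\Phi^{f_1}_{z_1}\circ\cdots\circ\Phi^{f_n}_{z_n}(\theta^0).\]
Its derivative at $0$ has columns $f_1(\theta^0),\ldots,f_n(\theta^0)$, so by the inverse function theorem $F$ is a local diffeomorphism.
\item Let $\Psi=F^{-1}$ and take $\ell_i:=\Psi_{d+i}$ for $i=1,\ldots,n-d$. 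These have independent differentials. It remains to show that $D\ell_i$ annihilates $\Delta$, i.e.\ that $\partial F/\partial z_j\in\Delta(F(z))$ for $j\le d$ and all $z$ near $0$.
\item For $j=1$ this is immediate. For $j>1$,
\[\frac{\partial F}{\partial z_j}=\bigl(\Phi^{f_1}_{z_1}\bigr)_*\cdots\bigl(\Phi^{f_{j-1}}_{z_{j-1}}\bigr)_* f_j(\cdot),\]
so we need that pushforward by the flow of some $f_k\in\Delta$ maps $\Delta$ into itself.
\end{enumerate}

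The main obstacle is this invariance lemma. Fix $v$ in $\Delta$ and set $V(t)=(\Phi^{f_k}_{-t})_*v$ evaluated along the orbit. Its derivative is $(\Phi_{-t})_*[f_k,v]$. Involutivity together with nonsingularity gives $[f_k,f_m]=\sum_l c_{km}^l f_l$ with smooth coefficients. Writing $V(t)=\sum_m a_m(t)\,(\Phi_{-t})_*f_m+\text{remainder}$, this becomes a linear ODE system that keeps $V(t)$ in $\Delta$.

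A cleaner way to carry this out is to work with the annihilator. Choose covectors $w_1,\ldots,w_{n-d}$ spanning $\Delta^\perp$ and set $g_i(t)=w_i(\Phi_t)\,(\Phi_t)_*v$. Differentiating, involutivity gives a linear homogeneous ODE $\dot g = A(t)\,g$. Since $g(0)=0$, uniqueness gives $g\equiv0$. I expect this computation to take the most care.
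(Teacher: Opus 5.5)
The paper gives no proof of this theorem; it is recalled as background from Isidori (Ch.~1). Your argument is essentially the standard proof found there (straightening via the composed flow map $F$, plus invariance of a nonsingular involutive distribution under the flows of its sections), read locally as it must be, and it is correct. The one step to write out is the one you flag: with $L_{f_k}w_i$ the Lie derivative of the covector field $w_i$, you have $\dot g_i(t)=\langle (L_{f_k}w_i)(\Phi_t(x)),(\Phi_t)_*v\rangle$, and for every $\sigma\in\Delta$, $\langle L_{f_k}w_i,\sigma\rangle=L_{f_k}\langle w_i,\sigma\rangle-\langle w_i,[f_k,\sigma]\rangle=0$ by involutivity, so $L_{f_k}w_i=\sum_j a_{ij}w_j$ with smooth $a_{ij}$ (since $\Delta^\perp$ has constant rank), which gives the closed system $\dot g=A(t)g$; your ``$+$ remainder'' variant is unnecessary, because applying $\tfrac{d}{dt}(\Phi_{-t})_*f_m=(\Phi_{-t})_*[f_k,f_m]=\sum_l c^l_{km}(\Phi_{-t})_*f_l$ to the whole frame $f_1,\dots,f_d$ already closes the linear ODE.
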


\section{Comparison With Prior Works}
\label{sec:comparison}

\subsection{A typical derviation of Noether's theorem}

Classical Noether theory establishs a connection between symmetries of the Lagrangian and conseration laws of the dynamical system induced by the Lagrangian.
To illustrate this, we briefly review a typical derivation of Noether's theorem in the context of classical mechanics.
Consider the space \(\R^n\) equipped with the Lagrangian 
\[L: \R^n \times \R^n \times \R \to \R, \quad (x, v, t) \mapsto L(x, v, t),\] 
which defines the action functional
\[S[\gamma] = \int_{a}^b L(\gamma(t), \gamma'(t), t) dt\]
for any path \(\gamma:[a,b] \to \R^n\). 
Let \(q\) be an optimal path that minimizes \(S[\gamma]\). 
Then, via calculus of variations, the optimal path \(q\) satisfies the Euler--Lagrange equations
\[\nabla_x L - \frac{d}{dt} \nabla_v L = 0.\]

Let \(\psi: \R \times \R^n \to \R^n\) be a symmetry of the Lagrangian \(L\) and let \(\chi: \R^n \to \R^n\) be the corresponding infinitesimal generator, i.e., 
\[L(\psi(g, q), D_q \psi(g, q) \, \dot q, t) = L(q, \dot{q}, t), \quad \chi(q) = \frac{d}{dg} \psi(g, q)\big|_{g=0},\]
then taking the derivative with respect to \(g\) and evaluating at \(g = 0\) gives the infinitesimal symmetry condition
\[\left \langle \nabla_x L, \chi \right \rangle + \left \langle \nabla_v L, D_q \chi \, \dot q \right \rangle = 0.\]

If we take the derivative of 
\[\lambda = \langle \nabla_v L, \chi \rangle.\]
with respect to time \(t\) then we obtain
\begin{align*}
    \frac{d}{dt} \lambda(t) 
    &= \left \langle \frac{d}{dt} \nabla_v L, \chi \right \rangle + \left \langle \nabla_v L, D_q \chi \, \dot q \right \rangle\\
    &= \underbrace{\left \langle \frac{d}{dt} \nabla_v L - \nabla_x L, \chi \right \rangle}_{=0 \text{ by Euler--Lagrange equations}} + \underbrace{\left \langle \nabla_x L, \chi \right \rangle + \left \langle \nabla_v L, D_q \chi \, \dot q \right \rangle}_{=0 \text{ by the infinitesimal symmetry condition}} = 0
\end{align*}
Thus, we posit that \(\lambda\) is a conserved quantity along the optimal path \(q\).
We call this term the Noether charge associated with the symmetry \(\psi\).
Noether's theorem states that every symmetry of the Lagrangian induces a conservation law taking the form of the Noether charge \(\lambda(t)\).

In the context of gradient flow, it is important to note that the dynamics of the system are not derived from a Lagrangian.
The gradient flow equation \(\dot\theta=-\nabla f(\theta)\) directly specifies the dynamics of the system, and thus Noether's theorem does not directly apply.

\subsection{Comparison with other frameworks connecting symmetries and conservation laws under gradient flow}
\label{app:comparison_sub}

Inspired by Noether's theorem, prior frameworks that connect symmetries and conservation laws have tried to adapt the derivation of Noether's theorem to the context of gradient flow.
Their common approach is to try to integrate the infinitesimal symmetry condition (\ref{eqn:kunin_condition}) to obtain a constant function with respect to time \(t\) along the gradient flow.
We recall this condition here for convenience:
\[\langle \nabla_\theta L_{\mathcal D}(\theta), D_g \psi(0, \theta) \rangle = 0.\]
If this can be factored into a function of \(\theta\) independent of \(t\), then that function is a conserved quantity along the gradient flow.
We now discuss each of these prior works in turn.
 
\cite{kunin2021neuralmechanicssymmetrybroken} notice that if \(\psi\) is a symmetry of the gradient flow \(\dot \theta = - \nabla_{\theta} L_{\mathcal D}(\theta)\) and \(\chi = D_g \psi(0, \theta)\) is its underlying vector field, then
\begin{align*}
    \frac{d}{dt} \langle\theta, \chi(\theta) \rangle  
    &= \frac{d}{dt} \langle\theta, D_g \psi(0, \theta) \rangle\\
    &= \langle \dot \theta, D_g \psi(0, \theta) \rangle + \langle \theta, D_t D_g \psi(0, \theta) \, \dot \theta \rangle\\
    &= \underbrace{-\langle \nabla_\theta L_{\mathcal D}(\theta), D_g \psi(0, \theta) \rangle}_{=0 \text{ by the infinitesimal invariance condition \ref{eqn:kunin_condition}}} - \langle \theta, D_t D_g \psi(0, \theta) \, \nabla_\theta L_{\mathcal D}(\theta) \rangle.
\end{align*}
Hence, if the additional condition
\begin{equation}
    \label{eqn:kunin_condition_2}
    \langle \theta, D_t D_g \psi(0, \theta) \, \nabla_\theta L_{\mathcal D}(\theta) \rangle = 0
\end{equation}
is satisfied, then they posit that
\[\langle\theta, D_g \psi(0, \theta) \rangle\]
is a conserved quantity along the gradient flow.
Thus, not all symmetries can induce conservation laws based on their framework, but only those that satisfy the additional condition \ref{eqn:kunin_condition_2}.
This is reflected in the additional assumption in Theorem 1 of \cite{kunin2021neuralmechanicssymmetrybroken}.

\cite{zhao_symmetries_2023} show that if the infinitesimal generator of a symmetry can be linearly represented by a matrix \(M\), i.e., 
\[\psi(g, \theta) = e^{gM} \theta, \; \text{ and } \; \chi(\theta) = D_g \psi(0, \theta) = M \theta,\]
then the derivative of the same quantity in \cite{kunin2021neuralmechanicssymmetrybroken} can be rewritten as
\[
    \frac{d}{dt} \langle \theta, \chi(\theta) \rangle 
    = \frac{d}{dt} \langle \theta, M \theta \rangle 
    = \langle \dot \theta, M \theta \rangle + \langle \theta, M \dot \theta \rangle
    = \underbrace{\langle \nabla_\theta L_{\mathcal D}(\theta) , D_g \psi(0, \theta) \rangle}_{=0 \text{ by the infinitesimal invariance condition \ref{eqn:kunin_condition}}} + \langle M^\top \theta, \dot \theta \rangle.
\]
Thus, if the additional condition 
\[\langle M^\top \theta, \dot \theta \rangle = 0\]
is satisfied, then \(\langle \theta, M \theta \rangle\) is a conserved quantity along the gradient flow.
In particular, if \(M\) is symmetric or skew-symmetric, then this condition is automatically satisfied as we can convert \(M^\top\) to either \(M\) or \(-M\), yielding the infinitesimal symmetry condition again.
This is reflected in the additional assumption in Proposition 5.1 of \cite{zhao_symmetries_2023}.

To give a concrete example of a network with a symmetry that is associated to a conservation law through our framework, but that does not give rise to a conservation law under the frameworks of \cite{kunin2021neuralmechanicssymmetrybroken,zhao_symmetries_2023}, consider the following scalar network:
    \[
    G(w_1, w_2) = w_1 e^{2 w_2} x.
    \]
    It has a symmetry given by the group \(\R\) acting on the parameters as 
    \[g \cdot (w_1, w_2) \mapsto (w_1 e^{-2g} , w_2 + g).\]
Its infinitesimal generator is 
    \[\chi = D_g(0, (w_1, w_2)) = (-2 w_1, 1),\]
    which automatically falls outside the scope of \cite{zhao_symmetries_2023} since the infinitesimal generator of this symmetry is not linear in the parameters.
The additional condition (\ref{eqn:kunin_condition_2}) in the work of \cite{kunin2021neuralmechanicssymmetrybroken} is not satisfied, as we have
\[
\langle (w_1, w_2), D_t D_g(0, (w_1, w_2)) \, \nabla_{(w_1, w_2)} L_{\mathcal D}(w_1, w_2) \rangle = \langle (w_1, w_2), D_t(-2 w_1, 1) \rangle = -2 w_1 \dot w_1 \neq 0.
\]
In contrast, from our framework, we observe that \(\chi\) is closed, and thus it is locally the gradient of a conservation law \(h\). This conservation law is easy to find by inspection, and is given by
    \[h(w_1, w_2) = -w_1^2 + w_2\]

An alternative framework to connect symmetries and conservation laws by \cite{tanaka2021noetherslearningdynamics} considers approximating gradient flow by other dynamics that can be induced from The Bregman Lagrangian, 
introduced by \cite{wibisono2016avariationalperspective}. 
The authors found that not all symmetries of the loss function are also symmetries of the Bregman Lagrangian, a phenomenon they term as \textit{symmetry breaking}. 
Nevertheless, by continuing with the derivations, they found that the usual Noether charge satisfies an evolution equation, which they term the \textit{Noether's Learning Dynamics}. 
By taking the mass-less limit of this equation, they recover an equation equivalent to the infinitesimal symmetry condition \ref{eqn:kunin_condition}.  
Their limiting identity recovers gradient-flow conservation properties, but does not provide a general criterion for a prescribed generator to admit a conserved potential.

Finally, \citet{gluch2021noetherthingschangestay} study symmetries and conservation laws of dynamics of the form
\[
\kappa_2(t) \ddot \theta + \kappa_1(t) \dot \theta + \nabla L(\theta) = 0,
\] 
induced by a Lagrangian similar to \cite{tanaka2021noetherslearningdynamics}. 
Although the authors also discuss an interpretation of gradient flow as a massless limit of this dynamic, their derivation does not require any construction of a Lagrangian. 
To derive conservation laws from symmetries, they again start from the infinitesimal symmetry condition \ref{eqn:kunin_condition}.
The main observation is that if the term in the infinitesimal symmetry condition \(\langle\dot \theta, D_g(0, \theta)\rangle\) is exactly the time derivative of some potential function, then that function is a conserved quantity along the dynamic. 
This approach generalizes those presented by \cite{kunin2021neuralmechanicssymmetrybroken, zhao_symmetries_2023}, in the sense that it does not prescribe any particular form of the conserved quantity. 
Rather, in cases where they can find a potential function for the infinitesimal symmetry condition \ref{eqn:kunin_condition}, they deduce that such function is a conserved quantity, while in cases where they cannot find such a potential function, they retain the identity as a constraint. 

For the purpose of finding conservation laws of gradient flow, our framework refines the conservation law construction of \cite{gluch2021noetherthingschangestay}. 
In particular, if \(h(\theta)\) is a conservation law then 
\[
\frac{d}{dt} h(\theta) = \langle \dot \theta, \nabla h(\theta) \rangle = 0.
\]
Thus, if we define a partial symmetry \(\psi\) with infinitesimal generator \(\chi = \nabla h\), then the infinitesimal symmetry condition \ref{eqn:kunin_condition} is satisfied, and \(h\) is a potential function for the infinitesimal symmetry condition.
Through Proposition \ref{prop:conservation_law_vector_field} and  Proposition \ref{prop:symmetry_vector_field}, our framework further makes explicit the integrability criterion underlying their construction.
That is, the infinitesimal symmetry condition 
\[\langle \dot \theta, D_g \psi(0, \theta) \rangle\]
is integrable locally if and only if \(D_g \psi(0, \theta)\) is a closed vector field.

As such, our framework characterizes the fundamental relation underlying previous frameworks by \cite{kunin2021neuralmechanicssymmetrybroken, gluch2021noetherthingschangestay, zhao_symmetries_2023}.
Any sufficiently smooth conservation law of gradient flow derived by these frameworks could be recovered by ours as well.

\subsection{Recent works connecting conservation laws and symmetries via vector fields}

The observation that invariant transformations of the loss function and conservation laws of the gradient flow are related through their infinitesimal generators has been made in several recent works.

\cite{marcotte2025transformativeconservativeconservationlaws} briefly discuss this in their Appendix J.
They define a conservation law induced by a loss invariant transformation as a function \(h\) that has the same gradient as the infinitesimal generator of the transformation itself.

In the case where the symmetry is a linear group action of some Lie group \(G \leq GL(n, \R)\), Corollary~1 of \cite{josz2025subdifferentiationsymmetry} defines a candidate conservation law for gradient flow with non-smooth objective functions of the form
\[h(\theta) = \operatorname{proj}_{s(\mathfrak g)}(\theta \theta^\top),\]
where \(s(\mathfrak g)\) is the symmetric part of the Lie algebra \(\mathfrak g = T_{I_n}(G)\) of \(G\).
The main ingredient in the proof is the observation that such a candidate annihilates the gradient flow vector field.

\cite{gegenfurtner2026symmetries} observe in their Appendix F that if the infitesimal generator of a particular family of symmetries is exact, then its potential function must be conserved. 
They also exhibit an infinitesimal generator of a symmetry that is not closed, and conclude that it does not give rise to a conservation law through this construction. 

Finally, \cite{white2025beyondlagrangians} states similar observations under a variational framework for gradient flow PDEs arising in physics. 

In contrast to prior works, our framework systematically characterizes the precise relationship between local one-parameter loss symmetries and conservation laws through their infinitesimal generators. Here, closedness determine the existence of local conserved potentials, while each sufficiently smooth conservation law determines a unique partial symmetry generated by its gradient, as discussed in Section \ref{sec:general}. 

\section{Justification for 1D Symmetries}
\label{sec:1d_justification}

We motivates the study of \(1\) dimensional symmetries by showing how multidimensional symmetries can be recovered by understanding symmetries induced by the trajectories of the Lie algebra \(\mathfrak g\).

Let \(\Gamma\) be a connected Lie group of dimension \(s\) and \(\psi: \Gamma \times \R^D \to \R^D\) be a smooth symmetry of the loss function \(L\). Let \(\mathfrak g = \operatorname{span}(X_1, \ldots, X_{s})\) be its Lie algebra. For each \(i\), the exponential trajectories \(\{\exp(t X_i) : t \in \R\}\) form a 1-parameter Lie subgroup. Thus, we can define \(s\) 1-dimensional symmetries of \(L\) by taking 
\begin{align*}
    \varphi_i : \R \times \R^D &\to \R^D\\ 
    (t, \theta) &\mapsto \psi(\exp(t X_i), \theta) = \exp(t X_i) \cdot \theta.
\end{align*}
On the other hand, if we know all the 1-parameter Lie subgroups \(\varphi_i\) for \(i = 1, \ldots, d\), then we can recover the multidimensional symmetry \(\psi\) if the Lie group \(\Gamma\) is connected, as can be seen from the following theorem.
\begin{theorem}
    If \(\Gamma\) is a connected Lie group and \(\mathfrak g = \operatorname{span}(X_1, \ldots, X_{s})\) then the product of exponentials of the form
    \(\exp(t_k X_{i_k}) \ldots \exp(t_1 X_{i_1})\)
    generates \(\Gamma\).
\end{theorem}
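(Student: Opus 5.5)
The statement is the classical fact that a connected Lie group is generated by any neighborhood of the identity. The plan is to produce such a neighborhood consisting of products of one-parameter exponentials.

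The first step builds the local diffeomorphism. Define the product-of-exponentials map
\[
\Phi: \R^s \to \Gamma, \qquad (t_1, \ldots, t_s) \mapsto \exp(t_s X_s) \cdots \exp(t_1 X_1).
\]
This map is smooth, and $\Phi(0) = e$. Differentiating at the origin and using $\frac{d}{dt}\exp(tX_i)\big|_{t=0} = X_i$, we get
\[
D\Phi(0)(\tau_1, \ldots, \tau_s) = \sum_i \tau_i X_i .
\]
Since the $X_i$ span $\mathfrak g$ and there are $s = \dim \Gamma$ of them, they form a basis. Hence $D\Phi(0)$ is a linear isomorphism $\R^s \to \mathfrak g = T_e\Gamma$. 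By the inverse function theorem, $\Phi$ maps some neighborhood of $0$ diffeomorphically onto an open neighborhood $U$ of $e$. In particular, every element of $U$ is a product of exponentials of the stated form.

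The second step shows that $U$ generates $\Gamma$. Let $H$ be the set of all finite products $\exp(t_k X_{i_k}) \cdots \exp(t_1 X_{i_1})$. This set is closed under multiplication. It is also closed under inverses, since
\[
\bigl(\exp(t_k X_{i_k}) \cdots \exp(t_1 X_{i_1})\bigr)^{-1} = \exp(-t_1 X_{i_1}) \cdots \exp(-t_k X_{i_k}).
\]
So $H$ is a subgroup, and it contains $U$.

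Next, $H$ is open. For any $h \in H$, the translate $hU$ is an open neighborhood of $h$, because left translation is a diffeomorphism, and $hU \subset H$.

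Finally, $H$ is closed. Its complement is a union of cosets $gH$, each of which is open, so the complement is open.

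Since $\Gamma$ is connected and $H$ is nonempty, open and closed, we conclude $H = \Gamma$.

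The only step needing care is the derivative computation for $\Phi$ at $0$, specifically that the differential of the product is the sum of the individual derivatives. This follows from the product rule for the group multiplication, whose differential at $(e,e)$ is addition in $T_e\Gamma$. Everything else is the standard open–closed subgroup argument.

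If the $X_i$ were only assumed to span $\mathfrak g$ rather than form a basis, one would first extract a basis from them and apply the same argument.
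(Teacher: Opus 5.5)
Your proof is correct. The paper gives no argument of its own and only cites Fulton--Harris; your route (inverse function theorem for the product map $\Phi$ near $0$, followed by the open--closed subgroup argument) is the standard proof behind that citation, and using $\Phi$ rather than $\exp$ itself is what guarantees that every element near $e$ is literally a product of single-generator exponentials $\exp(t_i X_i)$.
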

For a standard reference, see Exercise 8.1 and Proposition 8.3 from \cite{fulton2004representation}.

Thus, a corollary of these facts is that if the 1-parameter Lie subgroups \(\varphi_i(t, \theta) = \exp(t X_i) \cdot \theta\) are known for all \(i\), then we can reconstruct \(\psi: \Gamma \times \R^D \to \R^D\). 
This justifies our focus on characterizing 1-dimensional symmetries for the time being.

\section{Proofs for Section \ref{sec:prelim}}

\subsection{Proof for Proposition \ref{prop:existence_gradient_flow}}
\label{app:proof_existence_gradient_flow}
\existenceGradientFlow*

\begin{proof}
Let \(\mathcal D\) be any finite nonempty dataset and \(\theta_0\) be any initialization. Since these are now fixed, we suppress the dataset \(\mathcal D\) in the notation and write \(L(\theta) := L_{\mathcal D}(\theta)\).
Since summations of continuously differentiable functions are continuously differentiable, \(L\) is continuously differentiable.
Similarly, since summations of locally Lipschitz functions are locally Lipschitz, \(\nabla_\theta L\) is locally Lipschitz.
By the Picard--Lindel\"of theorem, for every
\(\theta_0\in\mathbb R^D\), there exists a unique local solution
\[
\theta:[0,T)\longrightarrow\mathbb R^D
\]
of
\[
\dot\theta(t)=-\nabla_\theta L(\theta(t)),
\qquad
\theta(0)=\theta_0,
\]
for some \(T\in(0,\infty]\).
By a standard continuation argument, there exists a maximal \(T_{\max}\in(0,\infty]\) such that the solution can be extended to \([0,T_{\max})\).
It remains to prove that \(T_{\max}=\infty\). 

Note that \(T_{\max} < \infty\) if and only if there is a finite time blow-up of the solution, i.e.,
\[\lim_{t \to T_{\max}^-} \|\theta(t)\| = \infty.\]
 However, along the solution, the
chain rule gives the identity
\begin{align*}
\frac{\mathrm d}{\mathrm dt}L(\theta(t))
=
\left\langle
\nabla_\theta L(\theta(t)),\dot\theta(t)
\right\rangle 
=
-\left\|\nabla_\theta L(\theta(t))\right\|^2 
=
-\|\dot\theta(t)\|^2.
\end{align*}
Integrating from \(0\) to \(t<T_{\max}\), and noting that the loss function \(L\) is bounded from below since \(\ell\)  itself is bounded from below, we obtain
\[
\int_0^t\|\dot\theta(s)\|^2\,\mathrm ds
=
L(\theta_0)-L(\theta(t)) \leq L(\theta_0) - \inf_{\theta \in \mathbb R^D} L(\theta).
\]
By the Cauchy--Schwarz inequality,
\begin{align*}
\|\theta(t)-\theta_0\|
= 
\left\|\int_0^t\dot\theta(s)\,\mathrm ds\right\|
\leq
\int_0^t\|\dot\theta(s)\|\,\mathrm ds
\leq
\sqrt{t}
\left(
\int_0^t\|\dot\theta(s)\|^2\,\mathrm ds
\right)^{1/2}
&\leq
\sqrt{t(L(\theta_0) - \inf_{\theta \in \mathbb R^D} L(\theta))}.
\end{align*}
Thus
\[
\|\theta(t)\| \leq \|\theta_0\| + \sqrt{t (L(\theta_0) - \inf_{\theta \in \mathbb R^D} L(\theta))},
\]
and hence no finite time blow-up is possible. 

We conclude that for every finite nonempty dataset and every
\(\theta_0\in\mathbb R^D\), the gradient-flow equation admits a unique
solution defined for all \(t\in[0,\infty)\).
\end{proof}

\subsection{Proof for Proposition \ref{prop:independentfunctions}}
\label{app:proof_independent_functions}

\independentFunctions*

\begin{proof}
    The backward direction is obtained by taking the derivative of Equation \ref{eq:functional_dependence}. Indeed, for any \(\theta \in \Theta\), the associated \(f\) induces the required linear coefficients since
    \[
        \nabla_\theta h(\theta) = \sum_{i=1}^k \frac{\partial f}{\partial x_i} (H(\theta)) \nabla_\theta h_i (\theta) \in \operatorname{span}\{\nabla_\theta h_1(\theta), \ldots, \nabla_\theta h_k(\theta)\}.
    \]

    Now, assume (i) holds. Since the gradients of the \(h_i\) are linearly independent in \(\Theta\), \(H\) is a submersion from \(\Theta \subset \R^D\) to \(\R^k\). Therefore, by the submersion theorem \cite[Theorem 11.5]{tu2010introduction}, there exists an open neighborhood \(\theta_0 \in V \subset \Theta\) and diffeomorphisms \(\phi: V \to \R^D, \psi: H(V) \to \R^k\) such that
    \[\psi \circ H \circ \phi^{-1}: \R^D \to \R^k\] is the orthogonal projection \(\operatorname{proj}: \R^D \to \R^k\) of the first \(k\) coordinates, i.e., 
    \[\psi \circ H \circ \phi^{-1}(y, z) = \operatorname{proj}(y,z) = y, \quad \forall y \in \R^k, z \in \R^{D-k}.\]
    
    Intuitively, \(\phi\) and \(\psi\) flatten both the domain and the range of \(H\). 
    This makes explicit the idea that up to diffeomorphisms, \(H\) can only see variations in \(k\) directions. 
    Next, by assumption (i), we will show that these directions entirely encompass the direction of variation of \(h\) up to diffeomorphism.

    We claim that \(h \circ \phi^{-1}: \R^D \to \R\) is independent of the last \(D-k\) coordinates, and hence  \(\overline h: \R^k \to \R\) defined by 
    \[\overline h(y) = h \circ \phi^{-1} (y,z), \quad \forall z \in \R^{D-k}\]
    is a well-defined function.
    If this is the case, then the projection map \(\operatorname{proj}: \R^D \to \R^k\) to the first \(k\) coordinates satisfies
    \[h \circ \phi^{-1}(y, z) = \overline h(y) = \overline h(\operatorname{proj}(y,z)),\]
    and we would have for any \(\theta \in V\)
    \begin{align*}
        h(\theta) &= h \circ \phi^{-1}(\phi(\theta)) = \overline h(\operatorname{proj}(\phi(\theta)) = \overline h \circ \psi \circ H \circ \phi^{-1} \circ \phi (\theta) = \overline h \circ \psi (H(\theta)) = \overline h \circ \psi (h_1(\theta), \ldots, h_k(\theta)).
    \end{align*}
    Thus we can choose \(f(x) := \overline h \circ \psi(x) = h \circ \phi^{-1} (\psi(x), 0)\). The fact that \(f\) is \(C^r\) is inherited from \(h\). An illustrative commutative diagram is included in Figure \ref{fig:functional_dependence} for ease of following the proof.
    \begin{figure}
        \centering
        \includegraphics[width=0.7\linewidth]{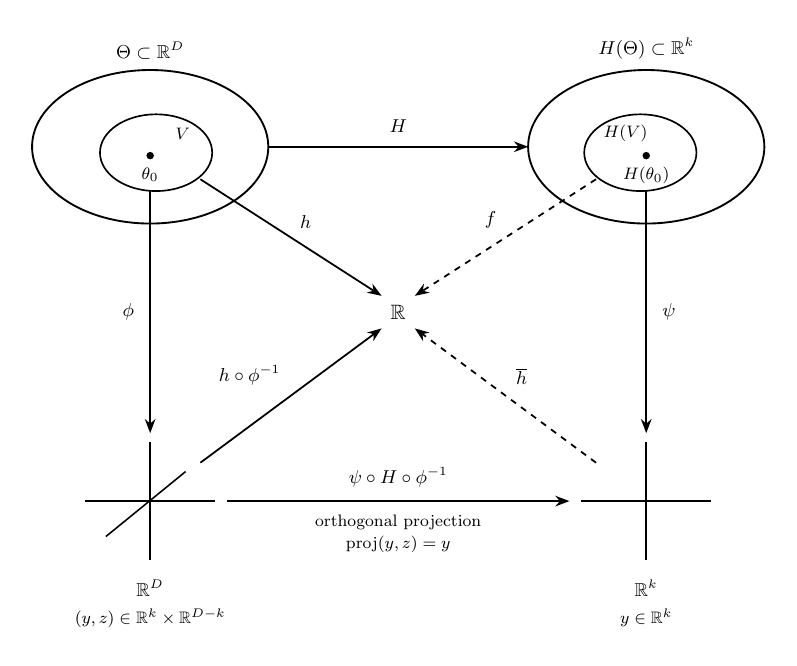}
        \caption{Commutative diagram showing the domains and ranges of different functions in the proof. The key step is to construct \(\overline h\). The final construction is \(h = f \circ H = \overline h \circ \psi \circ H\).}
        \label{fig:functional_dependence}
    \end{figure}

    The final step is to prove the claim that \(h \circ \phi^{-1}: \R^D \to \R\) is independent of the last \(D-k\) coordinates.
    Let \(y \in \R^k, z \in \R^{D-k}\).
    By chain rule and the fact that \(\psi \circ H \circ \phi^{-1}\) is the orthogonal projection, observe that
    \[0 = D_z \evalat{\left(\psi \circ H \circ \phi^{-1}\right)}{(y,z)} = \evalat{D \psi}{H \circ \phi^{-1}(y,z)} \, \evalat{DH}{\phi^{-1}(y,z)} \, \evalat{D_z \phi^{-1}}{(y,z)}.\]
    Since \(\psi\) is a diffeomorphism, this implies \(\evalat{DH}{\phi^{-1}(y,z)} \, \evalat{D_z \phi^{-1}}{(y,z)} = 0\), and thus
    \[\evalat{D h_i}{\phi^{-1}(y,z)} \, \evalat{D_z \phi^{-1}}{(y,z)} = 0, \quad \forall i \in \overline{1, \ldots, k}.\]
    By the chain rule and assumption (i), we obtain
    \[\evalat{D_z (h \circ \phi^{-1})}{(y,z)} = \evalat{D h}{\phi^{-1}(y,z)} \, \evalat{D_z \phi^{-1}}{(y,z)} = 0.\]
    Thus, \(h \circ \phi^{-1}\) is independent of \(z\), completing the proof.
\end{proof}

\section{Proof for Section \ref{sec:general}}

\subsection{Proof for Proposition \ref{prop:symmetry_equivalence}}

\symmetryEquivalence*

\begin{proof}
Suppose \(L_{\mathcal D}\) has the differentiable symmetry \(\Gamma\), then 
\[\mathcal L(g, \theta) := L_{\mathcal D} \circ \psi (g, \theta) = L_{\mathcal D}(\psi(g, \theta))\] is a constant function with respect to \(g\). Thus, for all \(g \in \Gamma, \theta \in \Theta\), we have
\[0 = \evalat{(D_g \mathcal L)}{(g, \theta)} = \evalat{D L_{\mathcal D}}{\psi(g, \theta)} \evalat{D_g \psi}{(g, \theta)}.\]
In particular, for \(g = e\), we have
\[\evalat{(D_g \mathcal L)}{(e, \theta)} = \evalat{D L_{\mathcal D}}{\psi(e, \theta)} \evalat{D_g \psi}{(e, \theta)} = \evalat{D L_{\mathcal D}}{\theta} \evalat{D_g \psi}{(e, \theta)} = D L_{\mathcal D}(\theta) \; D_g \psi(e, \theta) = 0, \quad \forall \theta \in \Theta.\]
Taking the transpose yields \eqref{eq:symmetry_condition}.

Conversely, suppose \eqref{eq:symmetry_condition} holds for all \(\theta \in \Theta\). 
For any \(g \in \Gamma\) and any \(v \in T_g \Gamma\), there exists a smooth path \(\gamma: [0,1] \to \Gamma\) such that \(\gamma(0) = g, \dot \gamma(0) = v\). 
We can reparametrize 
\[\gamma(t) =(\gamma(t) g^{-1}) g =: \epsilon(t) g, \quad \epsilon(0) = e.\]
Then
\begin{align*}
    \frac{d}{dt}\mathcal L(\gamma(t), \theta) 
    = \frac{d}{dt} L_{\mathcal D}(\psi(\epsilon(t) g, \theta))
    = \frac{d}{dt} L_{\mathcal D}(\psi(\epsilon(t),\psi(g, \theta)))
    = \evalat{DL_{\mathcal D}}{\psi(\epsilon(t),\psi(g, \theta))} \evalat{D_g \psi}{(\epsilon(t),\psi(g, \theta))} \dot \epsilon(t).
\end{align*}
In particular, for \(t = 0\), we have
\[
\evalat{\frac{d}{dt} \mathcal L(\gamma(t), \theta)}{t = 0} 
= \evalat{DL_{\mathcal D}}{\psi(e,\psi(g, \theta))} \evalat{D_g \psi}{(e,\psi(g, \theta))} \dot \epsilon(0) 
= \evalat{DL_{\mathcal D}}{\psi(g, \theta)} \evalat{D_g \psi}{(e,\psi(g, \theta))} \dot \epsilon(0) = 0,
\]
where the last equality follows from the assumption \eqref{eq:symmetry_condition}. 
This proves that 
\[D_g \mathcal L(g, \theta)[v] = 0, \forall v \in T_g \Gamma.\]
Since \(v \in T_g \Gamma\) was arbitrary, we have shown that \(D_g \mathcal L(g, \theta) = 0\) for all \(g \in \Gamma\).

Finally, by the connectedness of \(\Gamma\), there exists a smooth path connecting any \(g \in \Gamma\) to the identity \(e\), and since the derivative along any such path is zero, we conclude that 
\[\mathcal L(g, \theta) = \mathcal L(e, \theta) = L_{\mathcal D}(\psi(e, \theta)) = L_{\mathcal D}(\theta)\] for all \(g \in \Gamma\).
\end{proof}

\subsection{Proof for Corollary \ref{cor:1dsymmetrycond}}

\oneDimensionalSymmetryCondition*

\begin{proof}
    Replace \(\Gamma\) with the additive group \(\R\) and the identity element \(e\) with \(0\) in Proposition \ref{prop:symmetry_equivalence}.
    Since \(D_t \psi(0, \theta)\) is a vector field, we can rewrite the equation \ref{eq:symmetry_equivalence_one_parameter} using the inner product notation as desired.
\end{proof}

\subsection{Proof for Corollary \ref{cor:1dsymmetrycond2}}
\globalOneDimensionalSymmetryCondition*

\begin{proof}
    The forward implication follows directly from Corollary \ref{cor:1dsymmetrycond} by noting that \(W_\theta\) is the span of the gradient flow directions for all datasets of size 1.

    For the reverse implication, for any dataset \(\mathcal D\), its gradient flow vector field 
    \[- \nabla_\theta L_{\mathcal D}(\theta) = - \frac 1 n \sum_{i=1}^n \nabla_\theta \ell_{(x^{(i)}, y^{(i)})}(\theta)\]
    belongs to \(W_\theta\). 
    Therefore, if \eqref{eq:symmetry_equivalence_one_parameter2} holds then 
    \[\langle D_t \psi(0, \theta), - \nabla_\theta L_{\mathcal D}(\theta) \rangle = 0, \quad \forall \theta \in \Theta.\]
    Corollary \ref{cor:1dsymmetrycond} then implies that \(\psi\) is a symmetry of \(L_{\mathcal D}\). 
    Since the dataset \(\mathcal D\) was arbitrary, this completes the proof.
\end{proof}

\subsection{Proof for Proposition \ref{prop:conservation_law_vector_field}}

\conservationLawVectorField*

\begin{proof}
    The first part of the proposition follows directly from the definition of a conservation law and the fact that the gradient of any scalar function is a closed vector field.
    The second part follows from the Poincar\'e lemma \cite[Theorem 11.50]{lee2003introduction}.
    The final part is the Poincar\'e lemma itself \cite[Theorem 11.49]{lee2003introduction}.
\end{proof}

\subsection{Proof for Proposition \ref{prop:symmetry_vector_field}}

\symmetryVectorField*
\begin{proof}
    The first part of the proposition follows directly from Corollary \ref{cor:1dsymmetrycond2}.

    The second part of the proposition follows from the standard theory of ordinary differential equations. 
    Since \(\chi\) is smooth, the existence and uniqueness of the maximal local flow \(\psi\) is guaranteed by the Picard-Lindel\"off theorem. 
    The first three properties of the maximal local flow listed in the proposition are standard results for the flow of a smooth vector field.
    The last property, \(L(\psi(t, \theta)) = L(\theta)\), follows from the fact that the flow direction \(\chi \in W^\perp\) is always perpendicular to the gradient flow distribution \(W\).

    Moreover, if the maximal local flow exists for all time, then by Corollary \ref{cor:1dsymmetrycond2}, \(\psi\) is a symmetry of the loss function \(L\).
\end{proof}

\subsection{Extension for Corollary \ref{cor:1dsymmetrycond}, Corollary \ref{cor:1dsymmetrycond2} and Propostion \ref{prop:symmetry_vector_field} to partial symmetries}
\label{app:partial_symmetry_extension}

It would have been more convenient to state all results in the context of partial symmetries, but we chose to first present the results in the context of global symmetries for clarity.

Every global symmetry is of course also a partial symmetry.
The nuance is just that you might not always be able to integrate a vector field to a global symmetry
Nevertheless, you can always integrate it locally to obtain a partial symmetry, and in a small neighborhood of a point, this partial symmetry is indistinguishable from a global symmetry.
Alternatively, if we assume that all vector fields that concern us are complete, then we can temporarily forget about the distinction between partial and global symmetries.

\begin{proposition}[Extension of Corollary \ref{cor:1dsymmetrycond}]
    \label{prop:extension_cor_1d}
    Let \(\Gamma = \R\) be the additive group of real numbers that acts on \(\Theta\) via a partial group action \(\psi(t, \theta)\). 
    Let \(\mathcal D\) be a dataset in \((\mathcal X \times \mathcal Y)^\ast\).
    A necessary and sufficient condition for \(\Gamma\) to be a differentiable symmetry of \(L_{\mathcal D}\) is
    \begin{equation*}
        \langle D_t \psi(0, \theta) ,\nabla_\theta L_{\mathcal D}(\theta)\rangle = 0, \quad \forall \theta \in \Theta.
    \end{equation*}
\end{proposition}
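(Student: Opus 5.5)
The plan is to mirror the proof of Corollary~\ref{cor:1dsymmetrycond} (that is, Proposition~\ref{prop:symmetry_equivalence} with \(\Gamma=\R\)), replacing every global statement by one restricted to the domain \(\mathcal O \subset \R \times \Theta\) of the partial action. We call \((\R,\psi)\) a (partial) differentiable symmetry of \(L_{\mathcal D}\) if \(L_{\mathcal D}(\psi(t,\theta)) = L_{\mathcal D}(\theta)\) for all \((t,\theta)\in\mathcal O\). We use two structural facts about \(\mathcal O\), valid for the maximal local flows of Proposition~\ref{prop:symmetry_vector_field}:
\begin{itemize}
\item \(\mathcal O\) is open and contains \(\{0\}\times\Theta\);
\item for each \(\theta\), the slice \(I_\theta := \{t : (t,\theta)\in\mathcal O\}\) is an open interval containing \(0\).
\end{itemize}
If the definition of partial action in use does not guarantee that \(I_\theta\) is connected, I would restrict the invariance claim to the connected component of \(I_\theta\) containing \(0\). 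I expect this to be the only genuine subtlety.

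\textbf{Necessity.} Fix \(\theta\). Since \(I_\theta\) contains a neighborhood of \(0\), the map \(t\mapsto L_{\mathcal D}(\psi(t,\theta))\) is constant near \(0\). Differentiating at \(t=0\) with the chain rule and using \(\psi(0,\theta)=\theta\) gives
\[
\langle \nabla_\theta L_{\mathcal D}(\theta), D_t\psi(0,\theta)\rangle = 0,
\]
exactly as in the first half of the proof of Proposition~\ref{prop:symmetry_equivalence}.

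\textbf{Sufficiency.} Fix \(\theta\) and \(t_0\in I_\theta\), and set \(\theta' := \psi(t_0,\theta)\). Because \(\mathcal O\) is open, both \((t_0+s,\theta)\) and \((s,\theta')\) lie in \(\mathcal O\) for all sufficiently small \(|s|\). The group law \(\psi(t_0+s,\theta)=\psi(s,\psi(t_0,\theta))\) holds wherever both sides are defined, so it applies on this range of \(s\). Differentiating in \(s\) at \(s=0\) then gives
\[
\frac{d}{dt}\Big|_{t=t_0} L_{\mathcal D}(\psi(t,\theta)) = \langle \nabla_\theta L_{\mathcal D}(\theta'), D_t\psi(0,\theta')\rangle = 0,
\]
by the hypothesis applied at the point \(\theta'\). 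Hence \(t\mapsto L_{\mathcal D}(\psi(t,\theta))\) has zero derivative on the interval \(I_\theta\), so it is constant there and equals its value \(L_{\mathcal D}(\theta)\) at \(t=0\).

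\textbf{Main obstacle.} The key step is transporting the infinitesimal condition from \(t=0\) to an arbitrary \(t_0\) using only the partially defined group law. This requires checking that the relevant pairs stay inside \(\mathcal O\), which is where openness of \(\mathcal O\) is used. Extending constancy from \(t_0\) to the whole slice additionally requires connectedness of \(I_\theta\), which is automatic for maximal flows of smooth vector fields.
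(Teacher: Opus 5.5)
Your proposal is correct and follows essentially the same route as the paper. Necessity comes from differentiating $t\mapsto L_{\mathcal D}(\psi(t,\theta))$ at $t=0$, and sufficiency from using the partial group law to transport the infinitesimal condition to $\psi(t_0,\theta)$ (the paper writes this as $D_t\psi(0,\psi(s,\theta)) = D_t\psi(s,\theta)$) and then integrating a vanishing derivative; your explicit use of the openness of $\mathcal O$ and the connectedness of the slices $I_\theta$ makes precise what the paper leaves implicit in ``whenever defined''.
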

\begin{proof}
    \((\Gamma, \psi)\) is a partial symmetry of \(L_{\mathcal D}\) if and only if \(L_{\mathcal D}(\psi(t, \theta))\) is constant with respect to \(t\) whenever defined, if and only if \(\frac{d}{dt} L_{\mathcal D}(\psi(t, \theta)) = 0\) whenever defined.
    By the chain rule, this is equivalent to 
    \[\langle D_t \psi(t, \theta) ,\nabla_\theta L_{\mathcal D}(\psi(t, \theta))\rangle = 0\] 
    whenever defined.
    Setting \(t = 0\) yields the forward implication.

    On the other hand, suppose \ref*{eq:symmetry_equivalence_one_parameter} holds for all \(\theta \in \Theta\).
    Then it must hold that 
    \[\langle D_t \psi(0, \psi(s,\theta)), \nabla_{\theta} L_D(\psi(s,\theta)) \rangle = 0\]
    for all \(s\) whenever defined.
    But since \(\psi\) satisfies the group action property whenever defined, we get
    \[D_t \psi(0, \psi(s, \theta)) = D_t \psi(s, \theta).\]
    Thus, \(\langle D_t \psi(s, \theta), \nabla_\theta L_{\mathcal D}(\psi(s,\theta))\rangle = 0\) for all \(s\) whenever defined, which is the equivalent condition above.
\end{proof}

\begin{proposition}[Extension of Corollary \ref{cor:1dsymmetrycond2}]
    \label{prop:extension_cor_1d2}
    Let \(\Gamma = \R\) be the additive group of real numbers that acts on \(\Theta\) via a partial group action \(\psi(t, \theta)\). 
    A necessary and sufficient condition for \(\Gamma\) to be a partial symmetry of \(L\) is
    \begin{equation*}
        D_t \psi(0, \theta) \perp W_\theta, \quad \forall \theta \in \Theta.
    \end{equation*}
\end{proposition}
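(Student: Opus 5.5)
The plan is to reduce the statement to Proposition~\ref{prop:extension_cor_1d}, which has just been established for partial group actions and a single fixed dataset, by quantifying over all datasets, in the same way Corollary~\ref{cor:1dsymmetrycond2} was deduced from Corollary~\ref{cor:1dsymmetrycond}. By definition, \((\R,\psi)\) is a partial symmetry of \(L\) exactly when it is a partial symmetry of \(L_{\mathcal D}\) for every finite dataset \(\mathcal D \in (\mathcal X \times \mathcal Y)^\ast\). So it suffices to show that the family of conditions
\(\langle D_t\psi(0,\theta), \nabla_\theta L_{\mathcal D}(\theta)\rangle = 0\), taken over all \(\mathcal D\) and all \(\theta\), is equivalent to \(D_t\psi(0,\theta)\perp W_\theta\) for all \(\theta\).

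\emph{Forward direction.} Assume \(\psi\) is a partial symmetry of \(L\).
\begin{itemize}
\item Apply Proposition~\ref{prop:extension_cor_1d} to each single-sample dataset \(\mathcal D=\{(x,y)\}\). Since \(L_{\mathcal D}=\ell_{(x,y)}\), this gives \(\langle D_t\psi(0,\theta),\nabla\ell_{(x,y)}(\theta)\rangle=0\) for every \((x,y)\in\mathcal X\times\mathcal Y\).
\item By \eqref{eqn:gradhperpgradL}, \(W_\theta\) is the linear span of exactly these gradients.
\item Orthogonality to every spanning vector is preserved under linear combinations, so \(D_t\psi(0,\theta)\perp W_\theta\).
\end{itemize}

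\emph{Reverse direction.} Assume \(D_t\psi(0,\theta)\perp W_\theta\) for all \(\theta\), and fix an arbitrary finite dataset \(\mathcal D\).
\begin{itemize}
\item Write \(\nabla L_{\mathcal D}(\theta)=\frac1n\sum_i\nabla\ell_{(x^{(i)},y^{(i)})}(\theta)\), which lies in \(W_\theta\).
\item Hence \(\langle D_t\psi(0,\theta),\nabla L_{\mathcal D}(\theta)\rangle=0\) for every \(\theta\in\Theta\).
\item Proposition~\ref{prop:extension_cor_1d} then shows \(\psi\) is a partial symmetry of \(L_{\mathcal D}\), meaning \(L_{\mathcal D}(\psi(t,\theta))=L_{\mathcal D}(\theta)\) wherever \(\psi\) is defined.
\item Since \(\mathcal D\) was arbitrary, \(\psi\) is a partial symmetry of \(L\).
\end{itemize}

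The only step needing care is the point, used inside Proposition~\ref{prop:extension_cor_1d}, that the infinitesimal condition at \(t=0\) propagates along each orbit. The domain of a partial action is an open neighborhood of \(\{0\}\times\Theta\) whose \(t\)-slices are intervals containing \(0\). The identity \(D_t\psi(s,\theta)=D_t\psi(0,\psi(s,\theta))\), valid wherever defined, therefore lets us integrate \(\frac{d}{ds}L_{\mathcal D}(\psi(s,\theta))=0\) over a connected interval. Beyond this, the argument is routine bookkeeping, and no new obstacle arises relative to the global case.
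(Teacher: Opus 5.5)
Your proposal is correct and matches the paper's proof, which reruns the argument of Corollary~\ref{cor:1dsymmetrycond2} with Proposition~\ref{prop:extension_cor_1d} in place of Corollary~\ref{cor:1dsymmetrycond}: single-sample datasets give the forward direction, and $\nabla L_{\mathcal D}(\theta)\in W_\theta$ gives the reverse. Your closing remark, that the $t$-slices of the domain must be intervals containing $0$, makes explicit a connectedness point that the paper's proof of Proposition~\ref{prop:extension_cor_1d} uses only tacitly; it holds for the maximal local flows the paper has in mind.
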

\begin{proof}
    The proof is identical to that of Corollary \ref{cor:1dsymmetrycond2}, except that we rely on Proposition \ref{prop:extension_cor_1d}, i.e., the extension of Corollary \ref{cor:1dsymmetrycond} to partial symmetries, instead of Corollary \ref{cor:1dsymmetrycond} itself.
\end{proof}

\begin{proposition}[Extension of Proposition \ref{prop:symmetry_vector_field}]
    \label{prop:extension_symmetry_vector_field}
    If \((\R, \psi)\) is a partial symmetry of the loss function \(L\), then \(\chi := D_t \psi \mid_{t = 0}\) is a smooth vector field in \(W^\perp\).
    On the other hand, if \(\chi \in W^\perp\) is a smooth vector field, then there exists a unique maximal local flow \(\psi: \mathcal O \to \Theta\), where \(\mathcal O \subset \R \times \Theta\) is the maximal open set containing \(\{0\} \times \Theta\), for which the solution of 
    \[\frac{d}{dt} \psi(t, \theta) = \chi(\psi(t, \theta)), \quad \psi(0, \theta) = \theta\]
    is well-defined. 
    This unique maximal local flow satisfies the following four equations for all \(\theta \in \Theta\) and \(t,s\) whenever the terms are well-defined:
    \begin{align*}
        &\psi(0, \theta) = \theta, \qquad 
        \psi(t + s, \theta) = \psi(t, \psi(s, \theta)), &\text{(group action axioms)}\\
        &D_t \psi(0, \theta) = \chi(\theta), &\text{(infinitesimal generator)}\\
        &L(\psi(t, \theta)) = L(\theta). &\text{(loss invariance condition)}
    \end{align*}
    Furthermore, if \(\chi\) is complete, then \(\psi\) is a symmetry of the loss function \(L\).
\end{proposition}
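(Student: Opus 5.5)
The proof runs in three steps: necessity via the first half of Proposition~\ref{prop:extension_cor_1d2}, sufficiency via the flow construction of Proposition~\ref{prop:symmetry_vector_field}, and completeness via Corollary~\ref{cor:1dsymmetrycond2}.

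\textbf{Necessity.} Let \((\R,\psi)\) be a partial symmetry of \(L\). Because the domain \(\mathcal O\) of \(\psi\) is an open set containing \(\{0\}\times\Theta\), the derivative \(\chi(\theta) := D_t\psi(0,\theta)\) is defined at every \(\theta\). For each data pair \((x,y)\), the single-sample loss \(\ell_{(x,y)}\) is the loss \(L_{\mathcal D}\) of the one-point dataset \(\mathcal D = \{(x,y)\}\). Hence \(t\mapsto \ell_{(x,y)}(\psi(t,\theta))\) is constant near \(t=0\). Differentiating at \(t=0\) gives \(\langle \chi(\theta), \nabla \ell_{(x,y)}(\theta)\rangle = 0\). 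Since \(W_\theta\) is spanned by these gradients, \(\chi\in W^\perp\). Smoothness of \(\chi\) is inherited from the smoothness of \(\psi\).

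\textbf{Sufficiency.} Let \(\chi\in W^\perp\) be smooth. I take \(\psi\) to be the maximal local flow of \(\chi\). The fundamental theorem on flows (e.g.\ \cite{lee2003introduction}) gives four facts.
\begin{enumerate}[label = (\roman*)]
\item The flow domain \(\mathcal O\) is open and contains \(\{0\}\times\Theta\).
\item \(\psi\) is smooth on \(\mathcal O\).
\item \(\psi(0,\theta)=\theta\).
\item \(\psi(t+s,\theta)=\psi(t,\psi(s,\theta))\) whenever both sides are defined.
\end{enumerate}
Facts (iii) and (iv) are the group action axioms. Uniqueness of \(\psi\) follows from uniqueness of maximal integral curves.

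\textbf{Loss invariance.} Fix a dataset \(\mathcal D\) and a point \(\theta\), and let \(t\) range over the connected interval \(\{t : (t,\theta)\in\mathcal O\}\). Then
\[
\frac{d}{dt} L_{\mathcal D}(\psi(t,\theta)) = \langle \nabla L_{\mathcal D}(\psi(t,\theta)),\, \chi(\psi(t,\theta))\rangle = 0,
\]
because \(\nabla L_{\mathcal D}\in W\) at every point while \(\chi\in W^\perp\) at every point. A function with zero derivative on an interval containing \(0\) is constant, so \(L_{\mathcal D}(\psi(t,\theta)) = L_{\mathcal D}(\theta)\) for all \(t\) in the interval.

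\textbf{Completeness.} If \(\chi\) is complete, then \(\mathcal O = \R\times\Theta\) and \(\psi\) is a genuine smooth \(\R\)-action. Corollary~\ref{cor:1dsymmetrycond2} then shows that \(\psi\) is a symmetry of \(L\).

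The only delicate point is that the invariance computation holds on the \(t\)-interval at each fixed \(\theta\), and that interval contains \(0\). This connectedness is what replaces the group connectedness used in Proposition~\ref{prop:symmetry_equivalence}. The rest is routine ODE theory.
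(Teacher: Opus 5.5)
Your proof is correct and takes essentially the same route as the paper. The paper obtains necessity from Proposition~\ref{prop:extension_cor_1d2}, and you inline that step by differentiating single-sample losses at \(t=0\); both then use standard flow theory for the local flow, \(\chi\perp W\) along flow lines for loss invariance, and Corollary~\ref{cor:1dsymmetrycond2} for the complete case. The one thing you leave implicit is the infinitesimal-generator identity \(D_t\psi(0,\theta)=\chi(\theta)\); it is immediate from the defining ODE at \(t=0\), and you rely on it in the completeness step.
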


\begin{proof}
    The first part of the proposition follows directly from Proposition \ref{prop:extension_cor_1d2}.
    The second part contains no difference compared to the corresponding part in Proposition \ref{prop:symmetry_vector_field}, and hence does not require a separate proof.
\end{proof}

\subsection{Proof for Corollary \ref{cor:law_induces_symmetry}}
\lawInducesSymmetry*
\begin{proof}
    Let \(h\) be a conservation law, then Proposition \ref{prop:conservation_law_vector_field} implies that \(\chi := \nabla h\) is a vector field in \(W^\perp\).
    Applying Proposition \ref{prop:symmetry_vector_field} to the vector field \(\chi\) completes the proof.

    On the other hand, let \(\psi\) be a partial symmetry of \(L\), then by Proposition \ref{prop:symmetry_vector_field}, its infinitesimal generator \(\chi := D_t \psi(0, \theta)\) is a vector field in \(W^\perp\).
    If \(\chi\) is also closed, then by Proposition \ref{prop:conservation_law_vector_field}, there exists a local conservation law \(h\) such that \(\nabla h = \chi\), up to an additive constant.
\end{proof}

\subsection{Proof for Theorem \ref{thm:law-symmetry-gap}}

\lawSymmetryGap*

\begin{proof}
    Since \(W\) is a smooth distribution of constant rank \(r\), its orthogonal complement \(W^\perp\) is a smooth distribution of constant rank \(D - r\).
    Thus, there exists a local smooth frame \(\chi_1, \ldots, \chi_{D-r}\) that completely spans \(W^\perp\) in a neighborhood \(U \ni \theta\).
    By Proposition \ref{prop:symmetry_vector_field}, each \(\chi_i\) generates a unique partial symmetry \((\R, \psi_i)\) of the loss function \(L\) in \(U\).
    Any other partial symmetry \((\R, \psi)\) of \(L\) must have its infinitesimal generator \(D_t \psi(0, \theta)\) in \(W^\perp\), and hence it is infinitesimally spanned by \(\chi_1, \ldots, \chi_{D-r}\) in \(U\).
    This proves (a).

    Since \(\operatorname{Lie}(W)\) is a smooth, involutive distribution of constant rank \(\overline r\), its orthogonal complement \(\operatorname{Lie}(W)^\perp\) is a smooth distribution of constant rank \(D - \overline r\). 
    By the Frobenius theorem, there exists a local coordinate system
    \[u_1, \ldots, u_{\overline r}, v_1, \ldots, v_{D - \overline r}\]
    around any \(\theta \in \Theta\) such that the vector fields \(\nabla v_1, \ldots, \nabla v_{D - \overline r}\) completely span \(\operatorname{Lie}(W)^\perp\)  in a neighborhood \(U \ni \theta\).
    Note that \(\nabla v_i\) is an exact vector field in \(W^\perp\).
    By Proposition \ref{prop:conservation_law_vector_field}, the functions \(h_i := v_i\) for \(i = 1, \ldots, D - \overline r\) are conservation laws of the loss function \(L\) in \(U\).
    Any other conservation law \(h\) must have its gradient \(\nabla h\) lie in \(\operatorname{Lie}(W)^\perp\), and hence it is infinitesimally spanned by \(\nabla h_1, \ldots, \nabla h_{D - \overline r}\) in \(U\).
    This proves (b).

    Finally, since \(W \subseteq \operatorname{Lie}(W)\), we have \(r \leq \overline r\), and hence \(s - c = \overline r - r \geq 0\).
    Equality holds if and only if \(r = \overline r\), which is equivalent to \(W\) being involutive, i.e., \(\operatorname{Lie}(W) = W\).
\end{proof}

\section{Proof for Section \ref{sec:multilayer}}

\subsection{Loss functions that satisfy Assumption \ref{as:separability}}
\label{app:loss_assumption}

\separability*

Here, we review some common loss functions that satisfy Assumption \ref{as:separability}.
This is not an exhaustive list since the assumption is very general.
The idea is that the loss function should be able to distinguish between different outputs, which is a very mild requirement for most loss functions used in practice.
We do not restrict the list to only loss functions with a specific regularity that satisfies our smoothness assumption because this is a technical assumption that may be relaxed in future work.
    
\begin{restatable}{lemma}{lossSeparability}
    \label{lem:loss-separability}
    The following loss functions satisfy Assumption \ref{as:separability}:
    \begin{enumerate}
        \item Assuming \(\mathcal Z = \mathcal Y\), any loss function \(\ell\) that satisfies \(\ell(z, z)= 0\) and \(\ell(z, y) > 0\) for all \(z \neq y\).
        \item Assuming \(\mathcal Z = \mathcal Y\), any loss function \(\ell\) that is also a metric on \(\mathcal Z\), such as the Euclidean distance, or any \(L^p\) distance.
        \item The quadratic loss 
            \[\ell:\R^{n} \times \R^n \to \R, \quad \ell(z, y) = \|z - y\|^2_2.\]
        \item The absolute loss
            \[\ell:\R^{n} \times \R^n \to \R, \quad \ell(z, y) = \|z - y\|_1.\]
        \item Any \(L^p\) loss for \(p \in (0, \infty)\)
            \[\ell:\R^{n} \times \R^n \to \R, \quad \ell(z, y) = \|z - y\|_p^p.\]
        \item The multiclass cross-entropy loss on the probability simplices
            \[\ell: \Delta_{++}^{V-1} \times \Delta_{+}^{V-1} \to \R, \quad \ell(z, y) = -\sum_{i = 1}^V y_i \log z_i.\]
        \item The KL divergence loss on the probability simplices
            \[\ell: \Delta_{++}^{V-1} \times \Delta_{+}^{V-1} \to \R, \quad \ell(z, y) = \sum_{i = 1}^V y_i \log\frac{y_i}{z_i}.\]
        \item The cosine loss on unit vectors
            \[\ell: S^{n-1} \times S^{n-1} \to \R, \quad \ell(z, y) = 1 - z^\top y.\]
    \end{enumerate}
\end{restatable}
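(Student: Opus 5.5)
We prove Lemma~\ref{lem:loss-separability} by checking Assumption~\ref{as:separability} case by case. The common strategy is to fix two outputs \(z, z'\) with \(\ell(z,y)=\ell(z',y)\) for all \(y\in\mathcal Y\), and then pick one well-chosen label \(y\) (or a small family of labels) that separates \(z\) from \(z'\).

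\textbf{Items 1--5.} For item 1, take \(y=z\). Then \(\ell(z',z)=\ell(z,z)=0\), and the strict positivity hypothesis forces \(z'=z\). Item 2 is a special case of item 1, because a metric satisfies \(d(z,z)=0\) and \(d(z,y)>0\) for \(y\neq z\). Items 3, 4 and 5 are in turn special cases of item 1 with \(\mathcal Z=\mathcal Y=\R^n\): in each, \(\|z-y\|_p^p\) (with \(p=2\) for item 3 and \(p=1\) for item 4) is zero exactly when \(z=y\) and positive otherwise, for every \(p\in(0,\infty)\). This holds even for \(p<1\), where \(\|\cdot\|_p\) is not a norm, since the sum of \(|z_i-y_i|^p\) vanishes iff every coordinate difference vanishes.

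\textbf{Item 6 (cross-entropy).} This is the only case that needs a different argument, because \(\ell(z,z)\) is the entropy of \(z\), which is not zero. Instead I take \(y=e_i\), the \(i\)-th standard basis vector, which lies in \(\Delta_+^{V-1}\) provided \(\Delta_+\) is the closed simplex with nonnegative entries; I will state this reading of the notation. With this label the loss is \(-\log z_i\), so equality of losses gives \(\log z_i=\log z'_i\) and hence \(z_i=z'_i\) for every \(i\). Because \(z\) lies in \(\Delta_{++}\), all logarithms are finite and the comparison is legitimate. If the label space were instead the open simplex, I would use labels \(y\) varying in an open set: \(y\mapsto \sum_i y_i\log z_i\) is linear in \(y\), and equality on an open subset of the simplex determines \(\log z\) up to an additive constant. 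The constraint \(\sum_i z_i=1\) then fixes that constant, so \(z=z'\).

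\textbf{Item 7 (KL divergence).} The KL loss equals the cross-entropy minus the negative entropy of \(y\), and that extra term does not depend on \(z\). So \(\ell_{\mathrm{KL}}(z,y)=\ell_{\mathrm{KL}}(z',y)\) for all \(y\) is equivalent to the same statement for cross-entropy, and item 6 applies. Alternatively, take \(y=z\in\Delta_{++}\subset\Delta_+\) and use Gibbs' inequality: \(\mathrm{KL}(z\,\|\,z')=0\) forces \(z'=z\).

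\textbf{Item 8 (cosine loss).} Take \(y=z\in S^{n-1}\). Then \(1-z^\top z'=1-z^\top z=0\), so \(z^\top z'=1\). For unit vectors, the equality case of Cauchy--Schwarz gives \(z'=z\).

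The only real obstacle is item 6, specifically whether the label space admits one-hot vectors. I will handle it with the open-set linear argument above if it does not.
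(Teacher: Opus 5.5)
Your proof is correct and follows the paper's argument essentially verbatim: you choose \(y=z\) to reduce items 2--5, 7 and 8 to the positivity criterion of item 1 (Gibbs' inequality for KL, the Cauchy--Schwarz equality case for the cosine loss), and you use one-hot labels \(y=e_k\in\Delta_+^{V-1}\) for cross-entropy. Your uniform treatment of item 5 via item 1 is slightly cleaner than the paper's remark, which asserts that \(\|z-y\|_p^p\) is a metric for \(p\ge 1\); in fact it is a metric for \(p\le 1\) and not for \(p>1\), though this does not affect the conclusion.
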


\begin{proof}
\mbox{} 
\begin{enumerate}
    \item Choose \(y = z\) to obtain \(\ell(z, z) = 0\), then \(\ell(z', z) = l(z, z) = 0\), implying \(z' = z\) by the assumption that \(\ell(z, y) > 0\) for all \(z \neq y\).
    \item Since \(\ell\) is a metric, it satisfies the condition 1 above.
    \item The quadratic loss satisfies the condition 1 above.
    \item The absolute loss is a metric.
    \item The \(L^p\) loss is a metric for \(p \geq 1\). For \(p \in (0, 1)\), the \(L^p\) loss is not a metric, but it still satisfies the condition 1 above.
    \item  Multiclass cross-entropy loss: assume that
    \[
    -\sum_{i=1}^V y_i\log z_i
    =
    -\sum_{i=1}^V y_i\log z'_i
    \qquad
    \text{for all }y\in\Delta_{+}^{V-1}.
    \]

    For each \(k\in\{1,\ldots,V\}\), choose \(y=e_k\), where \(e_k\) is the
    \(k\)-th standard basis vector. Since \(e_k\in\Delta_{+}^{V-1}\), we obtain
    \[
    -\log z_k=-\log z'_k.
    \]
    Because the logarithm is injective, we must have \(z_k=z'_k\) for every index \(k\). Since this holds for every \(k\), we get \(z=z'\).
    \item The KL divergence loss satisfies the condition 1 above. Indeed, applying the inequality \(\log t \leq t -1, \forall t > 0\), we get
    \begin{align*}
        D_{\mathrm{KL}}(y\|z) = \sum_{i} y_i \log\frac{y_i}{z_i} = - \sum_i y_i \log\frac{z_i}{y_i} \geq - \sum_i y_i \left(\frac{z_i}{y_i} - 1\right) = - \sum_i (z_i - y_i) = 0,
    \end{align*}
    with equality if and only if \(z_i = y_i\) for all \(i\).
    \item The cosine loss satisfies the condition 1 above. Indeed, since \(z\) and \(y\) are unit vectors, the only way to have \(\ell(z, y) = 0\) is if \(z = y\).
    \end{enumerate}
\end{proof}

\subsection{Proof for Proposition \ref{prop:separability-symmetry}}

\separabilitySymmetry*

\begin{proof}
    Let \(\psi\) be a partial symmetry of the loss function \(L\). 
    This means that for all \((x, y) \in \mathcal X \times \mathcal Y\),
    \[\ell(G(\psi(t, \theta), x), y) = \ell(G(\theta, x), y).\]
    By Assumption \ref{as:separability}, this implies that for all \(x \in \mathcal X\),
    \[G(\psi(t, \theta), x) = G(\theta, x).\]
    This shows that \(\psi\) is a functional symmetry of the model \(G\).
\end{proof}

\subsection{Proof for Proposition \ref{prop:inheritance_1}}
\label{subsec:proof-inheritance-1}

Before proceeding to the proof, we first provide Table~\ref{tab:inheritance_examples} of examples of function combinations that satisfy the identifiability condition (\ref{eqn:indentifiability_1}) in Proposition \ref{prop:inheritance_1}.

\begin{table}[ht!]
    \caption{Examples of function combinations compatible with
Proposition~\ref{prop:inheritance_1}.
The listed operations occur within the indicated architectures. The categories are not mutually exclusive. The list is not exhaustive. We write $g_\theta(u)=g(\theta,u)$, and $\mathcal N(i)$ is the neighborhood of node $i$.} 
\label{tab:inheritance_examples}
\begin{center}
\small
\setlength{\tabcolsep}{5pt}
\renewcommand{\arraystretch}{1.15}

\begin{tabularx}{\textwidth}{
    @{}
    >{\raggedright\arraybackslash}p{0.25\textwidth}
    >{\raggedright\arraybackslash}p{0.35\textwidth}
    >{\raggedright\arraybackslash}X
    @{}
}
\textbf{Combination Type}
    & \textbf{Operation}
    & \textbf{Typical Examples} \\
\midrule

Additive
    & $z = g_\theta(x) + f_\omega(x)$
    & ResNet \citep{He2015DeepRL}, LoRA \citep{hu2022lora} \\
\addlinespace

Concatenation
    & $z = [g_\theta(x);\, f_\omega(x, g_\theta(x))]$
    & Inception \citep{Szegedy2015}, U-Net \citep{Ronneberger2015unet} \\
\addlinespace

Composition
    & $
        z = f_\omega(g_\theta(x)), \text{ or }
        z = g_\theta(f_\omega(x))$
    & MLPs \citep{Rumelhart1986LearningRB}, CNNs \citep{lecun98gradientbased} \\
\addlinespace

Hadamard Product / Gating
    & $z = g_\theta(x) \odot f_\omega(x)$
    & LSTM \citep{hochreiter1997long} and GRU gates \citep{cho-etal-2014-learning} \\
\addlinespace

Input-dependent Weighted Sum
    & $z = a_\omega(x)\,g_\theta(x) + f_\omega(x)$
    & Mixture-of-Experts \citep{shazeer2017outrageously} \\
\addlinespace

Outer Product
    & $Z = g_\theta(x)\,f_\omega(x)^\top$
    & Bilinear CNNs \citep{Lin_2015_ICCV} \\
\addlinespace

Graph Aggregation
    & $z_i =
       \operatorname{AGG}_\omega
       \bigl(x_i, \{g_\theta(x_j, x_i, e_{ij}):j\in\mathcal N(i)\}\bigr)$
    & MPNNs \citep{gilmer2017neuralmessagepassing} \\
\addlinespace

Shared Weights Across Branches
    & $z_i = f_{i,\omega}(x,g_\theta(x)),\quad i=1,\ldots,m$
    & Grouped-query attention \citep{ainslie2023gqa}\\
\addlinespace

Repeated Composition
    & $z_t = g_\theta(z_{t-1},x_t),$
    & RNNs \citep{cho-etal-2014-learning}\\
\end{tabularx}
\end{center}
\end{table}

\inheritanceI*

\begin{proof}
\mbox{} 
    \begin{itemize}[leftmargin=*]
        \item Let \((\R, \psi)\) be a partial loss symmetry of the subnetwork \(g\) with respect to a loss satisfying Assumption~\ref{as:separability}.
            By Proposition \ref{prop:separability-symmetry}, \((\R, \psi)\) is also a partial functional symmetry of \(g\), i.e.,
            \[g(\psi(t, \theta), x) = g(\theta, x), \quad \forall \theta \in \Theta, x \in \mathcal X^g, t \text{ whenever well-defined}.\]
            Condition \ref{eqn:indentifiability_1} implies that for all \((\theta, \omega) \in \Theta \times \Omega\), \(x \in \mathcal X\), and \(t\) whenever well-defined,
            \[
                G(\tilde{\psi} (t, (\theta, \omega)), x) 
                = G((\psi(t, \theta), \omega), x) 
                = G((\theta, \omega), x).
            \]
            Thus, \((\R, \tilde{\psi})\) is a partial functional symmetry of the larger network \(G\), and hence also a partial loss symmetry of \(G\) with respect to any loss function.

        \item Let \(h\) be a conservation law of the subnetwork \(g\) with respect to a loss satisfying Assumption~\ref{as:separability}.
            By Corollary \ref{cor:law_induces_symmetry}, there exists a unique partial loss symmetry \((\R, \psi)\) of \(g\) associated with the underlying vector field \(\chi := \nabla h\).
            The previous argument shows that the extension \((\R, \tilde{\psi})\) is a partial loss symmetry of the larger network \(G\) for any loss function. 
            
            Importantly, the underlying vector field of \((\R, \tilde{\psi})\) is exactly the extension of \(\chi\) to the larger parameter space \(\Theta \times \Omega\), i.e.,
            \[\tilde \chi := D_t \tilde \psi (0, (\theta, \omega)) = (\chi, 0).\]

            By Proposition \ref{prop:symmetry_vector_field}, the underlying vector field \(\tilde \chi\) of \((\R, \tilde{\psi})\) is in the annihilator of the gradient flow distribution of the larger network \(G\), i.e., \(\tilde \chi \in W^\perp\) for any loss function.
            Note that this vector field is exactly the gradient of the extended function \(\tilde h\), i.e., \(\tilde \chi = \nabla \tilde h\).
            Since \(\tilde \chi \in W^\perp\) and \(\tilde h\) is a scalar potential function of \(\tilde \chi\), Proposition \ref{prop:conservation_law_vector_field} implies that \(\tilde h\) is a conservation law of the larger network \(G\) with respect to any loss function.
    \end{itemize}
\end{proof}

\subsection{Proof for Proposition \ref{prop:inheritance_2}}
\inheritanceII*

\begin{proof}
\mbox{} 
    \begin{itemize}[leftmargin=*]
        \item Let \((\R, \psi)\) be a partial loss symmetry of the larger network \(G\) with respect to a loss satisfying Assumption \ref{as:separability}, where for any \(t\) and \((\theta_1, \theta_2) \in \Theta^1 \times \Theta^2\), we have
        \[\psi(t, (\theta_1, \theta_2)) =
        \begin{pmatrix}
            \psi^1(t, (\theta_1, \theta_2))\\
            \psi^2(t, (\theta_1, \theta_2))
        \end{pmatrix} \in \Theta^1 \times \Theta^2 . 
        \]
            By Proposition \ref{prop:separability-symmetry}, \((\R, \psi)\) is also a partial functional symmetry of \(G\), i.e.,
            \[
            G(\psi(t, (\theta_1, \theta_2)), \cdot) = G((\theta_1, \theta_2), \cdot), \quad \forall (\theta_1, \theta_2) \in \Theta^1 \times \Theta^2, t \text{ whenever well-defined}.
            \]
            Condition \ref{eqn:identifiability_2} now implies that for all \(j = 1, 2\), we have
            \begin{equation*}
                g^j(\psi^j(t, (\theta_1, \theta_2)), \cdot) = g^j(\theta_j, \cdot), \quad \forall (\theta_1, \theta_2) \in \Theta^1 \times \Theta^2, t\text{ whenever well-defined}.
            \end{equation*}

            Observe that the right hand side of the previous equation does not depend on \(t\), and hence so does the left hand side. 
            By taking its derivative with respect to \(t\) at \(t = 0\), we obtain
            \[D_{\theta_j} g^j (\psi^j(0, (\theta_1, \theta_2)), \cdot) \, D_t \psi^j(0, (\theta_1, \theta_2)) = 0, \quad \forall (\theta_1, \theta_2) \in \Theta^1 \times \Theta^2.\]
            By definition, the parameter term inside \(g^j\) is 
            \(\psi^j(0, (\theta_1, \theta_2)) = \theta_j\), and so we have
            \begin{equation}
                \label{eqn:individual_symmetry}
                D_{\theta_j} g^j (\theta_j, \cdot) \, D_t \psi^j(0, (\theta_1, \theta_2)) = 0, \quad \forall (\theta_1, \theta_2) \in \Theta^1 \times \Theta^2.
            \end{equation}

            For any fixed \(\theta_2\), \eqref{eqn:individual_symmetry} shows that  the vector field \(D_t \psi^1(0, (\theta_1, \theta_2))\) is an infinitesimal functional symmetry direction of \(g^1\) at any \(\theta_1\).
            Thus, it is infinitesimally generated by the complete set of partial symmetries \((\R, \psi^1_i)_{i=1}^{s_1}\) of \(g^1\), i.e.,
            \[D_t \psi^1(0, (\theta_1, \theta_2)) \in \operatorname{span}\{D_t \psi^1_i(0, \theta_1), i = 1, \ldots, s_1\}.\]
            Note that 
            \[D_t \tilde \psi^1_i(0, (\theta_1, \theta_2)) = \begin{pmatrix}
                D_t \psi^1_i(0, \theta_1)\\
                0
            \end{pmatrix}, \forall i = 1, \ldots, s_1\]
            and so
            \begin{align*}
                \begin{pmatrix}
                D_t \psi^1(0, (\theta_1, \theta_2))\\
                0
            \end{pmatrix} 
            &\in \operatorname{span} \left\{\begin{pmatrix}
                D_t \psi^1_i(0, \theta_1)\\
                0
            \end{pmatrix}, i = 1, \ldots, s_1\right\}
            \\ 
            &= \operatorname{span}\{D_t \tilde \psi^1_i(0, (\theta_1, \theta_2)), i = 1, \ldots, s_1\}.
            \end{align*}

            Similarly, for any fixed \(\theta_1\), the same argument shows that
            \[\begin{pmatrix}
                0\\
                D_t \psi^2(0, (\theta_1, \theta_2))
            \end{pmatrix} \in \operatorname{span}\{D_t \tilde \psi^2_i(0, (\theta_1, \theta_2)), i = 1, \ldots, s_2\}.\]

            Combining the two results shows
            \[D_t \psi(0, (\theta_1, \theta_2)) = \begin{pmatrix}
                D_t \psi^1(0, (\theta_1, \theta_2))\\0
            \end{pmatrix} + \begin{pmatrix}
                0\\ D_t \psi^2(0, (\theta_1, \theta_2))
            \end{pmatrix} \]
            is in the span of \(D_t \tilde \psi^j_i(0, (\theta_1, \theta_2)), j = 1, 2, i = 1, \ldots, s_j\).
            Thus, any partial loss symmetry of the larger network \(G\) is infinitesimally generated by the extensions of the complete set of partial functional symmetries of the subnetworks \(g^1\) and \(g^2\).

            \item Let \(h\) be a conservation law of the larger network \(G\) with respect to a loss satisfying Assumption \ref{as:separability}.
            By Corollary \ref{cor:law_induces_symmetry}, there exists a unique partial loss symmetry \((\R, \psi)\) of \(G\) given by 
            \[\psi(t, (\theta_1, \theta_2)) = \begin{pmatrix}
                \psi^1(t, (\theta_1, \theta_2))\\
                \psi^2(t, (\theta_1, \theta_2))
            \end{pmatrix}\]
            associated with the underlying vector field \(\chi := \nabla h = (\nabla_{\theta_1} h, \nabla_{\theta_2} h)\).

            The previous argument shows that for any fixed \(\theta_2\), 
            \[\nabla_{\theta_1} h (\theta_1, \theta_2) = D_t \psi^1(0, (\theta_1, \theta_2))\]
            is an infinitesimal functional symmetry direction of \(g^1\) at \(\theta_1\).
            Thus, it is also an infinitesimal loss symmetry direction of \(g^1\), i.e., \(\nabla_{\theta_1} h \in (W^1)^\perp\).
            Note that \(\nabla_{\theta_1} h\) is exactly the gradient of the function \(h(\theta_1, \theta_2)\) with fixed \(\theta_2\).
            Proposition \ref{prop:conservation_law_vector_field} implies that \(h(\cdot, \theta_2)\) is a conservation law of the subnetwork \(g^1\).
            Since \(h^1_1, \ldots, h^1_{c_1}\) is a complete set of conservation laws of \(g^1\), we have
            \[\nabla_{\theta_1} h(\theta_1, \theta_2) \in \operatorname{span}\{\nabla_{\theta_1} h^1_i(\theta_1): i = 1, \ldots, c_1\}, \quad \forall \theta_1 \in \Theta^1.\]
    
            Similarly, for any fixed \(\theta_1\), the same argument shows that
            \[
            \nabla_{\theta_2} h(\theta_1, \theta_2) \in \operatorname{span}\{\nabla_{\theta_2} h^2_i(\theta_2): i = 1, \ldots, c_2\}, \quad \forall \theta_2 \in \Theta^2.
            \]

            Combining the two results shows that 
            \[\nabla h = (\nabla_{\theta_1} h, \nabla_{\theta_2} h) = \begin{pmatrix}
                \nabla_{\theta_1} h\\
                0
            \end{pmatrix} + \begin{pmatrix}
                0\\
                \nabla_{\theta_2} h
            \end{pmatrix}\]
            is in the span of \(\nabla \tilde h^j_i, j = 1, 2, i = 1, \ldots, c_j\).
            
            We conclude that any conservation law of the larger network \(G\) is infinitesimally generated by the extensions of the complete set of conservation laws of the subnetworks \(g^1\) and \(g^2\).
    \end{itemize}
\end{proof}

\subsection{Proof for Theorem \ref{thm:inheritance_3}}

\inheritanceIII*

\begin{proof}
    Applying Proposition \ref{prop:inheritance_1} for each subnetwork \(g^j\) shows that the extensions of the partial functional symmetries and conservation laws of \(g^j\) are also partial functional symmetries and conservation laws of \(G\).
    Applying Proposition \ref{prop:inheritance_2} recursively shows that any partial functional symmetry or conservation law of \(G\) is infinitesimally generated by the extensions of the partial functional symmetries and conservation laws of the subnetworks \(g^j\).
    
    The final step is to show that the extensions of the partial functional symmetries and conservation laws of the subnetworks \(g^j\) are independent.
    But this follows from the fact that the extensions of the partial functional symmetries and conservation laws of each subnetwork \(g^j\) are independent, and the extensions of the partial functional symmetries and conservation laws of different subnetworks \(g^j\) act on disjoint parameter spaces.

    Therefore, the extensions of the partial functional symmetries and conservation laws of the subnetworks \(g^j\) form a complete set of partial functional symmetries and conservation laws of \(G\).
\end{proof}

\section{Proof for Section \ref{sec:application}}

\subsection{Proof for Proposition \ref{prop:grouped-query-attention}}

\label{app:gqa}

To prove this result, we first recall \citep[Theorem 3.1]{tran2025equivariantneuralfunctionalnetworks}, which gives an important result on the functional symmetries of self-attention-like mechanisms.
This allows us to prove the compositional identifiability condition \ref{eqn:identifiability_3}.

\begin{theorem}[\cite{tran2025equivariantneuralfunctionalnetworks}]
    \label{thm:hoang_theorem}
    Let \(D\) be a positive integer. Assume that for a positive integer \(k\), matrices \(A_1, A_2, \ldots, A_k \in \R^{D \times D}\) and \(B_1, B_2, \ldots, B_k \in \R^{D \times D}\), we have 
    \[F(X; \{A_i, B_i\}_{i=1}^k) := \sum_{i=1}^k \operatorname{softmax}(X A_i X^\top) X B_i = 0,\]
    for all positive integers \(L\) and \(X \in \R^{L \times D}\). Then, if \(A_1, A_2, \ldots, A_k\) are pairwise distince, then
    \[B_1 = \cdots = B_k = 0.\]
\end{theorem}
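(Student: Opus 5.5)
The plan is to test the identity $F\equiv 0$ on very short sequences, $L=1$ and $L=2$. This reduces the claim to the linear independence of a finite family of logistic functions, which I would then settle by looking at their poles in the complex plane.

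\textbf{Step 1 ($L=1$).} Take $L=1$ and $X=x\in\R^{1\times D}$. The softmax of a $1\times 1$ matrix equals $1$, so $x\sum_i B_i=0$ for all $x$. Hence
\[
\sum_i B_i=0 .
\]

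\textbf{Step 2 ($L=2$).} Take $L=2$ with rows $x$ and $y$, and write $u:=x-y$. The first row of $\operatorname{softmax}(XA_iX^\top)$ is $(\sigma_i,1-\sigma_i)$, where
\[
1-\sigma_i=\bigl(1+e^{xA_iu^\top}\bigr)^{-1}.
\]
The first row of $F$ is therefore $\sum_i[\sigma_i x+(1-\sigma_i)y]B_i$. Subtracting $x\sum_iB_i=0$ from Step 1 leaves
\[
\sum_{i=1}^k \bigl(1+e^{\langle x,c_i\rangle}\bigr)^{-1} w_i=0 \quad\text{for all }x\in\R^{1\times D},
\]
where $c_i:=A_iu^\top$ and $w_i:=uB_i$. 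Since $x$ and $u$ range independently, this holds for every $u$.

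\textbf{Step 3 (genericity).} Because $A_i\neq A_j$ for $i\neq j$, a generic $u$ makes all $c_i$ pairwise distinct, and makes $c_i\neq 0$ unless $A_i=0$ (which happens for at most one index). Restrict to a line $x=tv$ with $v$ generic, and set $\lambda_i:=\langle v,c_i\rangle$. Then the $\lambda_i$ are pairwise distinct. Moreover $|\lambda_i|=|\lambda_j|$ with $i\neq j$ can only happen when $c_i=-c_j$.

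\textbf{Step 4 (the main obstacle).} It remains to show that
\[
\sum_i w_i\bigl(1+e^{\lambda_i t}\bigr)^{-1}\equiv 0 \quad\Longrightarrow\quad w_i=0 \text{ for all } i .
\]
The difficulty is that the functions $f_\lambda(t):=(1+e^{\lambda t})^{-1}$ are not independent by themselves: $f_{-\lambda}=1-f_\lambda$, and $f_0=\tfrac12$.
\begin{enumerate}[label=(\alph*)]
\item Use these two relations to rewrite the sum as $C+\sum_{\mu}\beta_\mu f_\mu(t)$. Here $\mu$ runs over distinct positive values of $|\lambda_i|$. Each $\beta_\mu$ is either $w_i$ (when the value $\mu$ comes from a single index) or $w_i-w_j$ (when $\lambda_i=\mu$ and $\lambda_j=-\mu$). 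The constant $C$ collects the $w_j$ from the negated partners and $w_0/2$ from a possible zero $\lambda$.
\item Extend to complex $t$. The function $f_\mu$ is meromorphic with simple poles at $t=i\pi(2m+1)/\mu$. For distinct $\mu>0$ the smallest poles $i\pi/\mu$ are distinct and are not poles of any other $f_{\mu'}$. An identically zero meromorphic function has no poles, so every $\beta_\mu=0$.
\item With all $\beta_\mu=0$, the identity gives $C=0$. In the paired case $\beta_\mu=0$ means $w_i=w_j$. Combining these, each $w$ in a $\pm$ pair vanishes, and in the zero case $w_0/2=0$. In every case $w_i=0$.
\end{enumerate}

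\textbf{Step 5 (conclusion).} Steps 3 and 4 give $uB_i=0$ for all $u$ in a dense set. By continuity $uB_i=0$ for every $u$, so $B_i=0$ for all $i$.
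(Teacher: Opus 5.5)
Steps 1--3 are correct. Step 4(c), however, does not follow, and the gap cannot be closed using only $L\le 2$.

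After (b), unpaired indices give $w_i=0$ and each opposite pair gives $w_i=w_j$. But $C=0$ is then a \emph{single} vector relation, $\sum_{\text{pairs}} w_j+\tfrac12 w_0=0$. In fact it is equivalent to $u\sum_i B_i=0$, i.e.\ to Step 1. So it kills the remaining $w$'s only if at most one pair-or-zero group is present. For generic $u,v$, opposite pairs occur exactly when $A_j=-A_i$, and a zero $\lambda$ occurs exactly when some $A_i=0$. Pairwise distinctness allows several such groups, and genericity cannot remove them.

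In those configurations the data at $L\le 2$ genuinely do not determine the $B_i$. For any $2\times 2$ matrix $S$, the row-wise softmax satisfies $\operatorname{softmax}(S)+\operatorname{softmax}(-S)=\mathbf 1\mathbf 1^\top$, because $(1+e^{s})^{-1}+(1+e^{-s})^{-1}=1$. So take $A\neq 0$, $(A_1,A_2,A_3)=(A,-A,0)$ and $(B_1,B_2,B_3)=(B,B,-2B)$. Then $F(X)=\mathbf 1\mathbf 1^\top XB-2\cdot\tfrac12\mathbf 1\mathbf 1^\top XB=0$ for every $X$ with $L\in\{1,2\}$, although $B\neq 0$. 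Two pairs $\pm A,\pm A'$ with $B,B,-B,-B$ behave the same way.

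The missing idea is to use longer sequences, which the hypothesis allows. Take $X$ with first row $x$ followed by $m\ge 2$ copies of $y$. The computation of Step 2 goes through unchanged, except that the repeated key shifts the logit by $\log m$, and you obtain
\[
\sum_{i=1}^k \bigl(1+e^{\lambda_i t-\log m}\bigr)^{-1} w_i=0\qquad\text{for all } t\in\R .
\]
For $\lambda_i\neq 0$, all poles of the $i$-th term lie on the vertical line $\operatorname{Re}t=\log m/\lambda_i$. These lines are distinct for distinct nonzero $\lambda_i$, in particular for $\lambda$ and $-\lambda$. Comparing residues gives $w_i=0$ whenever $\lambda_i\neq 0$, and the remaining constant term $\tfrac{m}{m+1}w_0$ then forces $w_0=0$.

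This makes the case analysis in (a) unnecessary. It also avoids a smaller error in (b): $i\pi/\mu$ \emph{is} a pole of $f_{3\mu}$, so that step would have to proceed by induction from the largest $\mu$. The paper gives no proof of this statement; it imports it from \citet{tran2025equivariantneuralfunctionalnetworks}. With this repair, your argument would be a self-contained elementary proof.
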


We also summarize \cite[Section 4.1]{marcotte2024abidelawfollowflow}, which gives a complete characterization of indepdendent conservation laws of the matrix factorization \(U V^\top\).

\begin{lemma}
    \label{lem:UV_lemma}
    Let \(U \in \R^{n \times r}, V \in \R^{m \times r}\) be full column rank matrices. 
    Then, the following upper triangular components of the following matrix function form a complete set of independent conservation laws of the matrix factorization \(\phi(U,V) = U V^\top\):
    \[H(U, V) = U^\top U - V^\top V.\]
\end{lemma}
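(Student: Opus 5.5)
The plan is to run the argument of Section~\ref{sec:general} for $\phi(U,V)=UV^\top$. I would compute the gradient flow distribution $W$ explicitly, show that the entries of $H$ are conserved and independent, and then prove completeness. For completeness I will not compute $\dim \operatorname{Lie}(W)$. Instead, I will show directly that the gradient of any conservation law lies in the span of the $\nabla H_{ab}$, and then invoke Proposition~\ref{prop:independentfunctions}.

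\emph{Step 1: describe $W$.} By the chain rule, $\nabla \ell_{(x,y)}(U,V) = (\Delta V, \Delta^\top U)$, where $\Delta = \nabla_{\phi}\ell \in \R^{n\times m}$. Following the reduction in \cite{marcotte2024abidelawfollowflow}, I take the loss gradients to range over a spanning set of $\R^{n\times m}$. Then
\[
W_{(U,V)}=\{(\Delta V,\Delta^\top U):\Delta\in\R^{n\times m}\}.
\]
Each element is the value at $(U,V)$ of a linear vector field $X_\Delta(U,V)=(\Delta V,\Delta^\top U)$.

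\emph{Step 2: conservation and independence.} Along $X_\Delta$, the derivative of $U^\top U$ is $V^\top\Delta^\top U+U^\top\Delta V$, and the derivative of $V^\top V$ is $U^\top\Delta V+V^\top \Delta^\top U$. These agree, so every entry of $H$ is annihilated by $W$. By Proposition~\ref{prop:conservation-law-condition}, each entry is therefore a conservation law.

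For independence, the gradient of $H_{ab}$ with $a\le b$ is $(U(E_{ab}+E_{ba}),\,-V(E_{ab}+E_{ba}))$, up to a factor. A linear combination of these gradients equals $(US,-VS)$ for some symmetric $S$. If it vanishes, then $US=0$, and full column rank of $U$ forces $S=0$. This gives $r(r+1)/2$ independent laws on the open set where $U,V$ have full column rank.

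\emph{Step 3: completeness.} Let $h$ be a conservation law, and write $\nabla h=(A,B)$. First, $\nabla h\perp W$ gives $\langle A,\Delta V\rangle+\langle B,\Delta^\top U\rangle=0$ for all $\Delta$, that is,
\[
AV^\top+UB^\top=0 .
\]
Second, $\nabla h$ must also annihilate $\operatorname{Lie}(W)$. This holds because $\langle\nabla h, X\rangle\equiv 0$ for all $X\in W$, and differentiating along another field of $W$ shows that $\nabla h$ kills brackets; this is exactly the mechanism behind Theorem~\ref{thm:law-symmetry-gap}. Computing $[X_\Delta,X_\Gamma]$ gives fields of the form $(MU,NV)$, where
\[
M=\Delta\Gamma^\top-\Gamma\Delta^\top, \qquad N=\Delta^\top\Gamma-\Gamma^\top\Delta
\]
are skew-symmetric. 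These brackets yield
\[
\langle A,MU\rangle+\langle B,NV\rangle=0 .
\]
The main obstacle is extracting enough from this condition. I would choose $\Delta,\Gamma$ so that $N=0$ while $M$ ranges over all skew matrices. For example, take $\Delta=e_i f^\top$ and $\Gamma=e_j f^\top$ with the same $f$; then $N=0$ and $M=e_ie_j^\top-e_je_i^\top$. This gives $\langle AU^\top, M\rangle=0$ for all skew $M$, so $AU^\top$ is symmetric.

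From $AV^\top=-UB^\top$ and the invertibility of $V^\top V$, we get $A=-UB^\top V(V^\top V)^{-1}=UP$ for some $P$. Symmetrically, $B=VQ$ for some $Q$. Substituting back gives $UPV^\top=-UQ^\top V^\top$, and full column rank of $U$ and $V$ yields $Q=-P^\top$. Finally, $AU^\top=UPU^\top$ being symmetric forces $P$ to be symmetric, again by full column rank of $U$.

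Hence $\nabla h=(UP,-VP)$ with $P$ symmetric, which lies in $\operatorname{span}\{\nabla H_{ab}\}$. Proposition~\ref{prop:independentfunctions} then gives that $h$ is locally a function of the entries of $H$, which establishes completeness.
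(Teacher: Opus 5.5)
Your proof is correct, and it takes a different route from the paper. The paper does not prove the lemma itself. It cites the final corollary of Section~4.1 of \cite{marcotte2024abidelawfollowflow}, where the rank of $\operatorname{Lie}(W)$ is computed on the full-rank set. The number of independent laws is then read off from their Theorem~3.3, the analogue of Theorem~\ref{thm:law-symmetry-gap}, which needs locally constant rank and a Frobenius-type integration.

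You replace that count with a sandwich argument.
\begin{itemize}
\item The gradients of the entries of $H$ lie in $\operatorname{Lie}(W)^\perp$.
\item $\operatorname{Lie}(W)^\perp$ is contained in the annihilator of $W$ together with the particular brackets $[X_{e_if^\top},X_{e_jf^\top}]$.
\item Your computation shows that this annihilator is already $\{(UP,-VP):P=P^\top\}=\operatorname{span}\{\nabla H_{ab}\}$.
\end{itemize}
Because the explicit laws supply the matching lower bound, you never have to generate the whole Lie algebra, check involutivity, or assume constant rank. The conclusion lands exactly in hypothesis~(i) of Proposition~\ref{prop:independentfunctions}, so you get local factorization of every other law, not just a count.

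The chain of inclusions also recovers what the citation provides. It forces $\operatorname{Lie}(W)^\perp=\operatorname{span}\{\nabla H_{ab}\}$, which has dimension $r(r+1)/2$. Your first-order step shows that $W^\perp=\{(UP,-VP^\top)\}$ has dimension $r^2$. Hence the symmetry--conservation law gap is $r(r-1)/2$, which matches the rotation generator $\psi_4$ of Section~\ref{sec:motivating_example} when $r=2$.

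One small point of rigor concerns the bracket step. Differentiating $\langle\nabla h,X_\Gamma\rangle\equiv 0$ along $X_\Delta$ uses second derivatives of $h$, so as written it covers only $C^2$ laws, while the paper allows $C^1$ ones. For a $C^1$ law, use the flows instead. For small $s$, $h$ is constant along the linear flows $\Phi^{X_\Delta}_s$ and $\Phi^{X_\Gamma}_s$, which stay in the open full-rank set. Hence $h$ is also constant along $\Phi^{X_\Gamma}_{-s}\circ\Phi^{X_\Delta}_{-s}\circ\Phi^{X_\Gamma}_{s}\circ\Phi^{X_\Delta}_{s}(\theta)=\theta+s^2[X_\Delta,X_\Gamma](\theta)+O(s^3)$. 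A first-order Taylor expansion of $h$ then gives $\langle\nabla h(\theta),[X_\Delta,X_\Gamma](\theta)\rangle=0$.
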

\begin{proof}
    The definition of the matrix factorization \(U V^\top\) is given at the beginning of Section 4.1 in \cite{marcotte2024abidelawfollowflow}.
    The final corollary of \cite[Section 4.1]{marcotte2024abidelawfollowflow} shows that the functions \(H(U, V)\) form a complete set of independent conservation laws of the matrix factorization \(U V^\top\).

    Intuitively, the results here can be interpretted as follows.
    The matrix factorization \(U V^\top\) is invariant under the group action of \(\operatorname{GL}_{r}(\R)\) given by 
    \[S \circ (U, V) = (US, V(S^\top)^{-1}).\]
    The infinitesimal generators of this Lie group generate a codistribution \(W^\perp = \operatorname{ker} D \phi\). 
    Its orthogonal complement \(W = \operatorname{range}(D \phi^\top)\) is the 'gradient flow distribution' of the matrix factorization \(U V^\top\).
    By calculating the rank of \(\operatorname{Lie}(W)\) under the assumptions of the lemma, the authors showed that the rank of \(\operatorname{Lie(W)}^\perp\) matches the number of independent functions defined by \(H(U, V)\), and hence they form a complete set of independent conservation laws.
    This result does not rely on the specific form of the loss function, as it is a property of the matrix factorization \(U V^\top\) itself.
\end{proof}

We can now proceed with the proof of Proposition \ref{prop:grouped-query-attention}.

\groupedQueryAttention*

\begin{proof}
    Let \(\Theta\) be an open neighborhood of \(\theta_0\) in the parameter space of \(G\) such that the assumptions hold for all \(\theta \in \Theta\).
    Choose small open balls around the distinct attention matrices \(Q^j_i (K^j)^\top\) such that they are pairwise disjoint.
    Shrink \(\Theta\) if necessary so that for all \(\theta \in \Theta\), the attention matrices \(Q^j_i (K^j)^\top\) are contained in the corresponding disjoint balls.
    This ensures that functional symmetries induced by permutations of the heads are not allowed, and hence the attention matrices \(Q^j_i (K^j)^\top\) are pairwise distinct for all \(\theta \in \Theta\).
    This will be the local neighborhood in which we prove our results.

    \textbf{Step 1}. Identifiability result.
    
    The main tool is \cite[Theorem 3.1]{tran2025equivariantneuralfunctionalnetworks}.
    For other networks, one would need to prove a similar result for the specific architecture.

    Let 
    \[\overline \theta = (\overline Q^j_1, \ldots, \overline Q^j_{k}, \overline K^j, \overline V^j, \overline O^j_1, \ldots, \overline O^j_k)_{j=1}^{n_G} \in \Theta\]
    be another set of parameters such that \(G(\theta, \cdot) = G(\overline \theta, \cdot)\).
    Then, for all \(X\) with arbitrary number of tokens, each of size \(D\), we have
    \[\sum_{j=1}^{n_G} \sum_{i=1}^{k} \operatorname{softmax}\left(X\frac{Q^j_i (K^j)^\top }{\sqrt{D_h}}X^\top\right) X V^j (O^j_i)^\top 
    - \sum_{j=1}^{n_G} \sum_{i=1}^{k} \operatorname{softmax}\left(X\frac{ \overline Q^j_i (\overline K^j)^\top }{\sqrt{D_h}}X^\top\right)X \overline V^j (\overline O^j_i)^\top 
    = 0.\]
    Since the term on the left contains \(n_H = k n_G\) terms with pairwise distinct attention matrices \(Q^j_i (K^j)^\top\) and nonzero weight matrices \(V^j (O^j_i)^\top\), Theorem \ref{thm:hoang_theorem} implies that the attention matrices and weight matrices must be equal, i.e.,
    \[
        Q^j_i (K^j)^\top = \overline Q^j_i (\overline K^j)^\top, \quad 
        V^j (O^j_i)^\top = \overline V^j (\overline O^j_i)^\top.
    \]
    By the matrix factorization result \cite[Theorem 2]{piziak1999fullrankfactorization} and the full rank assumption, we have that for each \(j = 1, \ldots, n_G\), \(i = 1, \ldots, k\) there exists invertible matrices \(S^j_i \in \R^{D_h \times D_h}, T^{j}_i \in \R^{D_h \times D_h}\) such that
    \[
        \overline Q^j_i = Q^j_i S^j_i, \quad 
        \overline K^j = K^j ((S^j_i)^\top)^{-1}, \quad
        \overline V^j = V^j T^j_i, \quad 
        \overline O^j_i = O^j_i ((T^j_i)^\top)^{-1}.
    \]
    But since \(K^j, V^j\) are full rank, we must have 
    \[
    S^j_1 = \cdots = S^j_k =: S^j, \quad T^j_1 = \cdots = T^j_k =: T^j
    \] 
    for each \(j = 1, \ldots, n_G\). 

    It is easy to verify that for any invertible matrices \(S^j, T^j \in \R^{D_h \times D_h}\), the following transformations of the parameters leave the grouped query attention mechanism \(G\) functionally invariant:
    \[\overline Q^j_i = Q^j_i S^j, \quad \overline K^j = K^j ((S^j)^\top)^{-1}, \quad \overline V^j = V^j T^j, \quad \overline O^j_i = O^j_i ((T^j)^\top)^{-1}.\]

    Thus, all functionally equivalent parameters of the grouped query attention mechanism \(G\) are given by the following group action of \((\operatorname{GL}_{D_h}(\R) \times \operatorname{GL}_{D_h}(\R))^{n_G}\):
    \[
        (S^j, T^j)_{j=1}^{n_G} \cdot (Q^j_1, \ldots, Q^j_{k}, K^j, V^j, O^j_1, \ldots, O^j_k)_{j=1}^{n_G} 
        = (\tilde Q^j_1, \ldots, \tilde Q^j_{k}, \tilde K^j, \tilde V^j, \tilde O^j_1, \ldots, \tilde O^j_k)_{j=1}^{n_G},
    \]
    where
    \[
        \tilde Q^j_i = Q^j_i S^j, \quad 
        \tilde K^j = K^j ((S^j)^\top)^{-1}, \quad
        \tilde V^j = V^j T^j, \quad 
        \tilde O^j_i = O^j_i ((T^j)^\top)^{-1}.
    \]

    \textbf{Step 2}. Reduction to subnetworks

    The main tool is Theorem \ref{thm:inheritance_3} from our framework. 
    It is reusable for other architectures, as long as the compositional identifiability condition \ref{eqn:identifiability_3} holds.

    Stack the query matrices \(Q^j_i\) and output matrices \(O^j_i\) for each group \(j\) into the matrices
    \[
        Q^j = \begin{pmatrix}
            Q^j_1\\
            \vdots\\
            Q^j_k
        \end{pmatrix} \in \R^{k D \times D_h}, \quad 
        O^j = \begin{pmatrix}
            O^j_1\\
            \vdots\\
            O^j_k
        \end{pmatrix} \in \R^{k D \times D_h}.
    \]
    Define the following functions
    \begin{align*}
        g^j_1(Q^j_1, \ldots, Q^j_{k}, K^j) = g^j_1(Q^j, K^j) 
        &:= Q^j (K^j)^\top = \begin{pmatrix}
            Q^j_1 (K^j)^\top\\
            \vdots\\
            Q^j_k (K^j)^\top
        \end{pmatrix} \in \R^{kD \times D}, \\
        g^j_2(V^j, O^j_1, \ldots, O^j_{k}) = g^j_2(V^j, O^j)
        &= V^j (O^j)^\top = (V^j (O^j_1)^\top, \ldots, V^j (O^j_{k})^\top) \in \R^{D \times kD}.
    \end{align*}
    By the identifiability result in Step 1, the compositional identifiability conditon \ref{eqn:identifiability_3} holds for the grouped query attention mechanism \(G\) with respect to the functions \(g^j_1\) and \(g^j_2\) for all \(j = 1, \ldots, n_G\).
    Thus, by Theorem \ref{thm:inheritance_3}, the complete set of symmetries and conservation laws of \(G\) can be obtained by combining the extensions of complete sets of symmetries and conservation laws of \(g^j_1\) and \(g^j_2\) for all \(j = 1, \ldots, n_G\).

    \textbf{Step 3}. Complete characterization of conservation laws of subnetworks.
    
    The main tool is Lemma~\ref{lem:UV_lemma}. 
    For other networks, one would need to prove a similar result for the specific architecture.

    By Lemma \ref{lem:UV_lemma}, the complete set of independent conservation laws of \(g^j_1\) and \(g^j_2\) are given by
    \begin{align*}
        H^j_1(Q^j, K^j) &= (Q^j)^\top Q^j - (K^j)^\top K^j = \sum_{i=1}^{k} (Q_i^j)^\top Q^j_i - (K^j)^\top K^j \\
        H^j_2(V^j, O^j) &= (V^j)^\top V^j - (O^j)^\top O^j = (V^j)^\top V^j - \sum_{i=1}^k (O^j_i)^\top O^j_i.        
    \end{align*}
    Applying Theorem \ref{thm:inheritance_3}, we obtain the complete set of independent conservation laws of \(G\), as stated in the proposition.
\end{proof}

\subsection{Proof for Proposition \ref{prop:PNN}}
\label{sec:proof:PNN}

\PNN*

\begin{proof}
    We recall from \cite{usevich2025identifiability} that a polynomial neural network with a specific parameter \(\theta_0\) is called finite-to-one if there are only finitely many functionally equivalent classes of parameters under the following transformations
    \[W'_l = P_l D_l W_l D_{l-1}^{-r_{l-1}} P_{l-1}^\top, \quad b'_l = P_l D_l b_l,\]
    where \(P_l \in \mathbb Z^{D_l \times D_l}\) is a permutation matrix, \(D_l \in \mathbb R^{D_l \times D_l}\) is a diagonal matrix, and \(P_0 = D_0 = I, P_L = D_L = I\).

    We first restrict our attention to a small enough neighborhood of \(\theta_0\) to eliminate all discrete symmetries of the PNN mechanism \(G\).
    Let \(\Theta\) be an open neighborhood of \(\theta_0\) in the parameter space of \(G\) such that it avoids all other functionally equivalent classes of parameters.
    Further shrink \(\Theta\) if necessary so permutations by \(P_l, P_{l-1}\) are not allowed.
    Then, by the finite-to-one assumption, any functionally equivalent parameters \(\theta' \in \Theta\) must be related to \(\theta\) by a transformation of the form
    \[W'_l = D_l W_l D_{l-1}^{-r_{l-1}}, \quad b'_l = D_l b_l,\]
    where \(D_l \in \mathbb R^{D_l \times D_l}\) is an invertible diagonal matrix, and \(D_0 = D_L = I\).
    This will be the local neighborhood in which we prove our results.

    In other words, the functionally equivalent parameters are given by the partial group action of \((\mathbb R^*)^{D_1} \times \cdots \times (\mathbb R^*)^{D_{L-1}}\) on the parameter space of \(G\) given by
    \begin{align*}
        &(t_{l, i})^{l = \overline{1, L-1}}_{i= \overline{1, D_l}} \circ (W_l, b_l)_{l=1}^L \\  
        = &\left((t_{l, 1} E_{1} +\cdots + t_{l, D_l} E_{D_l}) W_l (t_{l-1, 1}^{-r_{l-1}} E_{1} + \cdots + t_{l-1, D_{l-1}}^{-r_{l-1}} E_{D_{l-1}}), (t_{l, 1} E_{1} +\cdots + t_{l, D_l} E_{D_l}) b_l\right),
    \end{align*}
    where \(\mathbb R^* = \mathbb R \setminus \{0\}\) is the multiplicative group of nonzero real numbers and \(E_{i}\) is the square matrix of the appropriate size with \(1\) in the \((i, i)\)-th entry and \(0\) elsewhere.
    This Lie group has dimension \(s:=\sum_{j=1}^{L-1} D_j\), so the number of partial independent functional symmetries is at most \(s\).
    By Assumption \ref{as:separability} and Proposition \ref{prop:separability-symmetry}, the number of partial independent loss symetries is also at most \(s\).
    Finally, by Theorem \ref{thm:law-symmetry-gap}, the number of independent conservation laws is at most \(s\).

    We will show that there exists exactly this many independent conservation laws, which then must form a complete set of conservation laws of the PNN mechanism \(G\).    
    Observe that the following functions are functional symmetries of the PNN mechanism \(G\):
    \[
    \psi^j_i(t, \theta) = (W_1, b_1, \ldots, e^{t E_{i}} W_{j}, e^{t E_i} b_j, W_{j+1} e^{-t r_{l} E_{i}}, \ldots, W_L, b_L), \; j = \overline{1, L-1},\; i = \overline{1, D_j}.
    \]
    Their infinitesimal generators are given by
    \[
    \chi^j_i(\theta) = D_t \psi^j_i(0, \theta) = (0, 0, \ldots, E_{i} W_{j}, E_i b_j, -r_{l} W_{j+1} E_{i}, \ldots, 0, 0), \; j = \overline{1, L-1},\; i = \overline{1, D_j}.
    \]
    By the assumption that \(\theta_0\) is generic and the fact that each \(\chi^j_i\) involves only the \(i\)-th row of \(W_j\) and the \(i\)-th column of \(W_{j+1}\), these vector fields must be linearly independent.
    It can be verified that the functions 
    \[h^j_i(\theta) = \|(W_j)_{i,:}\|_2^2 + (b_j)_{i}^2 - r_j \|(W_{j+1})_{:,i}\|_2^2, \quad i = \overline{1, D_j}, j = \overline{1, L-1}\]
    have gradients \(\nabla h^j_i = 2 \chi^j_i\).
    By Proposition \ref{prop:symmetry_vector_field} and Proposition \ref{prop:conservation_law_vector_field}, the functions \(h^j_i\) are conservation laws of the PNN mechanism \(G\).
    Since their gradients \(\nabla h^j_i\) are linearly independent, the functions \(h^j_i\) are also independent.
    Finally, since there are exactly \(\sum_{j=1}^{L-1} D_j\) independent functions \(h^j_i\), they form a complete set of conservation laws of the PNN mechanism \(G\).
\end{proof}

\subsection{Proof for Proposition \ref{prop:deep_linear_network}}

\deepLinearNetwork*

\begin{proof}
    Lemma 1 from \cite{lindsey2026regularizationimpliesbalancednessdeep} shows that all symmetries of the deep square linear network is given by the group action of \((\operatorname{GL}_D(\R))^{L-1}\) on the parameters \((W_1, \ldots, W_L)\) via
    \[(S_1, \ldots, S_{L-1}) \cdot (W_1, \ldots, W_L) = (S_1 W_1, S_2 W_2 S_1^{-1}, \ldots, S_{L-1} W_{L-1} S_{L-2}^{-1}, W_L S_{L-1}^{-1}).\]
    Since the dimension of \((\operatorname{GL}_D(\R))^{L-1}\) is \((L-1) D^2\), Theorem \ref{thm:law-symmetry-gap} implies that there are at most \((L-1) D^2\) independent conservation laws of \(G\) with respect to any loss satisfying Assumption~\ref{as:separability}.
\end{proof}

\section{Learning to Scale}
\label{app:learning_to_scale}

Consider the problem of using a two layer linear network \(G\) to learn a scaling function \(f\) given by
 \[G((U, V), x) = UV^\top x, \quad f(x) = \alpha x = \alpha I x.\]
We specifically focus on the cases where \(U, V \in \R\) or \(U, V \in \R^{2 \times 2}\), and the loss function is given by the mean squared error
\[\ell((U, V), x) = \|G((U, V), x) - f(x)\|_2^2 = \|UV^\top x - \alpha x\|_2^2.\]

\subsection{The scalar case}

In the scalar case, a complete set of independent conservation laws of the two layer linear network \(G\) is given by a single scalar function
\[h(U, V) = U^2 - V^2.\]
The symmetry-law gap is \(0\). 
This informs us that this conservation law should be sufficient to fully determines the local minimizer that gradient flow will converge to given any initial condition.

Indeed, assume that \(\alpha > 0\), and that the weights are initialized at \(U_0 > 0, V_0 > 0\).
Then the conservation law value is given by
\[h(U_0, V_0) = U_0^2 - V_0^2 = h_0 \in \R.\]
If gradient flow converges, it will converge to a minimizer \((U^\ast, V^\ast)\) with the same conservation law value.
This gives us the following system of equations:
\[\begin{cases}
    (U^\ast)^2 - (V^\ast)^2 = h_0 \\ 
    U^\ast V^\ast = \alpha,
\end{cases}\]
which yields a degree 4 polynomial equation in \(U^\ast\) via the substitution \(V^\ast = \alpha / U^\ast\). 
Solving this gives a unique solution in the positive quadrant (the same quadrant as the initial condition) given by
\begin{equation}
    \label{eqn:scalar_exact}
    (U^\ast, V^\ast) = \left(\sqrt{\frac{h_0 + \sqrt{h_0^2 + 4 \alpha^2}}{2}}, \frac{\alpha}{\sqrt{\frac{h_0 + \sqrt{h_0^2 + 4 \alpha^2}}{2}}}\right).
\end{equation}
Thus, we can predict that if gradient flow converges, it will converge to the unique minimizer \((U^\ast, V^\ast)\) given by \eqref{eqn:scalar_exact}, which is fully determined by the conservation law value \(h_0\) at initialization.

\subsection{The 2x2 matrix case}

In the \(2 \times 2\) matrix case, a complete set of independent conservation laws of the two layer linear network \(G\) is given by the following matrix function:  
\[
H(U, V) = U^\top U - V^\top V = \begin{pmatrix}
    h_1(U,V) & h_2(U,V)\\ 
    h_2(U,V) & h_3(U,V)
\end{pmatrix} . 
\]
The symmetry-law gap is \(1\). 
This informs us that the conservation laws are not sufficient to fully determine the local minimizer that gradient flow will converge to given any initial condition.
Rather, gradient flow will converge to a point on a one dimensional manifold of local minimizers that share the same conservation law values.

Indeed, assume that the weights are initialized at \(U_0, V_0 \in \R^{2 \times 2}\), and that gradient flow converges to a minimizer \((U^\ast, V^\ast)\) with the same conservation law values.
This gives us the following system of equations:
\begin{align}
    \label{eqn:2by2law} H(U, V) &= H(U_0, V_0) := H_0\\ 
    \label{eqn:2by2optimal} U V^\top &= \alpha I.
\end{align}
Solving this system of equations yield a one dimensional manifold of solutions induced by the orthogonal group \(\operatorname{O}(2)\).
We present this in Lemma \ref{lem:2by2exactsol}, whose proof we defer to the end of this section. 

\begin{restatable}{lemma}{exactsol}
    \label{lem:2by2exactsol}
    Let \(H_0 \in \R^{2 \times 2}\) be a symmetric matrix and \(0 \neq \alpha \in \R\) be given.
    The entire solution set of \((U, V) \in (\R^{2 \times 2} \times \R^{2 \times 2})\) that satisfies \eqref{eqn:2by2law} and \eqref{eqn:2by2optimal} is given by the group action of the orthogonal group \(\operatorname{O}(2)\) on the solution \((P^{1/2}, \alpha (P^{1/2})^{-\top})\):
    \[
    Q \cdot (P^{1/2}, \alpha P^{-1/2}) = (Q P^{1/2}, \alpha Q P^{-1/2}), \quad Q \in \operatorname{O}(2), 
    \]
    where \(P = \frac{H_0 + \sqrt{H_0^2 + 4 \alpha^2 I}}{2}\) is a positive definite matrix.
\end{restatable}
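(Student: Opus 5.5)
The plan is to verify the claimed family directly and then show that every solution belongs to it. The key reduction is that $\alpha \neq 0$ forces $U$ to be invertible, which eliminates $V$. The system then becomes a quadratic matrix equation for the Gram matrix $M := U^\top U$. The crux is that this equation has a unique positive definite solution, namely $P$.

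\emph{Step 1: the family consists of solutions.}
\begin{itemize}
\item First I check that $P$ is well defined and positive definite. $H_0^2 + 4\alpha^2 I$ is symmetric positive definite, so its square root is the unique positive definite square root. It is a function of $H_0$, so it commutes with $H_0$.
\item Diagonalize $H_0 = R\,\mathrm{diag}(h_1,h_2)R^\top$ with $R$ orthogonal. Then $P = R\,\mathrm{diag}(p_1,p_2)R^\top$ with $p_i = (h_i + \sqrt{h_i^2+4\alpha^2})/2 > 0$. Each $p_i$ is the positive root of $p^2 - h_i p - \alpha^2 = 0$.
\item Hence $P^2 - H_0 P = \alpha^2 I$, which is equivalent to $P - \alpha^2 P^{-1} = H_0$.
\item Now take $U = QP^{1/2}$ and $V = \alpha Q P^{-1/2}$ with $Q \in \operatorname{O}(2)$. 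Then $UV^\top = \alpha Q P^{1/2}P^{-1/2}Q^\top = \alpha I$.
\item Also $U^\top U - V^\top V = P - \alpha^2 P^{-1} = H_0$.
\end{itemize}
Since $P^{1/2}$ is symmetric, $(P^{1/2})^{-\top} = P^{-1/2}$, which matches the two forms written in the statement.

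\emph{Step 2: eliminate $V$.} Let $(U,V)$ be any solution. Since $UV^\top = \alpha I$ with $\alpha \neq 0$, $U$ is invertible and $V = \alpha U^{-\top}$. Therefore
\[
V^\top V = \alpha^2 U^{-1}U^{-\top} = \alpha^2 (U^\top U)^{-1}.
\]
With $M := U^\top U$, which is symmetric positive definite, the conservation equation \eqref{eqn:2by2law} becomes $M - \alpha^2 M^{-1} = H_0$. Multiplying on the right by $M$ gives
\[
H_0 M = M^2 - \alpha^2 I.
\]

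\emph{Step 3: show $M = P$ (the main step).}
\begin{itemize}
\item The right-hand side $M^2 - \alpha^2 I$ is symmetric, so $H_0 M$ is symmetric. This gives $H_0 M = (H_0M)^\top = M H_0$.
\item Commuting symmetric matrices are simultaneously orthogonally diagonalizable. So there is an orthogonal $R$ with $H_0 = R\,\mathrm{diag}(h_i)R^\top$ and $M = R\,\mathrm{diag}(m_i)R^\top$.
\item Each eigenvalue then satisfies $m_i^2 - h_i m_i - \alpha^2 = 0$ with $m_i > 0$.
\item The product of the two roots is $-\alpha^2 < 0$, so exactly one root is positive: $m_i = (h_i + \sqrt{h_i^2+4\alpha^2})/2$.
\item Hence $M = P$. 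This does not depend on the choice of $R$ when $H_0$ has a repeated eigenvalue, because $P$ is defined by functional calculus on $H_0$.
\end{itemize}

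\emph{Step 4: recover $(U,V)$.} From $U^\top U = P$, set $Q := U P^{-1/2}$. Then
\[
Q^\top Q = P^{-1/2} P P^{-1/2} = I,
\]
so $Q \in \operatorname{O}(2)$ and $U = QP^{1/2}$. Finally,
\[
V = \alpha U^{-\top} = \alpha (P^{1/2}Q^\top)^{-1} = \alpha Q P^{-1/2},
\]
using $(Q^\top)^{-1} = Q$. So every solution is $Q\cdot(P^{1/2}, \alpha P^{-1/2})$. Together with Step 1, this identifies the full solution set as the $\operatorname{O}(2)$-orbit.

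The only delicate point is in Step 3: deducing that $M$ commutes with $H_0$ from the symmetry of $H_0 M$, and then selecting the positive root for each eigenvalue. The remaining steps are routine computations.
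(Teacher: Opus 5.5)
Your proposal is correct and follows essentially the same route as the paper. Both eliminate $V=\alpha U^{-\top}$, reduce to $M-\alpha^2M^{-1}=H_0$ for the Gram matrix $M=U^\top U$, diagonalize simultaneously, take the unique positive root of each scalar quadratic, and recover $U=QP^{1/2}$ by polar decomposition. The differences are cosmetic: you derive $H_0M=MH_0$ from the symmetry of $M^2-\alpha^2 I$, where the paper simply notes that any orthogonal $R$ diagonalizing $M$ also diagonalizes $H_0=M-\alpha^2M^{-1}$, and you explicitly verify that the orbit consists of solutions, which the paper leaves as ``easy to verify.''
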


Thus, the conservation laws are not sufficient to fully determine the local minimizer that gradient flow will converge to given any initial condition. 
Nevertheless, they are still sufficient to determine important numerical properties of the minimizer, such as the singular values or condition numbers of \(U^\ast\) and \(V^\ast\). 
In particular, since these properties are invariant under the action of the orthogonal group \(\operatorname{O}(2)\), the singular values and condition numbers of \(U^\ast\) (resp. \(V^\ast\)) are the same as those of \(P^{1/2}\) (resp. \(\alpha P^{-1/2}\)), where 
\begin{equation}
    \label{eqn:P_2by2}
    P = \frac{H_0 + \sqrt{H_0^2 + 4 \alpha^2 I}}{2}
\end{equation}
is determined at initialization by the conservation law value \(H_0\).
Therefore, if one desires to have a well-conditioned minimizer \((U^\ast, V^\ast)\), one should initialize the weights such that \(P\) is well-conditioned, e.g., \(P \approx \lambda I\) for some \(\lambda > 0\).

\begin{proof}[Proof of Lemma~\ref{lem:2by2exactsol}]
    To solve this system, note that the condition \ref{eqn:2by2optimal} implies \(V = \alpha U^{-\top}\), and so \eqref{eqn:2by2law} is equivalent to
    \[H_0 = U^\top U -  V^\top V=   U^\top U - \alpha^2 U^{-1} U^{-\top} .\]
    Setting the positive definite Gram matrix \(P :=  U^\top U\), we obtain the equivalent equation
    \begin{equation}
        \label{eqn:2by2lawequiv}
        H_0 = P - \alpha^2 P^{-1}.
    \end{equation}
    Let \(R\) be a orthogonal matrix that diagonalizes \(P\), i.e., \(R^\top P R = D\) where \(D\) is a diagonal matrix.
    We can show that \(R\) also diagonalizes \(H_0\) as follows:
    \[R^\top H_0 R = R^\top (P - \alpha^2 P^{-1}) R = D - \alpha^2 D^{-1}.\]
    Thus, to solve for \(P\) in \eqref{eqn:2by2lawequiv} is equivalent to solving for \(D\) in
    \[D - \alpha^2 D^{-1} = R^\top H_0 R,\]
    but this reduces to solving a quadratic equation for each diagonal entry of \(D\), which has a unique positive diagonal matrix solution
    \[D = \frac{R^\top H_0 R + \sqrt{(R^\top H_0 R)^2 + 4 \alpha^2 I}}{2}.\]
    By reversing the diagonalization, we obtain that
    \[P = R D R^\top = \frac{H_0 + \sqrt{H_0^2 + 4 \alpha^2 I}}{2}.\]
    Finally, since \(P\) is the Gram matrix of \(U\), it is known that \(U\) must be of the form \(U= Q P^{1/2}\) for some orthogonal matrix \(Q\).
    In other words, the solution set of \((U, V)\) that satisfies \eqref{eqn:2by2law} and \eqref{eqn:2by2optimal} must be given by the group action of the orthogonal group \(\operatorname{O}(2)\) on the solution \((P^{1/2}, \alpha (P^{1/2})^{-\top})\):
    \[Q \cdot (P^{1/2}, \alpha P^{-1/2}) = (Q P^{1/2}, \alpha Q P^{-1/2}).\]
    It is easy to verify that these are indeed solutions, and so they are all the solutions.
\end{proof}

\section{Empirical Examination of Conservation Laws in a Language Model}
\label{app:exp2}

In this section, we empirically examine the conservation laws of the multi-head self-attention and grouped query attention mechanisms during the training of a language model with SGD.
The theoretical basis for this experiment is given by Proposition 5.1 of \cite{marcotte2025transformativeconservativeconservationlaws}, which states that for constant step size, the conservation law error grows linearly with the number of step size.

We use a modified version of the character level nanoGPT model \citep{karpathy2022nanogpt} with 6 layers, each with 6 attention heads.
Layers 0, 2, 3, 5 are standard multi-head self-attention layers, while layers 1 and 4 are grouped query attention layers with 3 groups of 2 query heads.
We train from scratch on the Tiny Shakespeare dataset \citep{karpathy2015tinyshakespeare} for 20,000 updates using plain SGD with a constant learning rate of \(10^{-2}\) and no weight decay.
We set dropout to 0 and use training without gradient clipping to avoid any potential interference with the conservation laws. Training was performed on a single NVIDIA RTX 3080 GPU with 10GB of memory, and took less than 2 hours to complete.

\begin{figure}
    \centering
    \includegraphics[width= 0.9\textwidth]{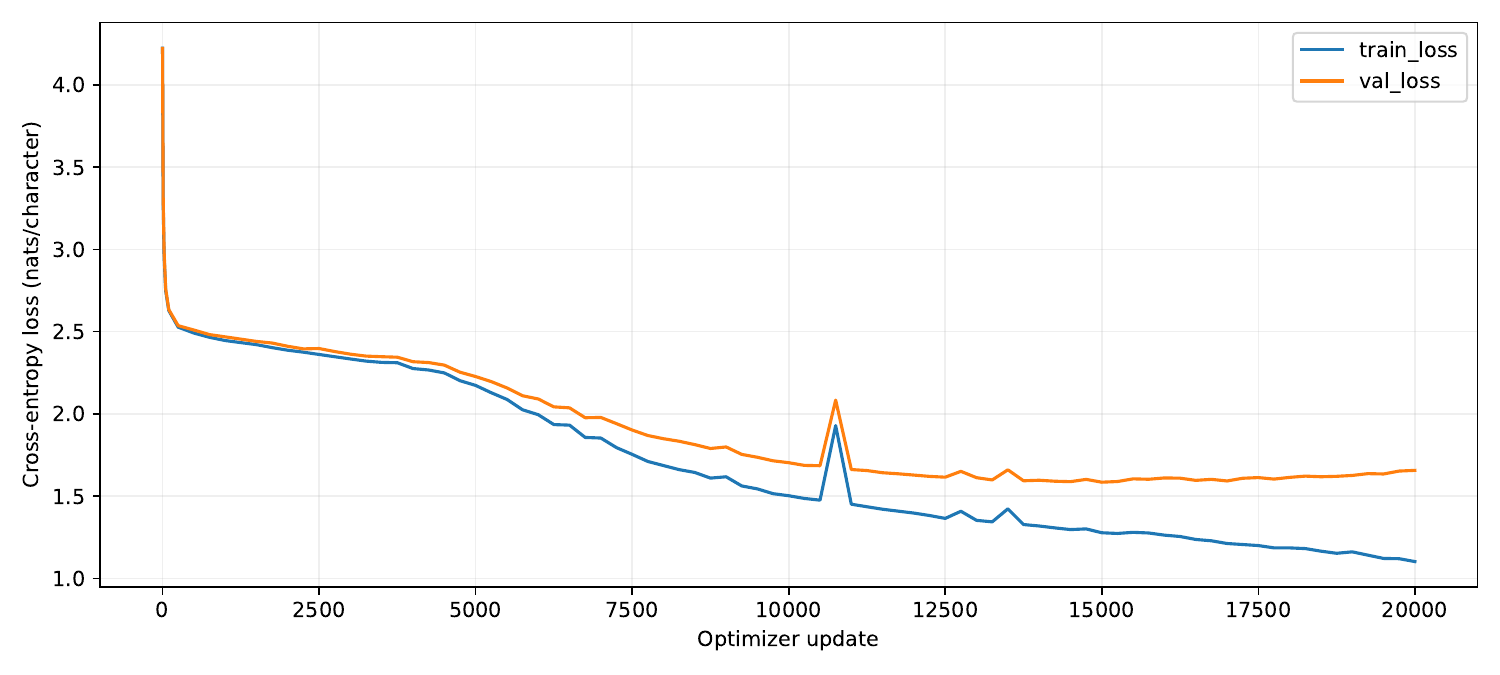}
    \caption{Training and validation loss of the language model with SGD. Both losses decrease substantially over the course of training.}
    \label{fig:llm_loss}
\end{figure}

The training loss and validation loss are shown in Figure \ref{fig:llm_loss}, which shows that the model is learning effectively.
Sample outputs from the language model with the prompt "KING:" before training and after 20,000 updates are shown in Figure \ref{fig:language-samples}.

\lstdefinestyle{lmsample}{
  basicstyle=\ttfamily\scriptsize,
  breaklines=true,
  columns=fullflexible,
  keepspaces=true,
  showstringspaces=false,
  frame=single,
  framerule=0.3pt
}

\begin{figure}[t]
\centering

\begin{minipage}[t]{0.48\linewidth}
\textbf{\texttt{KING:} step 0}
\begin{lstlisting}[style=lmsample]
KING:iW,xV!fF,vOTTtcukUaGBS&HZ?
JGP!OOT WuurJ'uuWX-o--pGM3???J?Hh-a;aHwu&z:upupIIyy;-,NllkwmGFTOggmkHjg$yiPYqeMlbCLSYOeUM;JzlUIqhETTHH:c:g;Vre&dN-pugJo?aHeMhnfSfHH ks&jn''IIM''d$uS'S
d:qsgJB-Q:o
SSJB$,ssTI;mLs?VVoN:VN;zsm.HqgcPVWhzn
?Ghjs?u;Yje

\end{lstlisting}
\end{minipage}\hfill
\begin{minipage}[t]{0.48\linewidth}
\textbf{\texttt{KING:} step 20,000}
\begin{lstlisting}[style=lmsample]
KING:
What from Tutualte the oratops of this business be.

GLOUCESTER:
The quarrel of thy smooth, many by thee,
So hated, and I have shed unrest, that have heard
A knotny before the brother than words,
When he has has want to woe Romans soul of
\end{lstlisting}
\end{minipage}

\caption{Opening excerpts from fixed-prompt samples before training and
after 20,000 optimizer updates. The language model has learned to produce text that contains recognizable words and dialogue-like formatting, demonstrating that it has learned some structure of the English language.}
\label{fig:language-samples}
\end{figure}

For each layer and each group \(j\), define the Gram matrices of the query, key, value, and output weights as
$$G_Q^j=\sum_{i\in j}Q_i^\top Q_i,\quad G_K^j=K_j^\top K_j,
\qquad G_V^j=V_j^\top V_j,\quad G_O^j=\sum_{i\in j}O_i^\top O_i.$$
Following the setting in Appendix E of \cite{tran2026conservation}, we track both the conservation law matrices and a non-conserved sum of Gram matrices as a control:
$$H_{QK}^j=G_Q^j-G_K^j,\quad H_{VO}^j=G_V^j-G_O^j,
\qquad C_{QK}^j=G_Q^j+G_K^j,\quad C_{VO}^j=G_V^j+G_O^j.$$
While the conservation law matrices \(H_{QK}^j\) and \(H_{VO}^j\) are expected to be approximately preserved during training, the non-conserved sum matrices \(C_{QK}^j\) and \(C_{VO}^j\) are not expected to be preserved.

\begin{figure}[ht!]
    \centering
    \includegraphics[width=\textwidth]{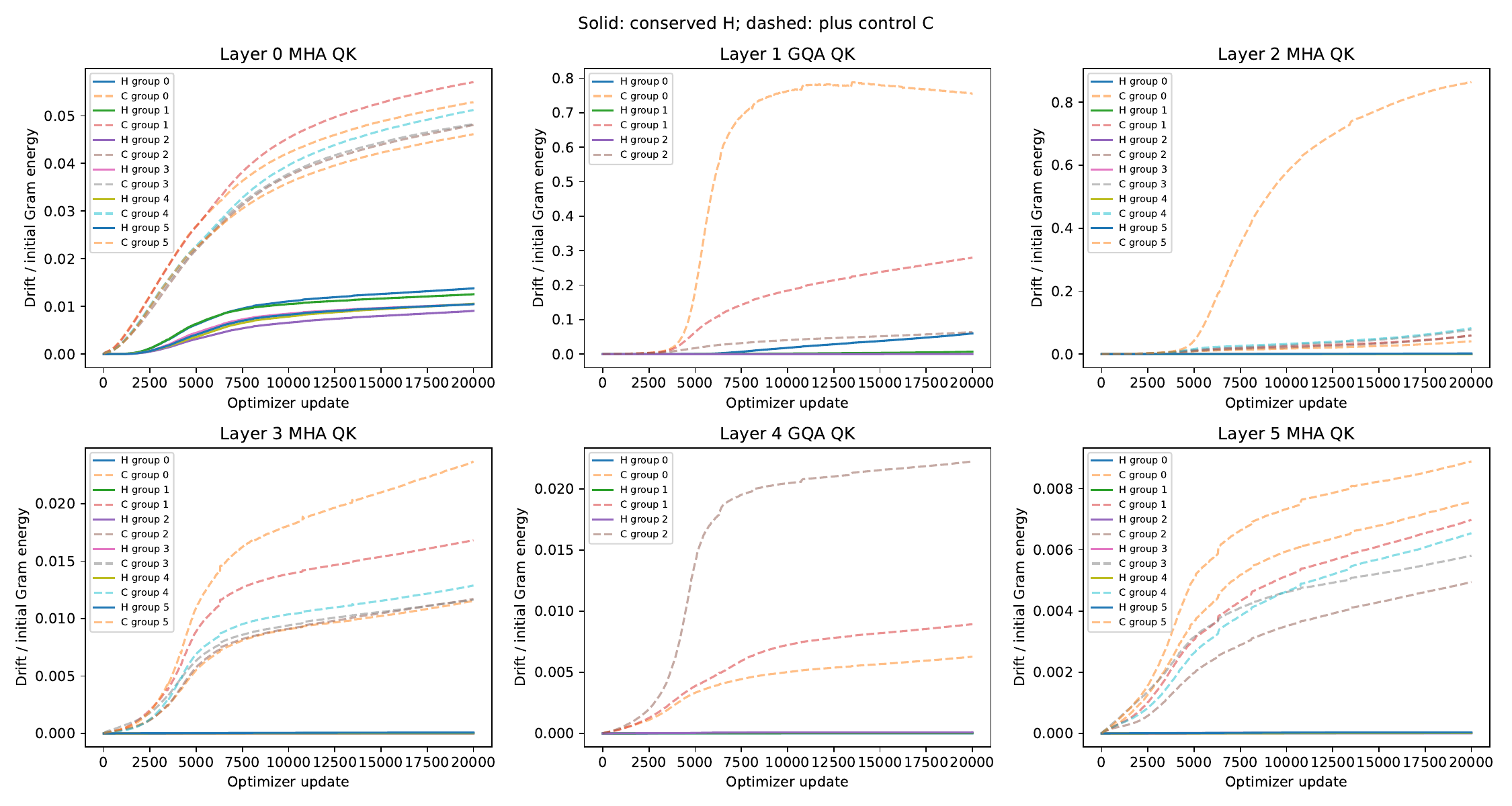}
    \caption{Conservation law error and non-conserved sum error for the \(Q, K\) blocks of the multi-head self-attention and grouped query attention mechanisms during training of a language model with SGD. Solid curves show the normalized drift of conservation laws, while dashed curves show the normalized drift of the non-conserved sum. The results show that the conservation law error drifts much less than the non-conserved sum error.}
    \label{fig:llm_conservation_law}
\end{figure}

\begin{figure}[ht!]
    \centering
    \includegraphics[width=\textwidth]{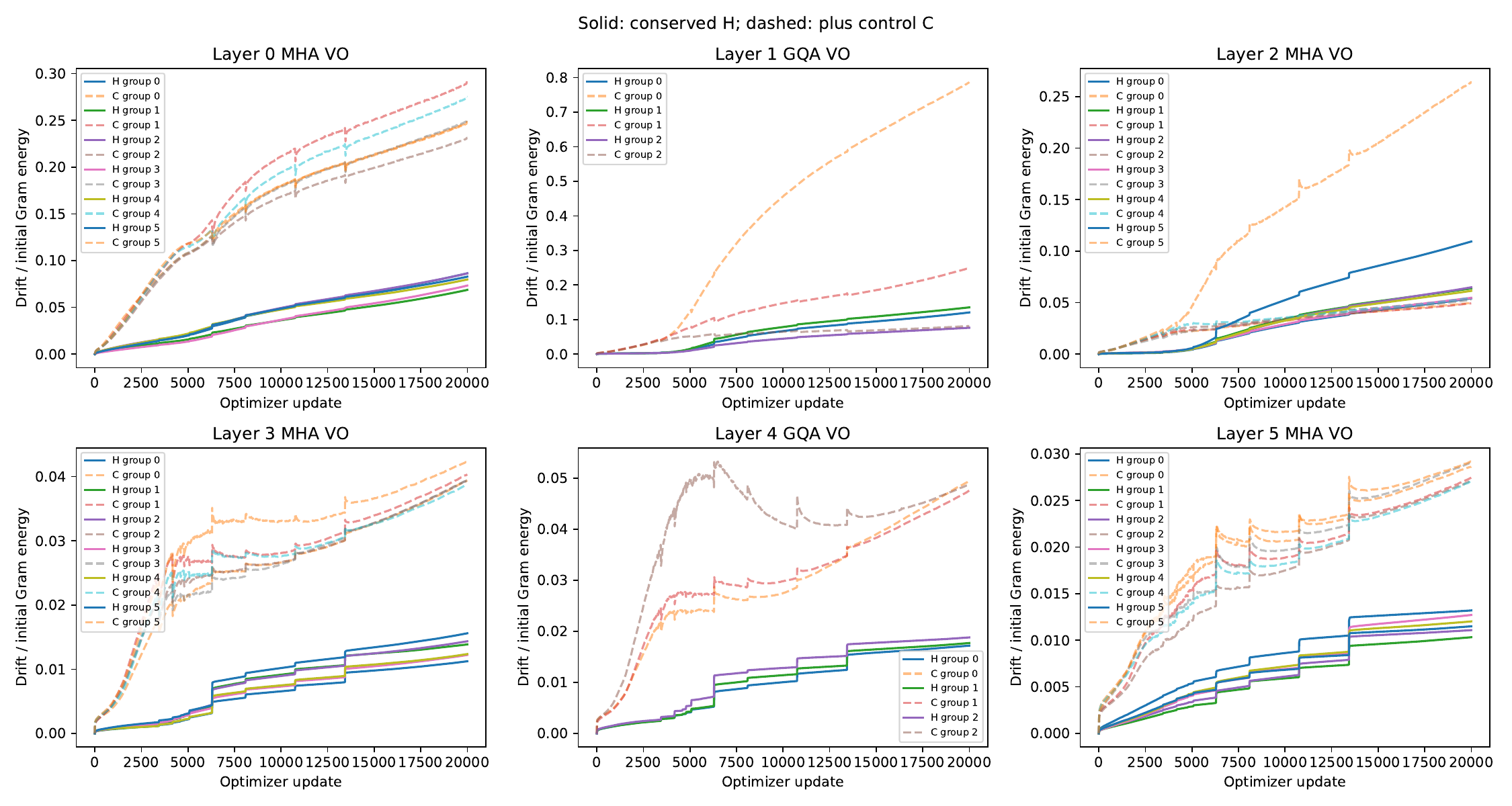}
    \caption{Conservation law error and non-conserved sum error for the \(V, O\) blocks of the multi-head self-attention and grouped query attention mechanisms during training of a language model with SGD. Solid curves show the normalized drift of conservation laws, while dashed curves show the normalized drift of the non-conserved sum. For most layers, the result shows that the conservation law error drift much less than the non-conserved sum error. However, for the \(V, O\) blocks in layers 1 and 2, this trend only holds for the first 5,000 updates.}
    \label{fig:llm_conservation_law_2}
\end{figure}

To compare the conservation law error and non-conserved sum error across different layers and groups, we normalize the errors by the initial scale of the Gram matrices
$$s_{QK}^j=\|G_Q^j(0)\|_F+\|G_K^j(0)\|_F,
\qquad s_{VO}^j=\|G_V^j(0)\|_F+\|G_O^j(0)\|_F.$$
and plot the normalized conservation law error
\[\frac{\|H_{QK}^j(t) - H_{QK}^j(0)\|_F}{s_{QK}^j},\quad \frac{\|H_{VO}^j(t) - H_{VO}^j(0)\|_F}{s_{VO}^j}\]
as well as the normalized non-conserved sum error
\[\frac{\|C_{QK}^j(t) - C_{QK}^j(0)\|_F}{s_{QK}^j},\quad \frac{\|C_{VO}^j(t) - C_{VO}^j(0)\|_F}{s_{VO}^j}.\]
The implementation adds \(10^{-12}\) to each denominator to avoid division by zero.

We report the results in Figure \ref{fig:llm_conservation_law} for the \(Q, K\) blocks and Figure \ref{fig:llm_conservation_law_2} for the \(V, O\) blocks.
We observe that many conservation law matrices exhibit a smaller drift than the non-conserved sum matrices, particularly for the \(Q, K\) blocks.
However, for the \(V, O\) blocks in layers 1 and 2, this trend only holds for the first 5,000 updates.

This experiment illustrates that the conservation laws can be approximately preserved during training with SGD, but the degree of preservation can vary across different layers and blocks of the model, and may deteriorate over time. 

\end{document}